\tikzset{
    -Latex,auto,node distance =1 cm and 1 cm,semithick,
    state/.style ={ellipse, draw, minimum width = 0.7 cm},
    point/.style = {circle, draw, inner sep=0.04cm,fill,node contents={}},
    bidirected/.style={Latex-Latex,dashed},
    el/.style = {inner sep=2pt, align=left, sloped}
}
\def\E{{\bf E}}
\def\L{{\bf L}}
\def\I{{\bf I}}
\def\Q{{\bf Q}}
\def\S{{\bf S}}
\def\Z{{\bf Z}}
\def\M{{\bf M}}
\def\U{{\bf U}}
\def\V{{\bf V}}
\def\0{{\bf 0}}
\def\1{{\bf 1}}
\def\UM{{\mathcal U}}
\def\OM{{\mathcal O}}
\def\GM{{\mathcal G}}
\def\argmin{\mathop{\rm argmin}}
\newcommand{\DNe}{\text{Ne}}
\newcommand{\DNep}{\text{Ne}^{+}}
\newcommand{\MB}{\text{MB}}
\newcommand{\MBp}{\text{MB}^{+}}
\newcommand{\nss}{\text{mns}}
\newcommand{\mns}{\nss}
\newcommand{\Pa}{\text{Pa}}
\newcommand{\Uo}{\text{Uo}}
\newcommand{\Ch}{\text{Ch}}
\newcommand{\Desc}{\text{Desc}}
\newcommand{\PossDesc}{\text{PossDesc}}
\newcommand{\CUC}{\text{CUC}}
\newcommand{\indep}{\perp \!\!\! \perp}
\newcommand{\notindep}{\not\!\perp\!\!\!\perp}
\newtheorem*{remark*}{Remark}
\newtheorem*{proposition*}{Proposition}
\newtheorem{cor}{Corollary}
\newtheorem{assumption}{Assumption}
\newenvironment{proofsketch}{%
  \proof}{\endproof}
\title[Local Causal Discovery for Estimating Causal Effects]{Local Causal Discovery for Estimating Causal Effects}
\begin{document}

\maketitle

\begin{abstract}
Even when the causal graph underlying our data is unknown,
we can use observational data to
narrow down the possible values 
that an average treatment effect (ATE) can take
by (1) identifying the graph up to a Markov equivalence class;
and (2) estimating that ATE for each graph in the class.
While the PC algorithm can identify this class 
under strong faithfulness assumptions,
it can be computationally prohibitive. 
Fortunately, only the local graph structure 
around the treatment is required to identify 
the set of possible ATE values,
a fact exploited by local discovery algorithms 
to improve computational efficiency.
In this paper, we introduce
Local Discovery using Eager Collider Checks (LDECC),
a new local causal discovery algorithm 
that leverages unshielded colliders to orient
the treatment's parents differently
from existing methods.
We show that there exist graphs where LDECC
exponentially outperforms existing local discovery algorithms and vice versa.
Moreover, we show that
LDECC and existing algorithms
rely on different faithfulness assumptions,
leveraging this insight to weaken the assumptions 
for identifying the set of possible ATE values.
\end{abstract}

\begin{keywords}%
Local causal discovery, local structure learning, causal inference, causal discovery
\end{keywords}

\section{Introduction}
\label{sec:introduction}
Estimating an average treatment effect (ATE) from observational data 
typically requires structural knowledge,
which can be represented in the form of a causal graph.
While a rich literature offers methods
for identifying and estimating causal effects
given a \emph{known} causal graph 
\citep{tian2002general, pearl2009causality, jaber2019causal},  
many applications require that we investigate
the values that an ATE could possibly take 
when the causal graph is unknown.
In such cases, we can (i) perform causal discovery,
using observational data
to identify the graph up to
a Markov equivalence class (MEC); and 
(ii) estimate the desired ATE 
for every graph in the MEC,
thus identifying the set of possible ATE values.
We denote this (unknown) set of identified ATE values by $\Theta^{*}$.

Causal discovery has been investigated
under a variety of assumptions \citep{glymour2019review, squires2022causal}.
Under causal sufficiency (i.e., no unobserved variables) and faithfulness, 
the PC algorithm \citep[Sec.~5.4.2]{spirtes2000causation} 
can identify the MEC of the true graph
from observational data (and thus $\Theta^{*}$).
However, fully characterizing the MEC 
can be computationally expensive.
Addressing this concern,
\citet{maathuis2009estimating} proved that
the local structure around the treatment node
is sufficient for identifying $\Theta^{*}$.
Leveraging this insight,
existing local causal discovery algorithms (e.g.,
PCD-by-PCD \citep{yin2008partial}, MB-by-MB \citep{wang2014discovering})
discover just enough of the graph
to identify any parents and children of the treatment
that PC would have discovered.
These methods sequentially discover the local structure
around the treatment, its neighbors, and so on, 
terminating whenever all neighbors 
of the treatment are oriented 
(or no remaining neighbors can be oriented).

In this work, we introduce 
Local Discovery with Eager Collider Checks (LDECC),
a new local causal discovery algorithm 
that provides an  alternative method to orient
the parents of a treatment $X$ (Sec.~\ref{sec:ldecc}).
Initially, LDECC performs local discovery around
$X$ to discover its neighbors.
Subsequently, LDECC chooses the same Conditional Independence (CI) tests 
as PC would choose given the state of the graph,
with one crucial exception:
whenever we find two nodes $A$ and $B$ 
such that $A \indep B | \S$ for some set $\S$ with $X \notin \S$,
LDECC immediately checks whether they become dependent
when $X$ is added to the conditioning set.
If the test reveals dependence $A \notindep B | \S \cup \{X\}$,
then $X$ must either be a collider or a descendant of a collider
that lies at the intersection 
of some path from $A$ to $X$ or from $B$ to $X$.
On this basis, LDECC can orient the smallest subset 
of $X$'s neighbors that d-separate it from $\{A,B\}$ as parents.
We prove that, under faithfulness,
the identified ATE set is equal to $\Theta^*$.

We represent the ideas underlying existing local causal discovery algorithms
using a simple algorithm that we call Sequential Discovery (SD) that 
sequentially runs the PC algorithm locally
for local structure learning.
While existing algorithms differ from SD in subtle ways,
they share key steps with SD allowing us to
compare LDECC to this existing class of algorithms.
We highlight LDECC's complementary strengths to SD
in terms of computational requirements.
We present classes of causal graphs where LDECC 
performs exponentially fewer CI tests than SD, and vice versa (Sec.~\ref{sec:comparison-of-tests}).
Thus, the methods can be combined profitably
(by running LDECC and SD in parallel and terminating
when either algorithm terminates),
avoiding exponential runtimes 
if either algorithm's runtime is subexponential,
thereby expanding the class of graphs where efficient
local discovery is possible.

We also compare SD and LDECC based on their
faithfulness requirements (Sec.~\ref{sec:faithfulness}).
We show that both SD and PC require weaker
assumptions to identify $\Theta^*$ than to identify the entire MEC.
We also find that LDECC and SD rely on different
sets of faithfulness assumptions.
There are classes of faithfulness violations 
where one algorithm will correctly identify $\Theta^*$
while the other will not.
Under the assumption that 
one of the algorithms' faithfulness assumptions is correct, 
we propose a procedure that recovers a 
conservative bound on the ATE set.
Aiming to make this bound sharp, we prove that
LDECC and SD can be combined to construct a
procedure that can identify $\Theta^*$
under strictly weaker assumptions,
again highlighting LDECC's complementary
nature relative to existing methods.
Finally, we empirically test LDECC on synthetic as well as 
semi-synthetic graphs (Sec.~\ref{sec:experiments}) and
show that it performs comparably to SD (and PC) 
and typically runs fewer CI tests than SD
\footnote{The code and data are available at \href{https://github.com/acmi-lab/local-causal-discovery}{https://github.com/acmi-lab/local-causal-discovery}.}.

\section{Related Work}
\label{sec:related}
Several works have developed procedures
for identifying the set of possible
ATE values $\Theta^*$ using local information.
\citet{maathuis2009estimating} propose IDA, 
an algorithm that outputs
a set of ATE values by only using the local structure
around the treatment $X$:
For a given MEC,
$\Theta^*$ can be identified by adjusting 
for all possible parent sets of $X$
such that
no new unshielded colliders get created at $X$.
IDA has been extended to
account for multiple interventions \citep{nandy2017estimating}, background information \citep{perkovic2017interpreting, fang2020ida}, and hidden variables \citep{malinsky2016estimating}.
\citet{hyttinen2015calculus} combine causal discovery and inference
by presenting a SAT solver-based approach to do-calculus with an unknown graph.
\citet{geffner2022deep} propose a deep learning based
end-to-end method that learns a posterior over graphs
and estimates the ATE by marginalizing over the graphs.
\citet{toth2022active} propose a Bayesian active learning 
approach to jointly learn a posterior over causal models and
some target query of interest.
However, these works do not propose local discovery methods.

Other works attempt to perform causal inference
under weaker assumptions than requiring the entire 
graph or MEC.
Some works propose methods to find valid
adjustment sets under the assumption that
the observed covariates are pre-treatment
\citep{de2011covariate, vanderweele2011new, entner2012statistical, entner2013data, witte2019covariate, gultchin2020differentiable}.
\citet{cheng2022toward} use an anchor variable 
(which they call COSO) 
and \citet{shah2022finding} use a known causal parent of the treatment
to identify a valid adjustment set.
Other works present data-driven methods to find 
valid instrumental variables \citep{silva2017learning, cheng2022discovering}.
\citet{watson2022causal} propose a method to learn the causal order
amongst variables that are descendants of some covariate set.
By contrast, we do not make any partial ordering assumptions.

A complementary line of work focuses on
only discovering the local structure around a given node.
Many works attempt to find the Markov blanket (MB) or the
parent-child set of a target node
\citep{aliferis2003hiton, tsamardinos2006max, yu2020causality, ling2021causal}.
Unlike our work, they do not distinguish between the parent
and child identities of nodes inside the MB 
(see \citet{aliferis2010local} for a review of these methods).
For determining these causal identities, 
existing works use sequential approaches 
by repeatedly finding local structures like a MB 
or a parent-child set
starting from the target node and 
then propagating edge orientations
within the discovered subgraph
\citep{yin2008partial, zhou2010discover, wang2014discovering, gao2015local, ling2020using}.
These methods typically differ in how they discover 
the local structure around each node.
\citet{cooper1997simple} present an algorithm
to learn pairwise causal relationships 
by leveraging a node that is not
caused by any other nodes.
Some works use independence patterns in Y-structures around a given node
to find local causal relationships \citep{mani2012theoretical, mooij2015empirical, versteeg2022local}.

\section{Preliminaries}
\label{sec:prelim}
We assume that 
the causal structure of the
observational data can be encoded 
using a DAG $\GM^{*}(\V, \E)$, 
where $\V$ and $\E$ are the set of nodes and edges. 
Each edge $A \rightarrow B \in \E$ indicates that $A$ is a direct cause of $B$.
We denote the treatment node by $X$ 
and the outcome node by $Y$.
For a node $V$, we denote its Markov blanket, neighbors, parents, children, and descendants by $\MB(V)$, $\DNe(V), \Pa(V), \Ch(V)$, and $\Desc(V)$
(with $\MB(V) = \DNe(V) \cup_{N \in \Ch(V)} \Pa(N)$).
Let $\DNe^{+}(V) = \DNe(V) \cup \{ V \}$ and 
$\MB^{+}(V) = \MB(V) \cup \{ V \}$.
An unshielded collider (UC) is a triple $P \rightarrow R \leftarrow Q$
s.t. $P \text{---} Q \notin \E$.
For a UC $\alpha = (P \rightarrow R \leftarrow Q)$,
let $\text{sep}(\alpha) = \min \{ |\S| : 
\S \subseteq \V \setminus \{ P, Q \} \, \text{and} \,
P \indep Q | \S \}$.

A DAG entails a set of CIs via \emph{d-separation} \citep[Sec.~1.2.3]{pearl2009causality}.
DAGs that entail the same set of CIs form an MEC,
which can be characterized by a 
\emph{completed  partially  directed  acyclic graph} (CPDAG).
For the true DAG $\GM^{*}$, we denote by
$\Theta^{*}$ the set of ATE values (of $X$ on $Y$)
in each DAG in the MEC corresponding to $\GM^{*}$.
The causal faithfulness assumption (CFA)
holds iff all CIs satisfied by
the observational joint distribution $\mathbb{P}(\V)$ 
are entailed by $\GM^{*}$.
Throughout this work, we focus on causally sufficient graphs.
Under the CFA, the PC algorithm 
recovers this MEC roughly as follows
(demonstrated for the DAG in Fig.~\ref{fig:pc-true-graph}):
(i) estimate the skeleton by running CI tests;
(ii) find UCs in this skeleton (Fig.~\ref{fig:pc-cpdag} red edges); and 
(iii) orient additional edges using Meek's rules \citep{meek2013causal} (Fig.~\ref{fig:pc-cpdag} blue edges) to get a CPDAG (full details on the PC algorithm are in Appendix~\ref{apdx:prelim}).

We instantiate the key ideas of 
existing local discovery algorithms
using SD (Fig.~\ref{fig:algo-sd}).
SD sequentially finds the local structure
around nodes (\emph{LocalPC} in Fig.~\ref{fig:algo-common-additional-functions}) starting from
$X$, then its neighbors, and so on (Lines~\ref{alg-sd:start-seq}--\ref{alg-sd:end-seq}).
After each such local discovery step, SD orients nodes in the 
subgraph discovered until that point
using UCs and Meek's rules like PC (Line~\ref{alg-sd:orient-in-subgraph}; 
see Fig.~\ref{fig:apdx-pc-algorithm} for \emph{GetCPDAG} and Fig.~\ref{fig:apdx-nbrs-subroutine} for \emph{Nbrs}).
If at any point all neighbors of $X$ get oriented,
SD terminates.
The ATE set is estimated by applying the backdoor adjustment \citep[Thm.~3.3.2]{pearl2009causality} with every \emph{locally valid} parent set of $X$, 
i.e., one that does not create a new UC at $X$
(see Lines~\ref{alg-sd:start-ate-est}--\ref{alg-sd:end-ate-est} and \emph{isLocallyValid} in Fig.~\ref{fig:algo-common-additional-functions}).
The ATE estimated using each such parent set $\S$
is denoted as $\theta_{X \rightarrow Y|\S }$ (Line~\ref{alg-sd:ate-estimate}).
Consider the DAG in Fig.~\ref{fig:pc-true-graph}.
Here, SD will use \emph{LocalPC} to sequentially discover the 
neighbors of $X, W, M, A$, and $B$.
Then the UC $A \rightarrow W \leftarrow B$ will be
detected and, after propagating orientations via Meek's rules, both
$W$ and $M$ will be oriented and SD will terminate (Fig.~\ref{fig:SD-4})
and output $\Theta_{\text{SD}} = \{ \theta_{X \rightarrow Y|\{W\}} \} = \Theta^*$.

\begin{figure}
\centering
\subfigure[True DAG $\GM^{*}$]{\includegraphics[scale=0.42]{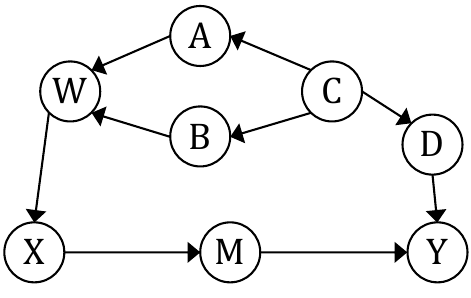}\label{fig:pc-true-graph}}
\hfill
\subfigure[CPDAG from PC]{\includegraphics[scale=0.42]{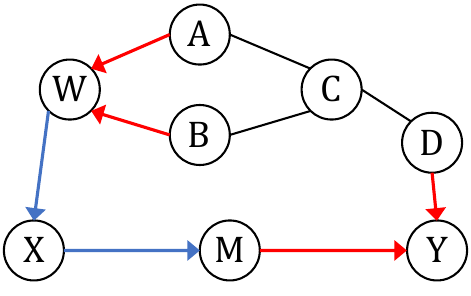}\label{fig:pc-cpdag}}
\hfill
\subfigure[SD's output]{
\hspace{0.1em}
\includegraphics[scale=0.42]{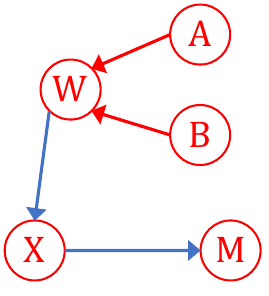}
\hspace{0.1em}
\label{fig:SD-4}}
\hfill
\subfigure[LDECC]{\includegraphics[scale=0.42]{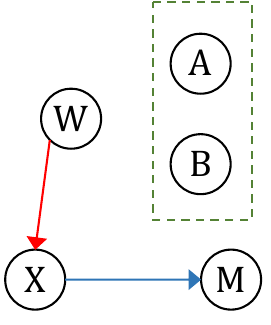}\label{fig:ldecc-graph}}
\hfill
\caption{Demonstration of the PC, SD, and LDECC algorithms for the graph in \hyperref[fig:pc-true-graph]{(a)}.}
\end{figure}

\begin{figure}[t]
\centering
\begin{minipage}[b]{0.48\textwidth}
    \setlength{\interspacetitleruled}{0pt}%
    \setlength{\algotitleheightrule}{0pt}%
    \LinesNumberedHidden
    \begin{algorithm}[H]
    \SetAlgoLined
    \SetKwFunction{FLocalPC}{LocalPC}
    \SetKwProg{FLPC}{def}{:}{}
    \FLPC{\FLocalPC{Skeleton $\UM$, Target $V$}}{
        $s \gets 0$\;
        \While{$|\DNe_{\UM}(V)| > s$}{
            \For{$B \in \DNe_{\UM}(V)$, $\S \subseteq \DNe_{\UM}(V) \setminus \{ B \}$ s.t. $|\S| = s$}{
                \uIf{$V \indep B | \S$}{
                    $\UM$.removeEdge($V \text{---} B$)\;
                    $\text{DSep}(V, B) \gets \S$\;
                    \textbf{break}\;
                }
            }
            $s \leftarrow s + 1$\;
        }
        \KwRet $\UM, \DNe_{\UM}(V), \text{DSep}$\;
    }
    \SetKwFunction{FLocallyValid}{isLocallyValid}
    \SetKwProg{FLV}{def}{:}{}
    \FLV{\FLocallyValid{$X, \S$}}{
        \For{every $A, B \in \S$}{
            \lIf{isNonCollider($A \text{---} X \text{---} B$)}{\textbf{return} False}
        }
        \KwRet True\;
    }
    \end{algorithm}
\captionof{figure}{Additional subroutines.}
\label{fig:algo-common-additional-functions}
\end{minipage}
\hfill
\begin{minipage}[b]{0.49\textwidth}
    \setlength{\interspacetitleruled}{0pt}%
    \setlength{\algotitleheightrule}{0pt}%
    \begin{algorithm}[H]
    \SetAlgoLined
    \KwInput{Treatment $X$.}
    Fully connected undirected graph $\UM$\;
    queue $\gets [X]$, done $\gets \emptyset$\;
    \While{queue is not empty}{ \label{alg-sd:start-seq}
        $V \gets \text{queue.removeFirstItem}()$\; \label{alg:sd-queue-pop}
        $\UM, \DNe_{\UM}(V), \text{DSep} \gets \text{LocalPC}(\UM, V)$\;
        $\text{done} \leftarrow \text{done} \cup \{V\}$\;
        queue.append($\DNe_{\UM}(V) \setminus (\text{done} \cup \text{queue})$)\; \label{alg-sd:end-seq}
        $\GM \gets \text{GetCPDAG}(\UM[\text{done}], \text{DSep})$\; \label{alg-sd:orient-in-subgraph}
        parents, children, unoriented $\gets$ Nbrs($\GM, X$)\;
        \lIf{unoriented $= \emptyset$}{
            break
        }
    }
    $\Theta_{\text{SD}} \gets \emptyset$\; \label{alg-sd:start-ate-est}
    \For{$\S \subseteq (\text{unoriented} \cup \text{parents})$}{
        \uIf{isLocallyValid($X, \S$)}{
            $\Theta_{\text{SD}}$.add$(\theta_{X \rightarrow Y|\S})$\; \label{alg-sd:ate-estimate}
        }
    } \label{alg-sd:end-ate-est}
    \KwRet $\Theta_{\text{SD}}$\;
    \end{algorithm}
\captionof{figure}{The SD algorithm.}
\label{fig:algo-sd}
\end{minipage}
\end{figure}

\section{Local Discovery using Eager Collider Checks (LDECC)}
\label{sec:ldecc}
\begin{figure}[t]
\centering
\begin{minipage}[b]{0.46\textwidth}
    \vspace{0pt}
    \setlength{\interspacetitleruled}{0pt}%
    \setlength{\algotitleheightrule}{0pt}%
    \begin{algorithm}[H]
    \SetAlgoLined
    \SetKwFunction{FOrientChildren}{UCChildren}
    \SetKwProg{FOC}{def}{:}{}
    \FOC{\FOrientChildren{$\MB(X), \DNe(X)$, DSep}}{
        spousesX $\gets \MB(X) \setminus \DNe(X)$\;
        children $\gets \emptyset$\;
        \For{$D \in$ spousesX}{
            \For{$C \in \DNe(X) \setminus \text{DSep}(D, X)$}{
                \lIf{$C \notindep D | \text{DSep}(D, X)$}{
                    children.add($C$)
                }
            }
        }
        \KwRet children\;
    }
    \SetKwFunction{FGetMNS}{GetMNS}
    \SetKwProg{FGM}{def}{:}{}
    \FGM{\FGetMNS{$V \notin \DNep(X)$}}{
        $s \gets 0$\;
        \While{|\DNe(X)| > s}{
            \For{$\S \subseteq \DNe(X)$ s.t. $|\S| = s$}{ 
                \lIf{$V \indep X | \S$}{ \label{alg-get-mns:check-subsets}
                    \KwRet $\S$
                }
            }
            $s \gets s + 1$\;
        }
        \KwRet MNS not found\;
    }
    \SetKwFunction{FECCParents}{ECCParents}
    \SetKwProg{FEP}{def}{:}{}
    \FEP{\FECCParents{$A, B$, check=\emph{False}}}{
        \uIf{check \emph{and} $\{A, B\} \cap \DNe(X) = \emptyset$ }{ \label{alg-ecc-par:check-start}
            $m_A \gets \text{GetMNS}(A)$ \;
            $m_B \gets \text{GetMNS}(B)$\;
            \lIf{$m_A = m_B$}{\KwRet $m_A$}
            \lElse{\KwRet $\emptyset$} \label{alg-ecc-par:check-end}
        }
        parents $\gets \emptyset$\;
        \For{$V \in \{ A, B \}$}{
            \lIf{$V \in \DNe(X)$}{parents.add($V$)}
            \lElse{parents.add(GetMNS$(V)$)}
        }
        \KwRet parents\;
    }
    \end{algorithm}
    \captionof{figure}{Subroutines used by LDECC.}
    \label{fig:algo-ldecc-functions}
\end{minipage}
\hfill
\begin{minipage}[b]{0.50\textwidth}
    \vspace{0pt}
    \setlength{\interspacetitleruled}{0pt}%
    \setlength{\algotitleheightrule}{0pt}%
    \begin{algorithm}[H]
    \SetAlgoLined
    \KwInput{Treatment $X$.}
    $\MB(X) \gets \text{FindMarkovBlanket}(X)$\; \label{alg-ldecc:find-MB-x}
    $\_, \DNe(X), \text{DSep} \gets \text{LocalPC}(\MBp(X), X)$\; \label{alg-ldecc:find-Ne-x}
    children $\gets$ UCChildren($\MB(X), \DNe(X)$, DSep)\; \label{alg-ldecc:find-children-paofch}
    parents $\gets \emptyset$, unoriented $\gets \emptyset$\;
    Completely connected undirected graph $\UM$\;
    \For{$(A \indep B | \S) \in \text{PCTest}(\UM)$  s.t. $A, B \neq X$}{ \label{alg-ldecc:for-loop-pc-tests}
        \uIf{$A, B \in \DNe(X)$ and $X \notin \S$}{ \label{alg-ldecc:if-cond-nbr-uc}
            parents.add($\{A, B\}$)\; \label{alg-ldecc:if-cond-nbr-uc-mark}
            parents.add($\S \cap \DNe(X)$)\; \label{alg-ldecc:meek-rule-3}
        }
        \uElseIf{$A, B \in \DNe(X)$ and $X \in \S$}{ \label{alg-ldecc:if-cond-non-coll}
            MarkNonCollider($A \text{---} X \text{---} B$)\; \label{alg-ldecc:if-cond-non-coll-mark}
            \For{$V \in \DNe(X) \setminus (\S \cup \{A, B\})$}{ \label{alg-ldecc:meek-rule-4-begin}
                \lIf{$A \notindep B | \{ \S, V \}$}{
                    children.add($V$) \label{alg-ldecc:meek-rule-3-and-4-child}
                }
            } \label{alg-ldecc:meek-rule-4-end}
        }
        \ElseIf{$A \notindep B | \{\S, X \}$ and $X \notin \S$}{ \label{alg-ldecc:if-cond-ecc}
            parents.add(ECCParents($A, B, \S$))\; \label{alg-ldecc:if-cond-ecc-mark}
        }
        \For{$P \in $ parents, $C \in$ unoriented}{ \label{alg-ldecc:mark-child-using-non-coll-for-loop}
            \lIf{isNonCollider($P \text{---} X \text{---} C$)}{
                    children.add($C$) 
                } \label{alg-ldecc:mark-child-using-non-coll}
        }
        unoriented $\leftarrow \DNe(X) \setminus $ (parents $\cup$ children)\;
        \lIf{unoriented $= \emptyset$}{
            break
        }
    }
    $\Theta_{\text{LDECC}} \gets \emptyset$\;
    \For{$\S \subseteq (\text{unoriented} \cup \text{parents})$}{
        \uIf{isLocallyValid($\S$)}{
            $\Theta_{\text{LDECC}}$.add$(\theta_{X \rightarrow Y|\S})$\;
        }
    }
    \KwOutput{$\Theta_{\text{LDECC}}$}
    \end{algorithm}
    \captionof{figure}{The LDECC algorithm.}
    \label{fig:algo-cdud}
\end{minipage}
\end{figure}

In this section,
we propose LDECC, a local causal discovery algorithm
that orients the parents of $X$
by leveraging UCs differently from
SD (Prop.~\ref{prop:ecc}).
We prove its correctness under the CFA (Thm.~\ref{thm:ldecc-correctness})
and show its complementary nature relative to SD
in terms of computational (Sec.~\ref{sec:comparison-of-tests}) and 
faithfulness (Sec.~\ref{sec:faithfulness}) requirements.
We defer the proofs to Appendix~\ref{sec:apdx-ldecc-ommitted-proofs}.
We first define a \emph{Minimal Neighbor Separator} (MNS)
which plays a key role in LDECC.

\begin{definition}[Minimal Neighbor Separator]
For a DAG $\GM(\V, \E)$ and nodes $X$ and $A \notin \DNep(X)$,
$\nss_X(A) \subseteq \DNe(X)$ is 
the unique set (see Prop.~\ref{prop:mns-unique}) of nodes such that
(i) (d-separation) $A \indep X | \nss_X(A)$, and 
(ii) (minimality) for any $\S \subset \nss_X(A)$, $A \notindep X | \S$.
\end{definition}

For the graph in Fig.~\ref{fig:pc-true-graph}, we have $\nss_X(A) = \nss_X(B) = \nss_X(C) = \{ W \}$ and $\nss_X(Y) = \{W, M\}$.
While an MNS need not exist for every node 
(see Example~\ref{example:apdx-mns-does-not-exist} in Appendix~\ref{apdx:ldecc}),
$\nss_X(V)$ always exists $\forall V \notin \Desc(X)$ 
(which is sufficient for correctness of LDECC):
\begin{proposition}\label{prop:valid-nss-non-desc}
For any node $V \notin (\Desc(X) \cup \DNep(X))$, $\nss_X(V)$ exists and $\nss_X(V) \subseteq \Pa(X)$.
\end{proposition}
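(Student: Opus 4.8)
The plan is to show both the existence of $\nss_X(V)$ and the stronger structural claim $\nss_X(V)\subseteq\Pa(X)$ by exhibiting an explicit separating set contained in $\Pa(X)$ and then arguing minimality is preserved. Concretely, I would first claim that $\S_0 := \Pa(X)$ d-separates $V$ from $X$ for any $V\notin(\Desc(X)\cup\DNep(X))$. This is the standard fact that the parents of a node block all backdoor paths: any path from $V$ to $X$ either enters $X$ through a parent (hence is blocked by conditioning on $\Pa(X)$), or leaves $X$ through a child, in which case the path must contain a collider — otherwise following directed edges out of $X$ would reach $V$, making $V\in\Desc(X)$, a contradiction. One has to check that conditioning on $\Pa(X)$ does not \emph{open} such a collider: a collider on the path, if it or a descendant of it lay in $\Pa(X)$, would create a directed cycle through $X$, which is impossible in a DAG. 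So $V\indep X\mid\Pa(X)$.

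Next, since $\Pa(X)\subseteq\DNe(X)$, the set $\S_0=\Pa(X)$ is a valid candidate for clause (i) of the MNS definition and lies inside $\DNe(X)$. Existence of \emph{some} separating subset of $\DNe(X)$ then follows immediately, and among all such separating sets one can take an inclusion-minimal one; I'd invoke Prop.~\ref{prop:mns-unique} (stated earlier) to conclude this minimal set is unique and equals $\nss_X(V)$. The remaining point is that $\nss_X(V)\subseteq\Pa(X)$, not merely that \emph{a} separator inside $\Pa(X)$ exists. For this I would argue that $\nss_X(V)$ is in fact a subset of \emph{every} separator of $V$ and $X$ that is contained in $\DNe(X)$ — this is essentially the content of the uniqueness proposition: the minimal neighbor separator consists exactly of those neighbors of $X$ that must appear in any $\DNe(X)$-restricted separator. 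Since $\Pa(X)$ is one such separator, $\nss_X(V)\subseteq\Pa(X)$. If Prop.~\ref{prop:mns-unique} is phrased only as uniqueness rather than as a "subset of all separators" statement, I would instead argue directly: take $N\in\nss_X(V)\cap\Ch(X)$ toward a contradiction; by minimality there is a path from $V$ to $X$ that is blocked only because $N$ is conditioned on, i.e.\ $N$ is a non-collider on that path and removing $N$ from the conditioning set opens it, but then the path must pass $X\!\rightarrow\!N$ (since $N$ is a child), and one shows this forces either a directed path $X\to\cdots\to V$ or an unblocked collider, contradicting $V\notin\Desc(X)$ or the blocking, respectively.

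I expect the main obstacle to be this last step — ruling out that a \emph{child} of $X$ could be forced into the minimal separator. The subtlety is that $\Pa(X)$ separating $V$ and $X$ does not by itself preclude $\nss_X(V)$ from containing a child of $X$ (a set being a subset of the neighbors is not automatically a subset of the parents). The clean way through is to lean on the uniqueness/minimality characterization: the MNS is contained in the intersection of all $\DNe(X)$-restricted separators, and $\Pa(X)$ is one of them, so the intersection — hence $\nss_X(V)$ — sits inside $\Pa(X)$. The d-separation argument for $V\indep X\mid\Pa(X)$ is routine (it's the textbook backdoor/parent-blocking lemma specialized to a DAG), so the real work is packaging the minimality so that "$\Pa(X)$ works" upgrades to "$\nss_X(V)$ is inside $\Pa(X)$."
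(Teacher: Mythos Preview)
Your proposal is correct and follows essentially the same route as the paper: show $V\indep X\mid\Pa(X)$ via the standard backdoor/parent-blocking argument, then pass to a minimal subset of $\Pa(X)$. The paper's proof is terser and leaves the step ``some separator lies in $\Pa(X)$ $\Rightarrow$ $\nss_X(V)\subseteq\Pa(X)$'' implicit, whereas you correctly spell it out via uniqueness (any inclusion-minimal subset of $\Pa(X)$ that separates satisfies the MNS definition, so by Prop.~\ref{prop:mns-unique} it \emph{is} the MNS); your worry about needing a separate ``subset of all separators'' statement is unfounded, since that property follows directly from uniqueness.
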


\begin{proposition}[Uniqueness of MNS]\label{prop:mns-unique}
For every node $V$ such that $\nss_X(V)$ exists, it is unique.
\end{proposition}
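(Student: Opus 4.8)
The plan is to reduce the proposition to an ``intersection'' statement for d\nobreakdash-separators confined to $\DNe(X)$, and to prove that statement by a shortest-path splicing argument. Concretely, I would first show:

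\emph{Claim.} If $\S_1,\S_2\subseteq\DNe(X)$ each d\nobreakdash-separate $A$ from $X$ in $\GM$, then so does $\S_1\cap\S_2$.

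Granting the Claim, uniqueness follows immediately: if $\S_1$ and $\S_2$ both satisfy (i)--(ii) of the definition, then $\S_1\cap\S_2$ d\nobreakdash-separates $A$ from $X$ and is a subset of $\S_1$, so the minimality clause (ii) for $\S_1$ forces $\S_1\cap\S_2=\S_1$, i.e.\ $\S_1\subseteq\S_2$; by symmetry $\S_1=\S_2$. I would also note that the restriction to $\DNe(X)$ is genuinely needed: in the DAG $M\to A$, $M\to N$, $N\to X$ one has $A\indep X\mid\{M\}$ and $A\indep X\mid\{N\}$ but $A\notindep X\mid\varnothing$, so the unrestricted version of the Claim is false.

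To prove the Claim I would argue by contradiction: set $Z=\S_1\cap\S_2$, assume some path from $A$ to $X$ is active given $Z$, and fix such a path $\pi$ with the fewest edges. Enlarging a conditioning set can block a \emph{fixed} path only by moving one of its interior non-colliders into the set (already-open colliders stay open), so since $\pi$ becomes blocked when $Z$ is enlarged to $\S_1$, some non-collider of $\pi$ lies in $\S_1\setminus Z=\S_1\setminus\S_2$, and likewise one lies in $\S_2\setminus\S_1$. Let $m$ be the non-collider of $\pi$ lying in $\S_1\triangle\S_2$ that is closest to $X$ along $\pi$; say $m\in\S_1\setminus\S_2$ (the other case is symmetric). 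Then $m\notin Z$ and $m\in\DNe(X)$, so $m$ is adjacent to $X$. I would then form $\rho$ from $\pi$ by deleting the portion of $\pi$ strictly between $m$ and $X$ and inserting the edge $m\,\text{---}\,X$; this is a path from $A$ to $X$, strictly shorter than $\pi$ unless $m$ is already the penultimate vertex of $\pi$.

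The crux is to verify that $\rho$ is active given $Z$, which then contradicts the minimality of $\pi$ whenever $m$ is not penultimate. The prefix $\pi[A\to m]$ is untouched, so its interior vertices behave toward $Z$ exactly as in $\pi$; only the status of $m$ in $\rho$ must be re-examined. If $m\in\Pa(X)$ the inserted edge $m\to X$ points away from $m$, so $m$ is a non-collider of $\rho$, and $m\notin Z$ means it does not block $\rho$ — done. If $m\in\Ch(X)$ the inserted edge is $X\to m$; if $m$'s incoming edge along $\pi[A\to m]$ points away from $m$ then again $m$ is a non-collider of $\rho$ and $m\notin Z$; the only remaining possibility is that $m$ becomes a \emph{collider} of $\rho$, and this is the step I expect to be the obstacle, since an inert collider would break the argument. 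Here acyclicity rescues us: the deleted segment $\pi[m\to X]$ is a path from $m$ — a descendant of $X$ — back to $X$ whose first edge points out of $m$, so it cannot be directed and must contain a collider; one checks that its first collider $c$ is reached from $m$ by a directed path, so $c\in\Desc(m)$, and since $\pi$ is active given $Z$ the collider $c$ has itself or a descendant in $Z$. That node then lies in $\Desc(m)\cap Z$, so the collider $m$ in $\rho$ is in fact open given $Z$. Hence $\rho$ is active given $Z$ in every case.

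It follows that $m$ must be the penultimate vertex of $\pi$. But $\pi$ is also blocked given $\S_2$, so some non-collider $m'$ of $\pi$ lies in $\S_2\setminus\S_1\subseteq\S_1\triangle\S_2$; by the choice of $m$ (closest to $X$) and the fact that $m$ is penultimate, $m'$ occurs strictly before $m$. Splicing at $m'$ exactly as above produces a path from $A$ to $X$ that is active given $Z$ (same case analysis) and strictly shorter than $\pi$, contradicting minimality. This contradiction establishes the Claim, and with it the proposition. The only delicate point in the whole argument is the child-of-$X$ collider case above; everything else is bookkeeping about which vertices of a spliced path are colliders and whether they are open.
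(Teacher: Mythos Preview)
Your argument is correct and genuinely different from the paper's. The paper argues directly by contradiction: it picks a node $A\in\S_1\setminus\S_2$, asserts (informally) that some node of $\S_2\setminus\S_1$ must block the paths through $A$, and then claims this produces a path through $\S_2\setminus\S_1$ that $\S_1$ fails to block. The paper's proof is essentially a two-sentence verbal sketch; the step ``this means that there is a path \ldots\ that cannot be blocked by $\S_1$'' is never justified at the level of individual vertices and their collider/non-collider status. Your route---reducing to an intersection-closure lemma for separators contained in $\DNe(X)$, then proving that lemma by a shortest-active-path splicing argument---is longer but fully rigorous, and the lemma itself is a clean standalone statement. Your observation that the restriction to $\DNe(X)$ is essential (with the $M\to A$, $M\to N$, $N\to X$ counterexample) is a nice addition the paper does not make. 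The one place your proof does real work beyond the paper is the $m\in\Ch(X)$ collider case, where you invoke acyclicity to find an open collider downstream of $m$; this is exactly the kind of detail the paper's proof glosses over.
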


\begin{proposition}[Eager Collider Check]\label{prop:ecc}
For nodes $A, B \in \V \setminus \DNe^{+}(X)$
and $\S \subseteq \V \setminus \{ A, B, X \}$, if
(i) $A \indep B | \S$;
and (ii) $A \notindep B | \S \cup \{X\}$; 
then $A, B \notin \Desc(X)$
and $\nss_X(A), \nss_X(B) \subseteq \Pa(X)$.
\end{proposition}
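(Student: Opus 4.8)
The plan is to first use the two conditional-independence facts to locate a collider on a path that is ``activated'' by conditioning on $X$, and then argue that this collider must in fact be $X$ (or a descendant of $X$), which will be contradicted unless $A$ and $B$ are non-descendants of $X$. Concretely: since $A \indep B \mid \S$ but $A \notindep B \mid \S \cup \{X\}$, there must exist a path $p$ from $A$ to $B$ that is d-connecting given $\S \cup \{X\}$ but not given $\S$ alone. The only way adding $X$ to the conditioning set can \emph{open} a path is if $p$ contains a collider $Z$ such that $Z \in \Desc(X) \cup \{X\}$ (so that conditioning on $X$ puts a descendant of the collider, or the collider itself, in the conditioning set), while every non-collider on $p$ stays outside $\S \cup \{X\}$ and every other collider on $p$ already has a descendant in $\S$. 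I would first establish this standard d-separation ``path surgery'' lemma (it is the workhorse and the cleanest version just says: a path blocked by $\S$ and unblocked by $\S\cup\{X\}$ must contain a collider whose descendant-set meets $\{X\}$).

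Next I would show $A, B \notin \Desc(X)$. Suppose for contradiction that, say, $A \in \Desc(X)$. The idea is to derive a contradiction with hypothesis (i), $A \indep B \mid \S$, by exhibiting a d-connecting path given $\S$: take the activated collider $Z \in \Desc(X) \cup \{X\}$ on $p$, concatenate a directed path $X \rightsquigarrow Z$ (which exists since $Z \in \Desc^+(X)$) with the subpath of $p$ from $Z$ to $B$, and prepend a directed path $X \rightsquigarrow A$ (exists since $A \in \Desc(X)$); one must check this concatenation can be pruned to a genuine path that is d-connecting given $\S$ without using $X$ in the conditioning set, using that the $Z$-to-$B$ segment of $p$ is already active given $\S$ except at $Z$, and that $Z$ now has $A$ (or an intermediate collider's descendant) blocking—here care is needed because $Z$ becomes a collider or non-collider on the new path. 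The symmetric argument rules out $B \in \Desc(X)$. This descendant-exclusion step is where I expect the main technical friction: concatenating paths in DAGs generically produces walks, and turning the walk back into a simple d-connecting path while controlling collider status at each vertex relative to $\S$ requires a careful minimality/shortest-path argument. A slicker route may be to invoke an existing ``if conditioning on $X$ creates dependence then $X$ is a collider or descendant of a collider on every connecting path'' result and then use acyclicity directly: if $A \in \Desc(X)$ then $X \rightsquigarrow A$ and the collider $Z$ lies on a path from $A$ to $B$ with $Z \in \Desc^+(X)$, and one shows the relevant backdoor-type path from $A$ to $B$ through $X$'s descendants stays open given $\S$.

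Finally, given $A, B \notin \Desc(X)$ and also (by hypothesis) $A, B \notin \DNe^+(X)$, Proposition~\ref{prop:valid-nss-non-desc} applies to both $A$ and $B$: it gives that $\nss_X(A)$ and $\nss_X(B)$ both exist and are subsets of $\Pa(X)$, which is exactly the second conclusion. So the proposition follows immediately from the first conclusion plus the already-established Proposition~\ref{prop:valid-nss-non-desc}; no separate work is needed for the MNS part. I would write the proof in two clearly separated claims—\emph{Claim 1:} $A,B \notin \Desc(X)$ (the d-separation argument above); \emph{Claim 2:} the MNS statement (one line, citing Prop.~\ref{prop:valid-nss-non-desc})—and spend essentially all the effort on Claim 1, being careful to state the path-activation lemma precisely enough that the descendant-exclusion contradiction is a short application of it rather than an ad hoc path chase.
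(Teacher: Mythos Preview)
Your overall two-claim structure is exactly right and matches the paper: once you establish $A,B \notin \Desc(X)$, the MNS conclusion is a one-line appeal to Proposition~\ref{prop:valid-nss-non-desc}, and the paper does precisely this (it even gives a second proof that just cites \citet[Lemma~3]{claassen2012logical} for the non-descendant claim and then invokes Proposition~\ref{prop:valid-nss-non-desc}---this is your ``slicker route'').

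However, your path-activation lemma has the ancestor/descendant direction reversed, and this error propagates through the rest of Claim~1. You write that the activated collider $Z$ satisfies $Z \in \Desc(X) \cup \{X\}$, but conditioning on $X$ opens a collider $Z$ exactly when $X$ is $Z$ or a \emph{descendant} of $Z$, i.e.\ $Z \in \An(X) \cup \{X\}$, not $Z \in \Desc(X) \cup \{X\}$. (Your own parenthetical---``conditioning on $X$ puts a descendant of the collider, or the collider itself, in the conditioning set''---says this correctly, so the formula contradicts the English.) Consequently the directed path you want is $Z \rightsquigarrow X$, not $X \rightsquigarrow Z$, and your concatenation argument as written does not go through. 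With the direction corrected, the contradiction for $B \in \Desc(X)$ looks like the paper's: from the collider $C$ (your $Z$) there is a directed path $C \to \cdots \to X$, and if $B \in \Desc(X)$ there is a further directed path $X \to \cdots \to B$, giving an $A$--$B$ path through $X$ that is active given $\S$ alone (since $X \notin \S$); the paper then closes by observing that blocking this forces $\S$ to sever $A$ from $X$ or $B$ from $X$, contradicting (ii). Fix the direction and your plan is essentially the paper's argument.
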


Prop.~\ref{prop:ecc} suggests a different strategy
for orienting parents of $X$:
if two nodes that are d-separated by $\S$ become d-connected
by $\S \cup \{X \}$, the MNS of
such nodes contains the parents of $X$
(see proof in Appendix~\ref{sec:apdx-ldecc-ommitted-proofs}).
Thus, unlike PC and SD, the skeleton is not needed to
propagate orientations via Meek's rules.
Similar ideas have been used in prior work
for causal discovery.
\citet{claassen2012logical} use minimal (in)dependencies
to develop a logic-based causal discovery algorithm 
that also avoids graphical orientation rules.
\citet{magliacane2016ancestral} extend this and
propose an algorithm that only reasons over ancestral
relations rather than the space of DAGs. 
While these works do not study local causal discovery, their
results can be used to derive
an alternative proof of Prop.~\ref{prop:ecc}.

The LDECC algorithm (Fig.~\ref{fig:algo-cdud})
first finds the Markov blanket $\MB(X)$ (Line~\ref{alg-ldecc:find-MB-x}):
we use the IAMB algorithm \citep[Fig.~2]{tsamardinos2003algorithms} in our experiments (see Fig.~\ref{fig:apdx-algo-iamb} in Appendix~\ref{sec:apdx-ldecc-ommitted-proofs}).
Next, we run \emph{LocalPC} within the subgraph containing $\MB(X)$
to prune this set to obtain $\DNe(X)$ (Line~\ref{alg-ldecc:find-Ne-x}).
We then use the nodes in $\MB(X) \setminus \DNe(X)$ to
detect children $C$ of $X$
that are part of
UCs of the form $X \rightarrow C \leftarrow D$ 
(Line~\ref{alg-ldecc:find-children-paofch} and \emph{GetUCChildren} in Fig.~\ref{fig:algo-ldecc-functions}).
LDECC then starts running CI tests in the same way
as PC would but 
excluding tests for 
$X$ since we already know $\DNe(X)$
(the for-loop in Line~\ref{alg-ldecc:for-loop-pc-tests};
see Fig.~\ref{fig:apdx-pc-algorithm} for \emph{PCTest}).
Every time we detect a CI 
$A \indep B | \S$,
we check the following cases:
(i) if $A, B \in \DNe(X)$ and $X \notin \S$,
then there must be a UC $A \rightarrow X \leftarrow B$
and so we mark $A$ and $B$ as parents 
(Lines~\ref{alg-ldecc:if-cond-nbr-uc}, \ref{alg-ldecc:if-cond-nbr-uc-mark});
(ii) if $A, B \in \DNe(X)$ and $X \in \S$,
then we mark $A \text{---} X \text{---} B$ as a non-collider 
(Lines~\ref{alg-ldecc:if-cond-non-coll}, \ref{alg-ldecc:if-cond-non-coll-mark});  and
(iii) \textbf{Eager Collider Check (ECC):} if $X \notin \S$ and $A \notindep B | \S \cup \{ X \}$,
then, by leveraging Prop.~\ref{prop:ecc},
we apply \emph{GetMNS} (Fig.~\ref{fig:algo-ldecc-functions}) to
mark $\nss_X(A), \nss_X(B)$ as parents 
(Lines~\ref{alg-ldecc:if-cond-ecc}, \ref{alg-ldecc:if-cond-ecc-mark}; \emph{ECCParents} in Fig.~\ref{fig:algo-ldecc-functions}).
Next, for each oriented parent,
we use the non-colliders detected in Case~(ii)
to mark children 
(Lines~\ref{alg-ldecc:mark-child-using-non-coll-for-loop}, \ref{alg-ldecc:mark-child-using-non-coll}).
LDECC terminates if there are no unoriented neighbors.
The ATE set is computed similarly to SD by applying the
backdoor adjustment with every \emph{locally valid} parent set.
Lines~\ref{alg-ldecc:meek-rule-3},\ref{alg-ldecc:meek-rule-4-begin}--\ref{alg-ldecc:meek-rule-4-end}
are inserted to account 
for Meek's rules $3, 4$.
The \emph{check=True} argument in \emph{ECCParents} aims to make
it more robust to faithfulness violations 
(see Remark~\ref{remark:weaker-ff-ldecc-check-true} in Sec.~\ref{sec:faithfulness}).

Consider the running example of the DAG in Fig.~\ref{fig:pc-true-graph}.
We first find the local structure 
around $X$. 
Then we start running CI tests in the same way as PC. 
After we run $A \indep B | C$,
we do an ECC and find $A \notindep B | \{C, X\}$.
Since $\nss_X(A) = \nss_X(B) = \{W\}$,
we mark $W$ as a parent of $X$.
Next, we find that $W \indep M | X$ and 
mark $W \text{---} X \text{---} M$ as a non-collider
which lets us mark $M$ as a child 
(because $W$ is a parent). 
LDECC terminates as all neighbors of $X$ are oriented (Fig.~\ref{fig:ldecc-graph})
and the ATE set is
computed by using $\{W\}$ as the backdoor adjustment set: $\Theta_{\text{LDECC}} = \{ \theta_{X \rightarrow Y|\{W\}} \} = \Theta^{*}$.

\begin{theorem}[Correctness]\label{thm:ldecc-correctness}
Under the CFA and with access to a CI oracle, we have 
$\Theta_{\text{LDECC}} \overset{\text{set}}{=} \Theta^{*}$.
\end{theorem}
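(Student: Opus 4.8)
The plan is to reduce the claim to a statement about LDECC recovering the local structure of the CPDAG of $\GM^{*}$ around $X$, and then to piggyback on the known correctness of the PC algorithm and on Prop.~\ref{prop:ecc}. By the characterization of \citet{maathuis2009estimating} (the basis for IDA), under the CFA the set $\Theta^{*}$ equals $\{\theta_{X\rightarrow Y\mid\S}\}$ where $\S$ ranges over the \emph{locally valid parent sets} of $X$: subsets $\S\subseteq\DNe(X)$ that contain every neighbor oriented into $X$ in the CPDAG and create no new UC at $X$. Since LDECC's output uses the same backdoor functional $\theta_{X\rightarrow Y\mid\S}$, it suffices to show that at termination (i) \emph{parents} equals the set of neighbors oriented into $X$ in the CPDAG, (ii) \emph{children} equals the set of neighbors oriented out of $X$, hence \emph{unoriented} equals the undirected-at-$X$ neighbors, and (iii) \emph{isLocallyValid} accepts a candidate $\S$ with $\text{parents}\subseteq\S\subseteq\text{parents}\cup\text{unoriented}$ exactly when $\S$ introduces no new UC at $X$. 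Under a CI oracle and the CFA the CPDAG of $\GM^{*}$ is well defined and is exactly what PC would output, so these are meaningful targets.

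First I would establish the routine ingredients. Under the CFA with a CI oracle, \emph{FindMarkovBlanket} returns $\MB(X)$ and \emph{LocalPC} run on $\MBp(X)$ correctly prunes it to $\DNe(X)$ together with valid separating sets $\text{DSep}(\cdot,X)$; this is the standard soundness argument for Markov-blanket and parent--child discovery. Likewise \emph{UCChildren} correctly flags every child $C$ of $X$ lying in a UC $X\rightarrow C\leftarrow D$ with $D\in\MB(X)\setminus\DNe(X)$, via the pattern $X\indep D\mid\text{DSep}(D,X)$ but $C\notindep D\mid\text{DSep}(D,X)$. For the main loop, since LDECC issues the same CI queries that PC would, it observes the same conditional independencies; I then verify each orientation action for soundness. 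Case~(i) of the loop detects a genuine UC $A\rightarrow X\leftarrow B$ (both are neighbors of $X$ and, being separated, non-adjacent), and marking $\S\cap\DNe(X)$ as parents is a local instance of Meek's rule~3; Case~(ii) together with its inner loop and the \emph{isNonCollider}-propagation lines are local instances of Meek's rules~3 and~4; and the ECC case is exactly Prop.~\ref{prop:ecc}: when $A\indep B\mid\S$, $A\notindep B\mid\S\cup\{X\}$ and $X\notin\S$, we get $\nss_X(A),\nss_X(B)\subseteq\Pa(X)$, while \emph{GetMNS} returns precisely $\nss_X(\cdot)$ by the definition of an MNS and its uniqueness (Prop.~\ref{prop:mns-unique}); the sub-case in which one of $A,B$ already lies in $\DNe(X)$ requires the mild extension that such a node is then itself a parent of $X$. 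Hence every neighbor LDECC adds to \emph{parents} (resp.\ \emph{children}) is a parent (resp.\ child) of $X$ in $\GM^{*}$, and every non-collider it marks at $X$ is genuine, so \emph{isLocallyValid} never wrongly rejects.

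The main obstacle is completeness: showing that LDECC eventually marks \emph{every} neighbor oriented into $X$ in the CPDAG as a parent (and every neighbor oriented out of $X$ as a child), so that \emph{unoriented} at termination is exactly the undirected-at-$X$ neighbors and \emph{isLocallyValid} never wrongly accepts. An edge $N\rightarrow X$ in the CPDAG is forced either by a UC at $X$ --- caught by Case~(i) --- or by a chain of Meek's-rule applications tracing back to UCs elsewhere in the graph. The crux is to show that this orientation signal always resurfaces at $X$ in one of the forms LDECC tests: for such an $N$ there is a pair $A,B$ and a set $\S$ realizing the ECC pattern of Prop.~\ref{prop:ecc} whose MNS contains $N$, and the residual closure under Meek's rules~3 and~4 at $X$ is captured by the algorithm lines inserted for that purpose. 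I would prove this either by induction over PC's application of Meek's rules (maintaining the invariant that any edge at $X$ PC would orient, LDECC has already oriented), or --- following the remark in the paper --- by appealing to the minimal-(in)dependence logic of \citet{claassen2012logical,magliacane2016ancestral}, which shows that every orientation fact PC derives follows from minimal (in)dependence statements, exactly those LDECC checks.

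Combining completeness with the soundness established above pins down \emph{parents}, \emph{children}, \emph{unoriented}, and the behavior of \emph{isLocallyValid}, so that \citet{maathuis2009estimating}'s characterization yields $\Theta_{\text{LDECC}}\overset{\text{set}}{=}\Theta^{*}$. A secondary but real point to dispatch along the way is that LDECC runs \emph{enough} CI tests before its early termination: since it stops only once all neighbors of $X$ are already oriented, and otherwise exhausts the same test schedule as PC, every UC/ECC pattern invoked above is in fact exercised.
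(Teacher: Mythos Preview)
Your proposal is correct and follows essentially the same architecture as the paper's proof: reduce to showing that LDECC's \emph{parents}, \emph{children}, and \emph{unoriented} sets coincide with the CPDAG's local structure at $X$, establish soundness of each orientation rule (including the ECC via Prop.~\ref{prop:ecc} and Lemma-style arguments for the Meek-rule lines), and then invoke the IDA characterization of \citet{maathuis2009estimating}. The paper carries out the completeness half precisely along the lines of your option~(a), but as an explicit exhaustive case analysis rather than an induction: for each of Meek rules~1--4, separately for parents and for children, it exhibits the UC or ECC pattern (or the specific algorithm line) that LDECC will trigger, with small diagrams for each subcase; your option~(b) via \citet{claassen2012logical} is mentioned in the paper only for Prop.~\ref{prop:ecc} and would require additional work to cover the non-ECC orientation lines.
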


\begin{remark*}
A DAG can have multiple valid adjustment sets.
We show that local information suffices for
identifying the optimal adjustment set
according to the asymptotic variance criterion
in \citet{henckel2019graphical} and
we present a procedure to discover it (see Fig.~\ref{fig:apdx-algo-optimal-adjustment-set}, 
Prop.~\ref{prop:apdx-finding-the-OAS} in Appendix~\ref{sec:apdx-adjustment-oas}).
\end{remark*}

In summary, LDECC differs from SD
in two key ways:
(i) LDECC runs tests in a different order---SD runs \emph{LocalPC} 
by recursively considering nodes starting from $X$ whereas LDECC
runs CI tests in the same order as PC; and
(ii) SD uses the skeleton to orient edges from a UC
via Meek's rules 
(like the PC algorithm) 
whereas LDECC uses an ECC
to orient the parents of $X$.

\subsection{Comparison of computational requirements}\label{sec:comparison-of-tests}
\begin{figure}
\centering
\subfigure[LDECC outperforms SD]{\includegraphics[scale=0.38]{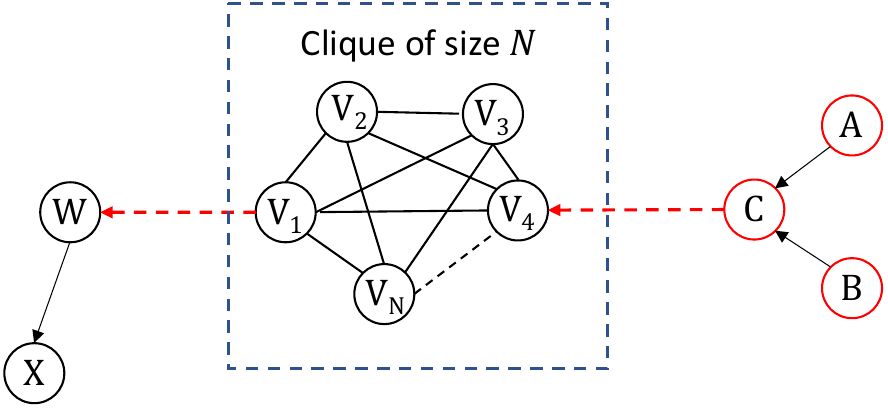}\label{fig:ldecc-better-graph}}
\hfill
\subfigure[SD outperforms LDECC]{\includegraphics[scale=0.38]{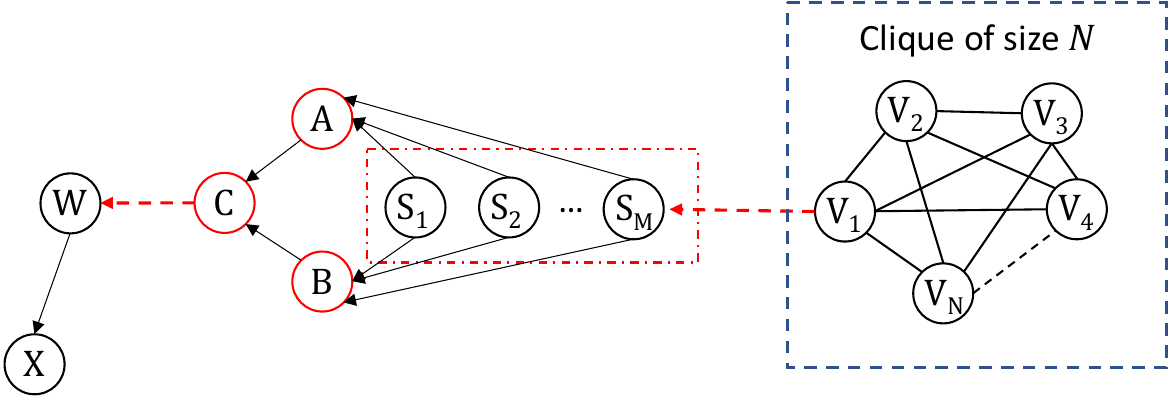}\label{fig:sd-better-graph}}
\caption{Classes of graphs where LDECC and SD have different runtimes.}
\label{fig:comparison-of-tests}
\end{figure}

In this section, we compare PC, SD, and LDECC based 
on the number of CI tests they perform.
There are classes of DAGs where LDECC is
polynomial time while SD is not, and vice versa (Props.~\ref{prop:ldecc-exp-better},\ref{prop:sd-exp-better}).
Using this insight, we construct an algorithm
that runs in polynomial time on a larger class of DAGs (Remark~\ref{remark:hybrid-algorithm}).
We defer proofs to Appendix~\ref{sec:apdx-computational}.
We make the following assumption in this section:
\begin{assumption}
The CFA holds and we have access to a CI oracle.
\end{assumption}
Let the number of CI tests performed by PC, SD, and LDECC be $T_{\text{PC}}$, $T_{\text{SD}}$, 
and  $T_{\text{LDECC}}$. 
We first show that, in the worst case, LDECC only runs a polynomial (in $|\V|$) number of 
extra CI tests compared to PC when $|\DNe(X)|$ is bounded.
In practice, we observe that the upper bound is quite loose:
Even when $|\DNe(X)|$ is large, PC also has to run a large
number of tests for $X$.
\begin{proposition}[PC vs LDECC]\label{prop:num-tests-pc-vs-ldecc}
We have $T_{\text{LDECC}} \leq T_{\text{PC}} + \OM(|\V|^2) + \OM\left(|\V| \cdot 2^{|\DNe(X)|}\right).$
\end{proposition}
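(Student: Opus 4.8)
The plan is to tally every conditional-independence (CI) test that LDECC issues and charge each one either to a test that PC itself performs or to one of the two overhead terms. I would organize the count by the line of Fig.~\ref{fig:algo-cdud} responsible for each test. First, note the lines that issue \emph{no} CI tests: the closing backdoor-adjustment loop only invokes \emph{isLocallyValid} and evaluates $\theta_{X\rightarrow Y|\S}$, and \emph{isNonCollider}/\emph{MarkNonCollider} merely read and write already-recorded orientations. Among the test-issuing lines, \emph{FindMarkovBlanket} (Line~\ref{alg-ldecc:find-MB-x}, run as IAMB) performs $\OM(|\V|)$ association tests in each of $\OM(|\V|)$ grow/shrink passes, so $\OM(|\V|^2)$ tests; and \emph{UCChildren} (Line~\ref{alg-ldecc:find-children-paofch}) runs at most $|\MB(X)\setminus\DNe(X)|\cdot|\DNe(X)|=\OM(|\V|^2)$ tests. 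Both fold into the $\OM(|\V|^2)$ term.

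Next I would bound the discovery-around-$X$ and replayed-PC parts. For \emph{LocalPC} on $\MBp(X)$ (Line~\ref{alg-ldecc:find-Ne-x}): every $D\in\MB(X)\setminus\DNe(X)$ is d-separated from $X$ by a subset of $\DNe(X)$ (for non-descendant spouses this is Prop.~\ref{prop:valid-nss-non-desc}; the remaining spouses, which are descendants of $X$, likewise possess a separator contained in $\DNe(X)$), so \emph{LocalPC} deletes every spurious edge by the time its conditioning-set size reaches $|\DNe(X)|$, after which its working neighbourhood is already $\DNe(X)$; hence it only enumerates conditioning sets built from $\DNe(X)$, contributing $\OM(|\V|\cdot 2^{|\DNe(X)|})$. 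For the \emph{PCTest} calls inside the main loop (Line~\ref{alg-ldecc:for-loop-pc-tests}): LDECC replays PC's skeleton search but enters with $X$ adjacent only to $\DNe(X)$ and skips every pair containing $X$, so the tests it issues here form a sub-collection of PC's and number at most $T_{\text{PC}}$. Each detected CI then triggers at most one eager collider check — the test $A\notindep B\mid\S\cup\{X\}$ that guards Case~(iii), Line~\ref{alg-ldecc:if-cond-ecc} — and since \emph{PCTest} detects at most one CI per deleted edge there are only $\OM(|\V|^2)$ detected CIs, hence $\OM(|\V|^2)$ eager checks. The Meek-rule inner loop (Lines~\ref{alg-ldecc:meek-rule-4-begin}--\ref{alg-ldecc:meek-rule-4-end}) fires for at most one CI per pair inside $\DNe(X)$ and runs $\le|\DNe(X)|$ tests per firing, i.e.\ $\OM(|\DNe(X)|^3)$ tests, which is absorbed into $\OM(|\V|^2)+\OM(|\V|\cdot 2^{|\DNe(X)|})$.

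What remains — and what I expect to be the crux — is the \emph{GetMNS} searches invoked from \emph{ECCParents} (Line~\ref{alg-ldecc:if-cond-ecc}). A single \emph{GetMNS}$(V)$ enumerates all subsets of $\DNe(X)$ in increasing size, costing $\OM(2^{|\DNe(X)|})$, and each \emph{ECCParents} call performs up to two such searches. Charging two searches to each of the $\OM(|\V|^2)$ detected CIs would only yield the weaker bound $T_{\text{PC}}+\OM(|\V|^2)+\OM(|\V|^2\cdot 2^{|\DNe(X)|})$. The way to close the gap is to invoke Prop.~\ref{prop:mns-unique}: $\nss_X(V)$ is determined by $V$ alone, so the search that defines it need be carried out at most once per node and can be cached, and there are at most $|\V|$ nodes; the \emph{GetMNS} searches thus contribute only $\OM(|\V|\cdot 2^{|\DNe(X)|})$. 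Summing all the line-by-line contributions gives $T_{\text{LDECC}}\le T_{\text{PC}}+\OM(|\V|^2)+\OM(|\V|\cdot 2^{|\DNe(X)|})$. Besides this deduplication step, the only other place needing care is the claim that \emph{LocalPC}'s conditioning sets never need to leave $\DNe(X)$, which is a routine consequence of Prop.~\ref{prop:valid-nss-non-desc}.
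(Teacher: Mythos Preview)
Your line-by-line tally is exactly the decomposition the paper uses, only carried out in far more detail: charge the replayed \emph{PCTest} loop to $T_{\text{PC}}$, the one extra ECC test $A\notindep B\mid\S\cup\{X\}$ per detected independence to the $\OM(|\V|^2)$ term, and the local-structure discovery around $X$ together with the \emph{GetMNS} searches to the $\OM(|\V|\cdot 2^{|\DNe(X)|})$ term. The paper's own proof is a four-sentence sketch that does not separately account for IAMB, \emph{UCChildren}, or the Meek-rule inner loop at all, and it records only that a single \emph{GetMNS} call costs $\OM(2^{|\DNe(X)|})$ without ever bounding how many calls occur---your deduplication argument via the uniqueness of $\nss_X(\cdot)$ (Prop.~\ref{prop:mns-unique}) is precisely the missing step that justifies folding all \emph{GetMNS} work into $\OM(|\V|\cdot 2^{|\DNe(X)|})$.
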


\begin{proposition}[LDECC exponentially better]\label{prop:ldecc-exp-better}
There exist classes of graphs such that $T_{\text{LDECC}} + \Theta(2^{|\V|}) \leq T_{\text{PC}}; \,\, \text{and}  \,\, T_{\text{LDECC}} + \Theta(2^{|\V|}) \leq T_{\text{SD}}$.
\end{proposition}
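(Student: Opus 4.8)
The plan is to construct an explicit family of DAGs indexed by $n = |\V|$ on which LDECC terminates after a polynomial number of CI tests while both PC and SD are forced to perform $\Theta(2^{|\V|})$ tests. The guiding intuition from Proposition~\ref{prop:ecc} and the discussion of LDECC is that LDECC can orient a parent of $X$ as soon as it witnesses a single eager collider check succeeding, \emph{without} ever needing to resolve the skeleton among far-away nodes. So I would engineer a graph where (a) $X$ has a single parent $P$ and some children, so that once $P$ is oriented LDECC terminates; (b) there is a cheap unshielded collider structure — two nodes $A, B$ with $A \indep B \mid \emptyset$ but $A \notindep B \mid \{X\}$ and $\nss_X(A) = \nss_X(B) = \{P\}$ — that LDECC hits early in its PC-ordered test schedule, after only $O(\mathrm{poly}(n))$ tests; but (c) the portion of the graph that PC and SD must traverse to orient $P$ via Meek's rules contains a large clique or a long chain of nodes whose skeleton discovery requires conditioning on sets of size $\Omega(n)$, forcing $\Theta(2^n)$ CI tests.

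**The key steps, in order, would be:** First, draw the graph: take a ``dense gadget'' $H$ on $\sim n$ nodes (e.g.\ a near-complete DAG, or a structure like the one in Fig.~\ref{fig:ldecc-better-graph}) such that determining the adjacency of some designated pair of nodes in $H$ requires a conditioning set of size linear in $n$; wire $H$ so that the only way PC/SD can orient the edge incident to $X$ is by first orienting edges inside $H$ and propagating via Meek's rules. Second, attach to $X$ the small ECC gadget ($A \to X$-adjacent parent $P$, children of $X$, and nodes $A, B$ forming the collider) in such a way that it is ``causally upstream'' and sparse, so LDECC's PC-ordered schedule reaches the relevant small-conditioning-set test quickly. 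Third, verify the LDECC side: show that $\mathrm{FindMarkovBlanket}$ and $\mathrm{LocalPC}$ around $X$ cost only $O(\mathrm{poly}(n))$ because $|\MB(X)|$ and $|\DNe(X)|$ are bounded; then show that LDECC's for-loop over PC tests encounters $A \indep B \mid \S$ with small $\S$ within $O(\mathrm{poly}(n))$ tests, triggers the ECC, calls $\mathrm{GetMNS}$ (which costs $O(2^{|\DNe(X)|}) = O(1)$), marks $P$ as a parent, orients all remaining neighbors of $X$, and terminates — total $T_{\text{LDECC}} = O(\mathrm{poly}(n))$. Fourth, verify the PC and SD side: argue that both algorithms, in order to orient the neighbor of $X$ (SD must do this before it is allowed to terminate; PC must do it to produce the CPDAG), are forced to run the skeleton phase on the dense gadget, and that this entails at least one CI test conditioning on every subset of a set of size $\Omega(n)$, giving $T_{\text{PC}}, T_{\text{SD}} \geq \Theta(2^{|\V|})$; combined with $T_{\text{LDECC}} = O(\mathrm{poly}(n)) = o(2^{|\V|})$ this yields $T_{\text{LDECC}} + \Theta(2^{|\V|}) \leq T_{\text{PC}}$ and $\leq T_{\text{SD}}$.

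**The main obstacle** I expect is the SD side rather than the PC side. PC's exponential blowup on a dense skeleton is standard, but SD only explores the graph \emph{locally} starting from $X$ and terminates the moment all neighbors of $X$ are oriented — so I must design the graph so that SD's local expansion \emph{necessarily} drags it into the dense gadget before it can orient the neighbor of $X$. Concretely, I would make the unshielded collider that SD needs in order to orient $P$ live ``behind'' the dense region: the only UC whose Meek-rule propagation reaches the $X$--$P$ edge is one involving nodes deep inside $H$, and SD's sequential $\mathrm{LocalPC}$ calls, starting from $X$ and its neighbors, must visit those nodes and run $\mathrm{LocalPC}$ on them, incurring the $\Omega(n)$-sized conditioning sets. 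I also need to double-check that no \emph{short-cut} orientation exists: e.g.\ that $X$ has no child that gets oriented ``for free'' via a nearby UC in a way that would let SD (or LDECC) terminate early on the wrong side of my argument, and that the ECC gadget doesn't accidentally let SD orient $P$ cheaply too. Handling these ``no shortcut'' verifications carefully — essentially checking all the ways Meek's rules and UC detection could fire — is the delicate part; the rest is bookkeeping about conditioning-set sizes.
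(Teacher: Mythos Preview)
Your proposal is correct and essentially matches the paper's construction (Fig.~\ref{fig:ldecc-better-graph}): a single unshielded collider $A \to C \leftarrow B$ sits \emph{behind} a clique of size $N \asymp |\V|$ on the path from $C$ to the parent $W$ of $X$, so SD (expanding outward from $X$) must perform $\Theta(2^N)$ tests traversing the clique before it can reach the UC, while LDECC --- running tests in PC order by conditioning-set size --- unshields $A\text{---}B$ with a small separating set, fires the ECC, computes $\nss_X(A)=\nss_X(B)=\{W\}$, and terminates in $O(|\V|^2)$ tests. Your worry that a separate ``ECC gadget'' might accidentally give SD a shortcut dissolves once you use the \emph{same} UC for both roles: there is no second, nearby collider for SD to find, so the only route to orienting $W$ via Meek's rules goes through the clique.
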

\begin{proof}
Consider the class of graphs shown in Fig.~\ref{fig:ldecc-better-graph}
where there is a clique of size $N$ 
on the path from the UC $A \rightarrow C \leftarrow B$ to $W$.
Both PC and SD will perform $\Theta(2^{N})$ CI tests
due to this clique. 
Thus, when $N \asymp |\V|$, SD is exponential time.
By contrast, since LDECC runs tests in the same order as PC,
it unshields the UC, 
orients $W$ as a parent via an ECC, and terminates 
in $\OM(|\V|^2)$ tests.
\end{proof}
Qualitatively, the result shows that if there is a
dense region between a UC and the parent it orients,
SD might perform poorly because it has to wade through
this dense region to get to the UC.
By contrast, LDECC can avoid the CI tests in these dense regions 
as it uses an ECC to orient parents of $X$
instead of the skeleton.
However, LDECC does \emph{not} uniformly dominate SD.
There are classes of
DAGs where LDECC performs 
exponentially more CI tests than SD:
\begin{proposition}[SD exponentially better]\label{prop:sd-exp-better}
Let $\U$ be the set of UCs in $\GM^*$
and $M = \max_{U \in \U} \text{sep}(U)$.
There exist classes of graphs such that $T_{\text{SD}} + \Theta(2^{M-1}) \leq T_{\text{PC}};$ and $T_{\text{SD}} + \Theta(2^{M-1}) \leq T_{\text{LDECC}}$.
\end{proposition}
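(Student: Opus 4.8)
The plan is to exhibit, for each $M$, a DAG $\GM^*_M$ over $\OM(M)$ nodes — the class in Fig.~\ref{fig:sd-better-graph} — built from two disjoint pieces. The first is a small, bounded-degree neighborhood of $X$ in which one neighbor of $X$ stays undirected in the CPDAG; concretely one can take $X$ with an edge $W \text{---} X$ together with an unshielded collider $X \to Y \leftarrow D$ forcing $X \to Y$, and nothing else, so that $W \text{---} X$ remains undirected. The second piece is a ``gadget'' realizing a single unshielded collider $U_0 = (P \to R \leftarrow Q)$ whose separators all have size $M$: add $M$ fresh nodes $Z_1,\dots,Z_M$ with $Z_i \to P$ and $Z_i \to Q$, and $P \to R \leftarrow Q$, so that $P \indep Q \mid \{Z_1,\dots,Z_M\}$ but no strictly smaller set works (the $R$-path is a collider path, and each fork $P \leftarrow Z_i \to Q$ can only be blocked at $Z_i$). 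Every other UC has $\text{sep} = \OM(1)$, so $\max_{U \in \U}\text{sep}(U) = M$.

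First I would bound $T_{\text{SD}}$. Since SD grows its frontier by breadth-first search from $X$ and $X$'s connected component is exactly the small first piece, SD runs \emph{LocalPC} only on those $\OM(1)$ nodes; because no neighbor of $X$ is ever oriented, SD halts only when its queue empties, after $\OM(|\V|^2)$ CI tests, and --- crucially --- it never issues any test inside the gadget, whose nodes are never enqueued. Hence $T_{\text{SD}} = \OM(|\V|^2)$.

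Next I would lower-bound $T_{\text{PC}}$ and $T_{\text{LDECC}}$ by the cost of deleting the non-edge $P \text{---} Q$. Because $R$ is a collider, no separator of $P$ and $Q$ may contain $R$ or its descendants, and every $Z_i$ is forced; so PC's skeleton phase deletes $P \text{---} Q$ only upon testing a conditioning set of size $M$, with $\{Z_1,\dots,Z_M,R\} \subseteq \mathrm{adj}(P)$ throughout, forcing at least $\sum_{s=0}^{M-1}\binom{M+1}{s} \ge 2^{M-1}$ tests on this pair alone; thus $T_{\text{PC}} \ge T_{\text{SD}} + \Theta(2^{M-1})$. For LDECC, its main loop runs \emph{PCTest} on the fully connected graph over all of $\V$, hence generates those same size-$0,\dots,M$ tests, and it suffices to show LDECC never breaks out of the loop before reaching size $M$ --- i.e.\ that $W$ is never placed in \texttt{parents} or \texttt{children}. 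This holds because $W \text{---} X$ is undirected in the CPDAG: there is no UC at $X$ (so case~(i) never fires), the non-collider / \emph{UCChildren} / Meek-3--4 steps only add children, and no ECC can mark $W$ a parent --- if it did, then by Prop.~\ref{prop:ecc} and the fact that d-separation statements are shared across a Markov equivalence class, $W$ would be a parent of $X$ in \emph{every} DAG of the MEC, contradicting that some member has $X \to W$. Therefore \texttt{unoriented} always contains $W$, LDECC executes the full \emph{PCTest} including the gadget's $\ge 2^{M-1}$ tests, and $T_{\text{LDECC}} \ge T_{\text{SD}} + \Theta(2^{M-1})$.

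The hard part is precisely this LDECC step: one must certify that none of LDECC's orientation mechanisms can be triggered cheaply by the local piece of $\GM^*_M$, so that LDECC is genuinely forced to march through the gadget in global size order --- the mirror image of the argument in Prop.~\ref{prop:ldecc-exp-better} that SD cannot skip a dense region, and it rests on a clean choice of local structure in which one edge at $X$ is provably undirected in the CPDAG. A secondary, routine step is to check that the gadget creates no accidental separator of $P,Q$ of size $<M$ (e.g.\ by keeping $R$ childless and the $Z_i$ otherwise isolated) and no UC elsewhere whose $\text{sep}$ would let PC or LDECC finish before size $M$.
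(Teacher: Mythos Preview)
Your construction is correct and proves the proposition, but it is \emph{not} the construction of Fig.~\ref{fig:sd-better-graph} and differs substantially from the paper's argument. The paper builds a single \emph{connected}, locally orientable graph: the UC $A\to C\leftarrow B$ with $\text{sep}=M$ lies on a path to a parent of $X$ (so it actually orients that parent), and a clique of size $N\ge M$ sits \emph{upstream} of this UC. The $\Theta(2^{M-1})$ extra tests that PC/LDECC incur are the within-clique tests $V_i\indep V_j\mid \S$ for $|\S|<M$, which are all executed in size order before the UC is unshielded; SD, moving outward from $X$, reaches the UC, orients $X$'s neighbor, and halts before ever entering the clique. Your construction instead uses two disconnected components: a tiny local piece at $X$ where $W\text{---}X$ is provably undirected in the CPDAG, and a separate gadget whose only role is to make the non-edge $P\text{---}Q$ expensive to delete; the $\Theta(2^{M-1})$ tests you count are the $(P,Q)$ separator tests themselves, not clique tests.

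What each approach buys: your route is more elementary and yields a sharper gap, since $T_{\text{SD}}=\OM(|\V|)$ is genuinely polynomial (SD's queue simply empties after the bounded $X$-component), whereas in the paper's connected construction SD must still unshield the size-$M$ UC and so may itself pay an exponential cost---the proposition only asserts an additive $\Theta(2^{M-1})$ gap, which both constructions deliver. On the other hand, the paper's construction is connected and locally orientable, so it directly supports the qualitative takeaway that ``dense regions upstream of the relevant UC hurt LDECC but not SD''; your disconnected gadget, while valid for the existence claim, does not illustrate that phenomenon. Your soundness argument for why LDECC never orients $W$---invoking Prop.~\ref{prop:ecc} together with invariance of d-separation across the MEC---is a nice shortcut that avoids the case analysis over LDECC's orientation lines; one should also note (as you implicitly use) that none of the child-marking branches can fire for $W$ either, which follows because parents stays empty and $W\indep D$ marginally. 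Finally, drop the reference to Fig.~\ref{fig:sd-better-graph}: your graph is not that one.
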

\begin{proof}
Consider the class of graphs shown in Fig.~\ref{fig:sd-better-graph}
with a UC with a separating set of size $M$ and a clique of size $N$ upstream of that UC.
Since LDECC runs CI tests in the same order as PC, 
if $N \geq M$, LDECC performs $\Theta(2^{M-1})$
CI tests for nodes inside the clique (all tests of the form 
$V_i \indep V_j | \S$ s.t. $\S \subseteq \{ V_1, \hdots, V_N \} \setminus \{V_i, V_j\}$ and $|\S| < M$ will be run)
before the UC is unshielded via the test $A \indep B | \{ S_1, \hdots, S_M \}$.
By contrast, SD terminates before even getting to the clique,
thus avoiding these tests.
So if $M \asymp |\V|$, LDECC will perform exponentially more tests than SD.
\end{proof}
Qualitatively, this shows that if UCs have large separating sets
and there are dense regions upstream of such UCs, LDECC might
perform poorly whereas SD can avoid tests in these regions since it 
would not reach these dense regions.

Next, we present upper bounds for the computational
requirements of
LDECC and SD for a restricted class of graphs that
we call \emph{Locally Orientable Graphs}.

\begin{definition}[Locally Orientable Graph.]\label{defn:locally-orientable}
A DAG $\GM$ is said to be \emph{locally orientable} 
with respect to a node $X$ if, $\forall V \in \DNe(X)$,
the edge $X \text{---} V$ is oriented 
in the CPDAG for the MEC of $\GM$.
Since all nodes in $\DNe(X)$ can be oriented, 
the ATE is point identified, i.e., $|\Theta^*| = 1$.
\end{definition} 

\begin{definition}[Parent Orienting Collider (POC)]\label{defn:parent-orienting-collider}
Consider a DAG $\GM^*$.
For each $P \in \Pa(X)$,
let $\text{POC}(P)$ be the set of
UCs that can orient $P$, i.e., either
(i) $P$ is a part of that UC; or
(ii) applying Meek's rules from that UC in the undirected skeleton
of $\GM^*$ will orient $P \rightarrow X$.
\end{definition}

In Props.~\ref{prop:LDECC-upper-bound},\ref{prop:sd-upper-bound}, 
we provide sufficient graphical conditions for
when both the algorithms run in polynomial time.
For LDECC, we show that
the time complexity of orienting the parents depends only on
the size of the separating sets for the UCs, i.e., \emph{sep} (Prop.~\ref{prop:LDECC-upper-bound}).
Importantly, the bound
is agnostic to how far away the UCs
are from the parents.

\begin{proposition}[LDECC upper bound]\label{prop:LDECC-upper-bound}
For a locally orientable DAG $\GM^{*}$, 
let $D = \max_{V \in \MBp(X)} \\ |\DNe(V)|$ and $S = \max_{ P \in \Pa(X)} \min_{\alpha \in \text{POC}(P)} \text{sep}(\alpha)$.
Then $T_{\text{LDECC}} \leq \OM(|\V|^{\max\{ S, D \}})$.
\end{proposition}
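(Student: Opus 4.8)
The goal is to bound the total number of CI tests LDECC performs on a locally orientable graph $\GM^*$ by $\OM(|\V|^{\max\{S,D\}})$. I would decompose $T_{\text{LDECC}}$ into four groups according to the algorithm's phases: (1) finding the Markov blanket $\MB(X)$ on Line~\ref{alg-ldecc:find-MB-x}; (2) pruning $\MB(X)$ to $\DNe(X)$ via \emph{LocalPC} and the subsequent \emph{UCChildren} call (Lines~\ref{alg-ldecc:find-Ne-x}--\ref{alg-ldecc:find-children-paofch}); (3) the PC-style CI tests in the main for-loop (Line~\ref{alg-ldecc:for-loop-pc-tests}), which run until all of $X$'s neighbors are oriented; and (4) the \emph{GetMNS}/\emph{ECCParents} calls triggered inside the loop. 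The crux is to show that the loop in (3) terminates after conditioning sets of size at most $S$, and that each of the other phases costs at most $\OM(|\V|^{\max\{S,D\}})$.

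\textbf{Key steps.} First I would handle phases (1) and (2): IAMB's cost is polynomial in $|\V|$ and the number of tests \emph{LocalPC}$(\MBp(X),X)$ and \emph{UCChildren} perform is governed by $|\DNe(V)|$ for $V$ in a neighborhood of $X$, which is bounded by $D$ — giving $\OM(|\V|^{D})$ for these phases (here I would invoke the degree bound $D = \max_{V \in \MBp(X)}|\DNe(V)|$ and count subsets of $\DNe(V)$, of which there are $\OM(2^{D}) \le \OM(|\V|^{D})$ only when $D \le \log|\V|$; more carefully, the number of size-$\le D$ conditioning sets drawn from a neighbor set of size $\le D$ is $2^D$, and one multiplies by $\OM(|\V|)$ for the choice of endpoints, so this is $\OM(|\V| \cdot 2^{D})$, which I will fold into $\OM(|\V|^{\max\{S,D\}})$ by noting $2^D \le |\V|^{D-1}$ when $D\ge1$ — actually the cleanest route is to state the bound as $\OM(|\V|^{\max\{S,D\}})$ absorbing these polynomial-and-$2^D$ factors, exactly as the earlier Prop.~\ref{prop:num-tests-pc-vs-ldecc} does). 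The heart of the argument is phase (3). Since LDECC runs CI tests in the same order as PC, the for-loop proceeds by increasing conditioning-set size $s = 0, 1, 2, \dots$. I claim that by the time PC has exhausted all conditioning sets of size $S$, every parent $P \in \Pa(X)$ has been oriented by LDECC. Indeed, fix $P \in \Pa(X)$ and let $\alpha = (P' \to R \leftarrow Q') \in \text{POC}(P)$ achieve the minimum $\text{sep}(\alpha) \le S$. When PC reaches conditioning-set size $\text{sep}(\alpha)$, it runs the test $P' \indep Q' \mid \text{Sepset}(\alpha)$ that unshields $\alpha$. Now I split into cases matching Definition~\ref{defn:parent-orienting-collider}: if $P$ is itself part of the UC $\alpha$, then either $R = X$ (so $P \in \DNe(X)$, and the Line~\ref{alg-ldecc:if-cond-nbr-uc} branch marks $P$ a parent), or $R \ne X$ and the collider $\alpha$ triggers the ECC branch (Line~\ref{alg-ldecc:if-cond-ecc}) — here I would verify the ECC hypotheses of Prop.~\ref{prop:ecc} are met (the separating set $\text{Sepset}(\alpha)$ does not contain $X$ because $\alpha$ orients $P\to X$ via Meek and $X$ lies "below", or more simply because adding $X$ re-opens the path through $R$), so \emph{GetMNS} returns $\mns_X(P') \subseteq \Pa(X)$ and $\mns_X(Q') \subseteq \Pa(X)$, and by locally-orientability these MNS sets collectively pin down the parents that Meek's rules would orient. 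If instead $P$ is oriented only by propagating Meek's rules from $\alpha$, I would argue inductively along the Meek-rule derivation that each intermediate orientation is reproduced by LDECC: the non-collider facts come from the Line~\ref{alg-ldecc:if-cond-non-coll} branch (when PC finds $A \indep B \mid \S$ with $X \in \S$, which also happens at conditioning-set size $\le S$ in a locally orientable graph — this needs care), and Lines~\ref{alg-ldecc:mark-child-using-non-coll-for-loop}--\ref{alg-ldecc:mark-child-using-non-coll} together with Lines~\ref{alg-ldecc:meek-rule-3}, \ref{alg-ldecc:meek-rule-4-begin}--\ref{alg-ldecc:meek-rule-4-end} replay Meek's rules 1--4 restricted to $X$'s neighbors. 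Once every $P \in \Pa(X)$ is marked and every neighbor that is a child has been marked via the non-collider propagation, the loop's termination check fires, so the loop never advances past conditioning-set size $\max\{S, D\}$; hence phase (3) costs $\OM(|\V|^{\max\{S,D\}})$ (the $D$ enters because children of $X$ inside $\MB(X)$ may need conditioning sets up to size $D$ to be recognized). Phase (4): each \emph{GetMNS}$(V)$ call searches subsets of $\DNe(X)$ of increasing size, costing $\OM(2^{|\DNe(X)|}) \le \OM(|\V|^{D})$, and it is invoked $\OM(|\V|)$ times, contributing $\OM(|\V|^{\max\{S,D\}})$. Summing the four phases gives the claim.

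\textbf{Main obstacle.} The delicate part is proving that in a locally orientable graph the main for-loop really does orient \emph{all} of $X$'s neighbors before conditioning sets grow beyond $\max\{S,D\}$ — in particular, reproducing the full force of Meek's rules using only the local bookkeeping in Lines~\ref{alg-ldecc:meek-rule-3}--\ref{alg-ldecc:meek-rule-4-end} and \ref{alg-ldecc:mark-child-using-non-coll-for-loop}--\ref{alg-ldecc:mark-child-using-non-coll}, and confirming that the relevant unshielding tests and non-collider-witnessing tests ($A \indep B \mid \S$ with $X \in \S$) are all among PC's size-$\le S$ tests rather than requiring larger conditioning sets. This requires a careful argument that in a locally orientable graph, for every $P\in\Pa(X)$ there is a ``short'' certificate (a POC with small \emph{sep} plus a bounded Meek derivation), which is essentially a structural lemma about CPDAGs; I would expect this to be the step consuming most of the write-up, and I would likely factor it into an auxiliary lemma proved by induction on the length of the Meek-rule chain, leveraging Prop.~\ref{prop:ecc} and Prop.~\ref{prop:valid-nss-non-desc} at each collider. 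The remaining bookkeeping — bounding phases (1), (2), (4) and assembling the sum — is routine counting of the form already used in Prop.~\ref{prop:num-tests-pc-vs-ldecc}.
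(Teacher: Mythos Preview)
Your proposal is correct and follows essentially the same decomposition as the paper's proof: bound the cost of discovering $\DNe(X)$ and the non-colliders $A\text{---}X\text{---}B$ by $\OM(|\V|^{D})$, and bound the cost of unshielding the parent-orienting UCs by $\OM(|\V|^{S})$. The paper's proof is a three-sentence sketch that does not dwell on your ``main obstacle'' at all---it simply relies on Theorem~\ref{thm:ldecc-correctness} (correctness under the CFA) to guarantee that once the relevant UCs are unshielded and the non-colliders at $X$ detected, all of $X$'s neighbors are oriented, so you need not reprove the Meek-rule reproduction here.
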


For SD, we show that
the time complexity of orienting a parent is upper bounded
by (i) the \emph{sep} of the closest UC 
that orients that parent and 
(ii) the complexity of discovering the neighbors 
of nodes on the path to that collider (Prop.~\ref{prop:sd-upper-bound}).
Importantly, this bound is agnostic to the structure of the nodes
that are upstream of the closest colliders.

\begin{proposition}[SD upper bound]\label{prop:sd-upper-bound}
For a locally orientable DAG $\GM^{*}$,
let $\pi : \V \rightarrow \mathbb{N}$ be
the order in which nodes are processed by SD
(Line~\ref{alg:sd-queue-pop} of SD).
For $P \in \Pa(X)$, let
$\CUC(P) = \argmin_{\alpha \in \text{POC}(P)}\pi(\alpha)$
denote the closest UC
to $P$.
Let $C = \max_{ P \in \Pa(X)} \text{sep}(\CUC(P))$,
$D = \max_{V \in \MBp(X)} |\DNe(V)|$, and
$E = \max_{ \{ V : \pi(V) < \pi(\CUC(P)) \} } |\DNe(V)|$.
Then $T_{\text{SD}} \leq \OM(|\V|^{\max\{C, D, E \}})$.
\end{proposition}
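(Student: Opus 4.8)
The plan is to bound separately the cost SD incurs (i) before it can orient each parent $P \in \Pa(X)$, and (ii) while processing the queue up to that point, and then take a maximum over all parents. First I would recall that SD processes nodes in the fixed order $\pi$ by repeatedly calling \emph{LocalPC} and, after each call, running \emph{GetCPDAG} on the subgraph discovered so far. Since $\GM^*$ is locally orientable, every $P \in \Pa(X)$ is eventually oriented as a parent, and by Definition~\ref{defn:parent-orienting-collider} this happens because some UC $\alpha \in \text{POC}(P)$ has been fully discovered (its two endpoints, the collider node, and enough of the skeleton for Meek's rules to propagate to $P \rightarrow X$). The key observation is that SD terminates the moment \emph{all} neighbors of $X$ are oriented, so it suffices to show that for each $P$, the closest such UC $\CUC(P)$ — the one minimizing $\pi$ among $\text{POC}(P)$ — gets discovered within $\OM(|\V|^{\max\{C,D,E\}})$ tests; SD stops no later than when the last parent is oriented this way.

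The next step is to account the cost of reaching and unshielding $\CUC(P)$. I would argue that \emph{LocalPC} is called only on nodes $V$ with $\pi(V) \le \pi(\CUC(P))$ before $P$ is oriented, and each such call runs $\OM(|\V|^{|\DNe(V)|})$ tests (iterating over subsets $\S \subseteq \DNe_{\UM}(V)$, whose sizes never exceed $|\DNe(V)|$ once the skeleton is correct, with at most $|\V|$ candidate endpoints $B$). This is bounded by $\OM(|\V|^{E})$ per call and $\OM(|\V| \cdot |\V|^E) = \OM(|\V|^{E+1})$ over all such calls — but more carefully, to actually \emph{unshield} the UC $\alpha = (R_1 \rightarrow R \leftarrow R_2)$ one needs the test $R_1 \indep R_2 \mid \Sepset$ with $|\Sepset| = \text{sep}(\alpha) \le C$; inside the relevant \emph{LocalPC} call this costs $\OM(|\V|^{C})$ once the conditioning candidates include $\Sepset$, which holds because $\Sepset$ can be taken within the neighbor set being processed. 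So the orientation of $P$ is completed after $\OM(|\V|^{\max\{C,E\}})$ tests attributable to the path-to-collider phase. Finally, the step at Line~\ref{alg-ldecc:find-Ne-x}-analogue in SD where neighbors of $X$ itself (and of nodes in $\MBp(X)$, via the first few queue pops) are found contributes $\OM(|\V|^{D})$; combining, the total before every parent is oriented — and hence before SD halts — is $\OM(|\V|^{\max\{C,D,E\}})$, and the $\Theta_{\text{SD}}$-computation loop at the end adds only $\OM(2^{|\DNe(X)|}) = \OM(|\V|^D)$ more, which is absorbed.

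The main obstacle I anticipate is the bookkeeping that ties Meek-rule propagation to the bound $C = \max_P \text{sep}(\CUC(P))$ rather than to the full separating-set sizes encountered along the path: one must verify that once $\CUC(P)$ is unshielded and the intervening skeleton among processed nodes is correct, Meek's rules orient $P \rightarrow X$ \emph{without} SD needing to have explored anything upstream of $\CUC(P)$ — this is where "agnostic to the structure upstream of the closest colliders" is doing real work, and it relies on $\GM^*$ being locally orientable so that the orientation chain from $\CUC(P)$ to $X$ lies entirely within the already-discovered region $\UM[\text{done}]$. A secondary subtlety is confirming that \emph{LocalPC} run on a node $V$ only ever conditions on subsets of $\DNe(V)$ (so the exponent is $|\DNe(V)|$, not something larger) even while the working graph $\UM$ still has spurious edges; this follows because \emph{LocalPC} restricts $\S \subseteq \DNe_{\UM}(V)$ and, after earlier pruning steps, $\DNe_{\UM}(V)$ is a superset of the true neighbor set only by nodes that get removed at separating-set size $\le |\DNe(V)|$ — a monotonicity argument I would spell out in the appendix. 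Everything else is routine subset-counting.
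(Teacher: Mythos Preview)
Your proposal is correct and follows the same three-way decomposition as the paper's (very terse) proof: $\OM(|\V|^D)$ for discovering the local structure around $X$, $\OM(|\V|^E)$ for the \emph{LocalPC} calls on nodes processed en route to each $\CUC(P)$, and $\OM(|\V|^C)$ to unshield the closest UCs. The two subtleties you flag (Meek propagation staying within $\UM[\text{done}]$ and bounding the \emph{LocalPC} exponent by the true $|\DNe(V)|$ despite spurious edges in the working skeleton) are simply glossed over in the paper, so your more careful bookkeeping is an elaboration of the same argument rather than a different route.
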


\begin{remark}\label{remark:hybrid-algorithm}
The following procedure
demonstrates one way that
LDECC and SD can be combined to broaden the class of graphs 
where local causal discovery can be performed efficiently:
Run both SD and LDECC in parallel and 
terminate when either algorithm terminates.
The number of CI tests performed will be 
$T_{\text{combined}} \leq 2 \min \left\{ T_{\text{SD}}, T_{\text{LDECC}}  \right\}$.
This procedure 
will be subexponential 
if at least one of the algorithms is subexponential.
\end{remark}

\subsection{Comparison of faithfulness requirements}\label{sec:faithfulness}
In this section, we show that SD and LDECC rely on
different faithfulness assumptions (Props.~\ref{prop:sd-ff},\ref{prop:ldecc-ff}).
The CFA is controversial \citep{uhler2013geometry} and
some works attempt to test or weaken it \citep{zhang2008detection, ramsey2012adjacency, zhang2016three}.
Sharing a similar motivation, we show that 
LDECC and SD can be fruitfully combined to
strictly weaken the faithfulness assumptions needed to identify $\Theta^*$ (Prop.~\ref{prop:hybrid-faithfulness}).
We defer the proofs to Appendix~\ref{sec:apdx-faithfulness}.

\begin{definition}[Adjacency Faithfulness (AF)]
Given a DAG $\GM(\V, \E)$, AF holds for node $A \in \V$ iff $\,\forall A \text{---} B \in \E$, we have $\forall \S \subseteq \V \setminus \{A, B\}, \,\, A \notindep B | \S$.
\end{definition}

\begin{definition}[Orientation Faithfulness (OF)]
Given a DAG $\GM(\V, \E)$, OF holds for an unshielded triple $A \text{---} C \text{---} B$ (where $A \text{---} B \notin E$) iff
(i) when it is a UC,
$A \notindep B | \S$ for any $\S \subseteq \V \setminus \{A, B\}$ s.t. $C \in \S$; and
(ii) when it
is a non-collider, 
$A \notindep B | \S$ for any $\S \subseteq V \setminus \{A, B\}$ s.t. $C \notin \S$.
\end{definition}

\begin{definition}[Local Faithfulness (LF)]
Given a DAG $\GM(\V, \E)$,
LF holds for node $X$ iff 
(i) AF holds for $X$; and
(ii) OF holds for every unshielded triple $A \text{---} X \text{---} B \in \GM$.
\end{definition}

\begin{proposition}\label{prop:pc-mec}
PC will identify the MEC of $\GM^*$ if LF holds for all nodes.
\end{proposition}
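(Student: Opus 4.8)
The plan is to show that Local Faithfulness (LF) holding at every node suffices to carry through all three phases of the PC algorithm: skeleton estimation, unshielded-collider orientation, and Meek-rule propagation. I would first observe that LF holding at every node $A$ implies, in particular, that Adjacency Faithfulness (AF) holds at every node, and that Orientation Faithfulness (OF) holds for \emph{every} unshielded triple in $\GM^*$ (since for any unshielded triple $A \text{---} C \text{---} B$, the triple is of the form $A \text{---} C \text{---} B$ with $C$ as the middle node, and LF at $C$ requires OF for every unshielded triple centered at $C$). So the hypothesis ``LF holds for all nodes'' is equivalent to ``AF and OF hold (globally).'' The proof then reduces to the known fact (essentially \citet{ramsey2012adjacency}, see also \citet{spirtes2000causation}) that PC is sound and complete under AF + OF rather than full faithfulness.

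The key steps, in order, are as follows. \emph{(Step 1: Skeleton.)} Show PC recovers the correct skeleton of $\GM^*$. For a non-adjacent pair $A, B$, $d$-separation guarantees some $\S \subseteq \DNe(A)$ or $\S \subseteq \DNe(B)$ with $A \indep B \mid \S$, so PC will find it and remove the edge --- this direction needs no faithfulness at all, just the soundness of $d$-separation. For an adjacent pair $A \text{---} B \in \E$, AF at $A$ (equivalently at $B$) guarantees $A \notindep B \mid \S$ for \emph{every} $\S$, so PC never removes a true edge. Hence the estimated skeleton equals the true skeleton, and moreover for every removed pair the stored separating set $\Sepset(A,B)$ witnesses $A \indep B$. \emph{(Step 2: Unshielded colliders.)} For each unshielded triple $A \text{---} C \text{---} B$ in the (correct) skeleton: if it is a collider in $\GM^*$, then $C \notin \Sepset(A,B)$ because OF(i) says $A \notindep B \mid \S$ whenever $C \in \S$, so PC orients $A \rightarrow C \leftarrow B$ correctly; if it is a non-collider, OF(ii) says $A \notindep B \mid \S$ whenever $C \notin \S$, so every valid separating set contains $C$ and PC does \emph{not} orient it as a collider. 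Thus PC orients exactly the true unshielded colliders. \emph{(Step 3: Meek's rules.)} Once the skeleton and all unshielded colliders are correct, Meek's rules \citep{meek2013causal} are purely graphical (sound and complete for producing the CPDAG from this information) and involve no further CI tests, so the output is exactly the CPDAG of the MEC of $\GM^*$.

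The main obstacle --- and the only place real care is needed --- is Step 2, specifically confirming that PC's actual bookkeeping behavior matches the idealized argument: PC stores the \emph{particular} separating set it happened to find during skeleton search, and one must check that OF makes the collider/non-collider decision correct regardless of \emph{which} valid separating set was stored. This is exactly what the two clauses of OF are designed to ensure (clause (i) handles colliders: \emph{no} separating set contains $C$; clause (ii) handles non-colliders: \emph{every} separating set contains $C$), so the argument goes through, but it is worth stating explicitly that the conclusion is insensitive to the search order. A minor secondary point is to note that AF/OF at nodes outside $\DNe^+(X)$ is genuinely used --- the claim is about recovering the \emph{entire} MEC, not just the local structure at $X$ --- which is why the hypothesis quantifies over all nodes; this also foreshadows the later observation (Props.~\ref{prop:sd-ff},\ref{prop:ldecc-ff}) that weaker, more localized faithfulness conditions suffice when one only wants $\Theta^*$.
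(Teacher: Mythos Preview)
Your proposal is correct and takes essentially the same approach as the paper: both observe that LF at every node unpacks to AF at every node plus OF at every unshielded triple, and then defer to the known result that AF + OF suffice for PC to recover the MEC. The paper's proof is terser---it simply cites \citet{zhang2008detection} and summarizes the roles of AF (skeleton) and OF (colliders and Meek propagation) in two sentences---whereas you spell out the three phases and the separating-set bookkeeping in more detail, but the argument is the same.
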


If the goal is to identify $\Theta^{*}$
(and not the entire MEC), weaker faithfulness conditions than those of Prop.~\ref{prop:pc-mec} 
are sufficient for SD and PC (Prop.~\ref{prop:sd-ff}).
Let $\UM$ be the undirected graph on the nodes $\V$
such that an edge $A \text{---} B$ exists iff 
$\forall \S \subseteq \V \setminus \{A, B\}, \, A \notindep B | \S$.
Informally, $\UM$ is the skeleton of the observational distribution.
Let $J^* = \{ (A \rightarrow C \leftarrow B) \in \GM^* \}$ be the set of UCs
in the true DAG $\GM^*$.

\begin{proposition}[Faithfulness for PC and SD]\label{prop:sd-ff}
PC and SD will identify $\Theta^{*}$ if
(i) LF holds $\forall V \in \MBp(X)$;
(ii) $\forall (A \rightarrow C \leftarrow B) \in J^*$,
(a) LF holds for $A$, $B$, and $C$, and
(b) LF holds for each node on all paths $C \rightarrow \hdots \rightarrow V \in \GM^*$ 
s.t. $V \in \DNe(X)$;
(iii) For every edge $A\text{---}B \notin \GM^*$, $\exists \S \subseteq (\DNe_{\UM}(A) \cup \DNe_{\UM}(B))$ s.t. $A \indep B | \S$; and
(iv) OF holds for all unshielded triples in $\GM^*$.
\end{proposition}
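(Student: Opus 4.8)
The plan is to show that under conditions (i)--(iv), both PC and SD (a) correctly recover the skeleton $\UM$ restricted to the region they explore, (b) correctly orient every unshielded collider in that region, and (c) correctly orient every edge incident to $X$ that is oriented in the true CPDAG, so that the set of locally valid parent sets of $X$ they enumerate coincides with the one PC would produce from the true MEC; by \citet{maathuis2009estimating} this suffices to identify $\Theta^*$. First I would establish the skeleton claim: condition (iii) guarantees that for every non-adjacency $A\text{---}B\notin\GM^*$ there is a separating set drawn from the union of the $\UM$-neighborhoods of $A$ and $B$, which is exactly the pool of conditioning sets both PC and SD search over (they condition on subsets of current-graph neighbors), so no spurious edge survives; condition (i) (specifically AF at the relevant nodes) guarantees no true edge is wrongly removed, at least for the edges these algorithms actually test. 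The subtlety here is that SD only ever runs \emph{LocalPC} on $X$ and a propagating frontier of nodes, so I only need the skeleton to be correct on that explored subgraph — which is why conditions (i) and (ii) are stated locally (over $\MBp(X)$ and over paths from UCs down to $\DNe(X)$) rather than globally, and why condition (iii) is the one genuinely global hypothesis.

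Next I would handle orientation. For every UC $(A\to C\leftarrow B)\in J^*$ that the algorithm encounters, condition (ii)(a) (OF/LF at $A,B,C$) ensures it is recognized as a collider and not mistaken for a non-collider, and condition (iv) ensures the same for \emph{all} unshielded triples so that Meek's rules are applied on a correctly-labeled skeleton. I would then argue that the only UCs whose orientation matters for $X$'s incident edges are those in $\text{POC}$-type relation to some $P\in\Pa(X)$, i.e., those from which Meek-rule propagation along a directed path $C\to\cdots\to V$, $V\in\DNe(X)$, forces $P\to X$; condition (ii)(b) supplies faithfulness exactly along those paths, so the propagation is valid and terminates correctly. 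Combining: every parent of $X$ that is oriented in the true CPDAG gets oriented by the algorithm, every child likewise (via non-collider triples at $X$, handled by OF at $X$ from condition (i)), and no edge at $X$ is oriented that the true CPDAG leaves undirected (this needs the converse direction — that a wrongly-oriented edge at $X$ would require either a phantom UC, excluded by correct skeleton + condition (iv), or a faithfulness violation along a propagation path, excluded by (ii)(b)).

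Finally I would close the loop on the ATE set: the enumerated ``locally valid parent set'' family is determined entirely by (the directed edges at $X$, the undirected edges at $X$, and the non-collider constraints among undirected neighbors of $X$), all of which have just been shown to match the true CPDAG; hence $\Theta_{\text{SD}}$ (and the analogous set for PC) equals the IDA set, which is $\Theta^*$. For PC specifically the argument is the same but simpler since PC explores the whole graph, so conditions (i)--(ii) being local is only a relaxation exploited by SD; I would note that PC's correctness here follows a fortiori because every condition needed is subsumed.

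I expect the main obstacle to be the \emph{converse} orientation direction — proving that SD never orients an edge at $X$ that the true CPDAG leaves undirected. One must rule out that a faithfulness violation \emph{outside} the regions covered by conditions (i)--(ii) could, through Meek-rule chaining on the explored subgraph, illegitimately reach $X$; the resolution is that any such chain must pass through an unshielded triple or edge that \emph{is} covered (either by condition (iv), or by being on one of the directed paths in (ii)(b), or within $\MBp(X)$), because Meek propagation that affects $X$'s edges is confined to a bounded neighborhood structure — pinning down that confinement precisely, and reconciling it with SD's frontier-expansion order $\pi$, is the delicate part of the argument.
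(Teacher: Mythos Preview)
Your proposal is correct and follows essentially the same approach as the paper: establish that the recovered skeleton is a subgraph of the true skeleton (via (iii)), that the relevant UCs are detected and their orientations propagate to $X$ (via (i), (ii)), that no spurious orientations arise at $X$ (via (iii), (iv)), and then invoke the locally-valid-parent-set characterization of $\Theta^*$. The paper implements this via an exhaustive case analysis over each of the four Meek rules for both parents and children of $X$ (reusing the case structure from the proof of Theorem~\ref{thm:ldecc-correctness}), whereas you give a higher-level conceptual sketch---but the logical skeleton is the same, and you correctly flag the converse (no-spurious-orientation) direction as the delicate part, which the paper also handles only briefly.
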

\begin{proofsketch}
By Conditions~(i,ii), UCs in $\GM^*$ are 
detected and orientations are
propagated correctly to $X$. 
By Condition~(iii), the skeleton discovered by
PC and SD is a subgraph of the skeleton of $\GM^*$
and thus there are no spurious paths.
By Condition~(iv), no spurious UCs are detected.
Prop.~\ref{prop:sd-ff} is weaker because
AF can be violated
for nodes not on paths from UCs to $X$.
\end{proofsketch}

We now present sufficient faithfulness conditions for LDECC.
Let $\L = \{ (A, B, \S) : |\{A, B\} \cap \DNe(X)| \leq 1, \, \S \subseteq \V \setminus \{A, B, X\} \}$ and $H^{*} = \{ (A, B, \S) \in \L : A \indep_{\GM^*} B | \S, \, A \notindep_{\GM^*} B | \S \cup \{X\} \}$,
where $\indep_{\GM^*}$ ($\notindep_{\GM^*}$) denotes d-separation (d-connection).
Informally, $H^*$ contains the set of ECCs entailed by the true DAG $\GM^*$.
Analogously, let
$H = \{ (A, B, \S) \in \L : A \indep B | \S, \, A \notindep B | \S \cup \{X\} \}$.
Informally, $H$ contains the set of ECCs entailed by 
the observational joint distribution $\mathbb{P}(\V)$.

\begin{definition}[MNS Faithfulness (MFF)]\label{defn:mff}
Given a DAG $\GM(\V, \E)$,
MFF holds for node $V \notin \DNep(X)$ iff
(a) When $\nss_X(V)$ does \emph{not} exist, 
$\forall \S \subseteq \DNe(X)$, we have $V \notindep X | \S$; and
(b) When $\nss_X(V)$ exists,
$\forall \S \subseteq \DNe(X)$ s.t. $\S \neq \nss_X(V)$
and $V \indep X | \S$, we have $|\S| > |\nss_X(V)|$.
\end{definition}

\begin{proposition}[Faithfulness for LDECC]\label{prop:ldecc-ff}
LDECC will identify $\Theta^{*}$  
if (i) LF holds $\forall V \in \MBp(X)$;
(ii) $H \subseteq H^*$;
(iii) $\forall (A, B, \S) \in H$, MFF holds for $\{A, B\} \setminus \DNe(X)$; and
(iv) $\forall (A, B, \S) \in H^*$ s.t. there is a UC $(A \rightarrow C \leftarrow B) \in \GM^*$,
we have (a) AF holds for $A$ and $B$; and (b) $(A, B, \S) \in H$.
\end{proposition}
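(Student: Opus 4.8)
The plan is to trace through the LDECC algorithm (Fig.~\ref{fig:algo-cdud}) under the hypothesized conditions and verify that, line by line, every orientation it commits is correct (soundness) and that it orients enough of $\DNe(X)$ to recover $\Theta^*$ (completeness). Concretely, I would show that (a) the neighbor set $\DNe(X)$ and the UC-children detected in Lines~\ref{alg-ldecc:find-MB-x}--\ref{alg-ldecc:find-children-paofch} are correct, (b) every parent added in Cases (i)--(iii) of the for-loop is a genuine parent of $X$, (c) every non-collider marked in Case (ii) is genuine, and (d) the set of oriented parents is a superset of what PC/the CPDAG would orient when restricted to $\DNe(X)$, so that the locally-valid parent sets used for the final backdoor adjustment coincide with those defining $\Theta^*$. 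Condition~(i) (LF for all $V\in\MBp(X)$) is exactly what is needed for (a): it guarantees that \emph{FindMarkovBlanket}, \emph{LocalPC}, and \emph{UCChildren}—all of which only run CI tests among $\MBp(X)$ and its immediate structure—behave as under full faithfulness, so $\DNe(X)$, $\text{DSep}$, and the UC-children are recovered exactly. This part mirrors the argument already sketched for Prop.~\ref{prop:sd-ff} and for Thm.~\ref{thm:ldecc-correctness}.

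Next I would handle the ECC branch (Case (iii), Line~\ref{alg-ldecc:if-cond-ecc}), which is the heart of LDECC and where Conditions~(ii)--(iv) enter. When LDECC finds $(A,B,\S)\in H$ (an observed ECC), Condition~(ii) $H\subseteq H^*$ ensures this is a \emph{true} ECC, so Prop.~\ref{prop:ecc} applies: $A,B\notin\Desc(X)$ and $\nss_X(A),\nss_X(B)\subseteq\Pa(X)$ in $\GM^*$. Then Condition~(iii), MFF for the relevant endpoints, guarantees that \emph{GetMNS} (which searches for the smallest $\S'\subseteq\DNe(X)$ with $V\indep X\mid\S'$) actually returns $\nss_X(V)$ rather than some spurious smaller or wrong-sized separator—this is precisely what MFF rules out. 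Hence the parents added via \emph{ECCParents} are sound. For soundness of Cases (i) and (ii): when $A,B\in\DNe(X)$ and $A\indep B\mid\S$, I need that $X\notin\S$ forces a UC $A\to X\leftarrow B$ and $X\in\S$ forces a non-collider; under OF for the triple $A\text{---}X\text{---}B$ (included in LF for $X$, Condition~(i)) together with the correctness of $\DNe(X)$, both implications hold. Line~\ref{alg-ldecc:meek-rule-3} and the Meek-rule-3/4 bookkeeping in Lines~\ref{alg-ldecc:meek-rule-4-begin}--\ref{alg-ldecc:meek-rule-4-end} and the non-collider-to-child propagation in Lines~\ref{alg-ldecc:mark-child-using-non-coll-for-loop}--\ref{alg-ldecc:mark-child-using-non-coll} are then verified as faithful implementations of the corresponding Meek rules, using only CIs among $\DNe(X)$ and the already-established facts.

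For completeness—that LDECC orients \emph{every} neighbor the CPDAG does—I would argue contrapositively: take any $P\in\Pa(X)$ that is oriented in the CPDAG of $\GM^*$. Either $P$ lies on a UC of the form $P\to X\leftarrow Q$, which is detected in Case (i) provided the relevant separating test fires; or $P$ is oriented by propagating Meek's rules from some UC $\alpha$. In the latter case, the UC $\alpha=(A\to C\leftarrow B)$ induces, by faithfulness of $\GM^*$, an ECC $(A,B,\S)\in H^*$ with $\S=\text{sep}(\alpha)$ (here Condition~(iv)(a), AF for $A$ and $B$, ensures $A\text{---}B\notin\UM$ so the pair is genuinely separated and a candidate for Case (iii)); Condition~(iv)(b) then puts $(A,B,\S)\in H$, so LDECC \emph{does} execute the ECC, and by Prop.~\ref{prop:ecc} plus MFF it recovers $\nss_X(A)\cup\nss_X(B)\ni$ (the ancestors of $X$ through which the orientation propagates), which I would show contains $P$ by the same graph-surgery argument used in the proof of Prop.~\ref{prop:ecc} / Thm.~\ref{thm:ldecc-correctness}. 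Since all CPDAG-oriented neighbors get oriented, the set $\text{unoriented}\cup\text{parents}$ at termination yields exactly the CPDAG-consistent locally-valid parent sets, and \emph{isLocallyValid} filtering reproduces $\Theta^*$ by Maathuis et al.'s IDA characterization.

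The main obstacle I anticipate is the completeness half—specifically, showing that the ECCs in $H$ are \emph{rich enough} to propagate every Meek-rule orientation that the full CPDAG would make, under conditions that deliberately drop global faithfulness. One must confirm that for each such orientation there is \emph{some} witnessing UC whose induced ECC survives in $H$ (not merely in $H^*$), which is exactly why Condition~(iv)(b) is stated with the existential over $(A,B,\S)\in H^*$ carrying a UC; pinning down that the MNS returned by \emph{GetMNS} composes correctly along a chain of such witnesses—i.e., that the union of recovered MNS sets over all fired ECCs equals $\Pa(X)\cap(\text{CPDAG-oriented parents})$—requires care, and is the place where I would expect to invoke Prop.~\ref{prop:valid-nss-non-desc} and a small auxiliary lemma about how $\nss_X$ behaves along directed paths from a UC to $X$. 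A secondary subtlety is verifying that no \emph{spurious} non-collider or child is marked when AF fails somewhere outside the protected sets; this is handled because Cases (ii) and the child-propagation step only use CIs among $\DNe(X)$, which Condition~(i) fully protects.
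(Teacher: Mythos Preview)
Your proposal is essentially correct and follows the same strategy as the paper: Condition~(i) certifies all the purely local steps (Lines~\ref{alg-ldecc:find-MB-x}--\ref{alg-ldecc:find-children-paofch}, the Case~(i)/(ii) branches, and the Meek-rule bookkeeping on $\DNe(X)$); Conditions~(ii)--(iii) make every fired ECC sound via Prop.~\ref{prop:ecc} plus MFF; and Condition~(iv) guarantees that for each relevant UC an ECC actually fires. The paper executes this by replaying the Meek-rule-by-Meek-rule case analysis of Theorem~\ref{thm:ldecc-correctness} under the weakened hypotheses, which is what your sketch defers to.

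One small point to tighten in your completeness paragraph: your dichotomy ``either $P$ is part of a UC at $X$, or $P$ is oriented from a remote UC via an ECC'' does not cover the Meek-rule-3 case. There the orienting UC is $W_1 \to X \leftarrow W_2$ with $W_1,W_2\in\DNe(X)$, so $(W_1,W_2,\S)\notin\L$ and hence is not in $H^*$; the parent $W$ is instead picked up by Line~\ref{alg-ldecc:meek-rule-3} ($\S\cap\DNe(X)$), whose correctness and firing rely only on Condition~(i) (OF for the triples $W_1\text{---}X\text{---}W_2$ and $W_1\text{---}W\text{---}W_2$, both inside $\MBp(X)$). You already list Line~\ref{alg-ldecc:meek-rule-3} on the soundness side; just make sure your completeness argument routes the Meek-rule-3 parents through it rather than through an ECC.
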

\begin{proofsketch}
By Condition~(i), the structure within $\MBp(X)$ is correctly discovered.
By Condition~(ii), every ECC run by LDECC will be valid (since it is also part of $H^*$),
and together with Condition~(iii), every ECC will mark the correct parents.
By Condition~(iv), an ECC will be run for every UC that can
orient parents of $X$, ensuring that all parents get oriented:
(iv)(a) ensures that the UC gets unshielded, and 
(iv)(b) ensures that an ECC is run in Line~\ref{alg-ldecc:if-cond-ecc-mark} after the UC is unshielded.
\end{proofsketch}

\begin{remark}\label{remark:weaker-ff-ldecc-check-true}
In LDECC, if we run \emph{ECCParents} with 
\emph{check=True}
where we mark parents with an ECC only if $\text{GetMNS}(A) = \text{GetMNS}(B)$ (Lines~\ref{alg-ecc-par:check-start}--\ref{alg-ecc-par:check-end} in Fig.~\ref{fig:algo-ldecc-functions}),
then LDECC requires weaker assumptions than Prop.~\ref{prop:ldecc-ff} (see Prop.~\ref{prop:apdx-ldecc-weaker-ff} in Appendix~\ref{sec:apdx-faithfulness})
and does better empirically (Sec.~\ref{sec:experiments}).
\end{remark}

We demonstrate that SD and LDECC rely on
different faithfulness assumptions: 
There are faithfulness violations that affect one algorithm
but not the other.
This motivates a procedure that outputs a (conservative)
ATE set containing the values in $\Theta^*$ under weaker assumptions (Prop.~\ref{prop:conservative-ate}).

\begin{figure}
\centering
\subfigure[DAG with unfaithfulness]{\hspace{1em}\includegraphics[scale=0.42]{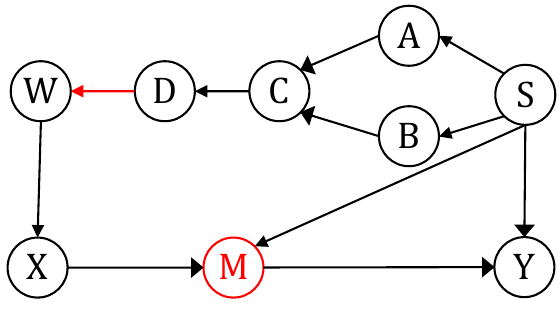}\label{fig:ff-violation}\hspace{1em}}
\hfill
\subfigure[Testable violation for SD]{\hspace{2.2em}\includegraphics[scale=0.42]{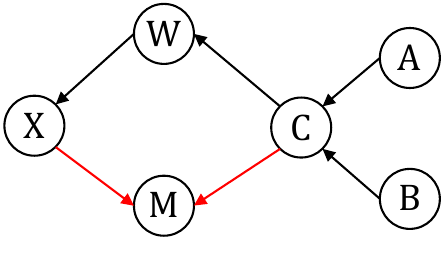}\label{fig:ff-violation-testable-sd}\hspace{2.2em}}
\hfill
\subfigure[MFF fails for nodes $A, B$]{\hspace{1.5em}\includegraphics[scale=0.42]{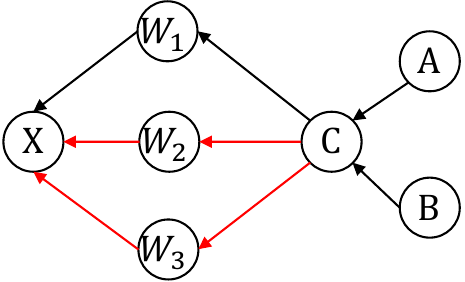}\label{fig:mff-fail-node-A}\hspace{1.5em}}
\caption{Different faithfulness violations.}
\end{figure}

\begin{example}[SD incorrect]
Consider the faithfulness violation $W \indep D | M$ for the DAG in Fig.~\ref{fig:ff-violation}
(conditioning on the collider $M$ cancels out the $W \leftarrow D$ edge).
Thus, the edge $W \leftarrow D$ might be incorrectly removed
and both SD and PC will not mark $W$ as a parent
since the orientations will not be propagated to $W$.
However, LDECC can still correctly mark $W$ as a
parent using an ECC.
\end{example}

\begin{example}[LDECC incorrect]\label{example:ldecc-violation}
Consider the faithfulness violations $A \indep X | M$ and $B \indep X | M$
for the DAG in Fig.~\ref{fig:ff-violation}.
We can incorrectly find $\nss_X(A) = \nss_X(B) = \{ M \}$
(if Line~\ref{alg-get-mns:check-subsets} in \emph{GetMNS}
tests $A \indep X | M$ or $B \indep X |M$)
causing LDECC to orient $M$ as a parent via an ECC.
But these violations do not impact SD and PC
since the skeleton and UCs are still discovered correctly.
\end{example}

\begin{proposition}[Conservative ATE set]\label{prop:conservative-ate}
Consider the following procedure:
(i) run both SD and LDECC to get $\Theta_{\text{SD}}$
and $\Theta_{\text{LDECC}}$;
(ii) output the set $\Theta_{\text{union}} = \Theta_{\text{LDECC}} \cup \Theta_{\text{SD}}$.
If the faithfulness assumptions for either SD or LDECC hold,
then $\Theta^* \subseteq \Theta_{\text{union}}$.
\end{proposition}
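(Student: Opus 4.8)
The plan is to observe that the statement is almost tautological once we combine the correctness guarantees for SD and LDECC when their respective faithfulness conditions hold, together with the fact that $\Theta^*$ is a \emph{set} of ATE values, each obtained by backdoor adjustment on a locally valid parent set. First I would invoke Proposition~\ref{prop:sd-ff}: if the faithfulness assumptions for SD hold, then $\Theta_{\text{SD}} \overset{\text{set}}{=} \Theta^*$, so in particular $\Theta^* \subseteq \Theta_{\text{SD}} \subseteq \Theta_{\text{union}}$. Symmetrically, I would invoke Proposition~\ref{prop:ldecc-ff} (or Theorem~\ref{thm:ldecc-correctness} under the CFA): if the faithfulness assumptions for LDECC hold, then $\Theta_{\text{LDECC}} \overset{\text{set}}{=} \Theta^*$, so again $\Theta^* \subseteq \Theta_{\text{LDECC}} \subseteq \Theta_{\text{union}}$. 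Since the hypothesis is a disjunction — at least one of the two sets of assumptions holds — in either case we get $\Theta^* \subseteq \Theta_{\text{union}}$, which is exactly the claim.

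The one point that deserves care, and which I expect to be the main (minor) obstacle, is making sure the argument is genuinely an inclusion claim and does not secretly require knowing \emph{which} algorithm's assumptions hold. The procedure in the statement is non-adaptive: it runs both algorithms and unconditionally outputs the union. So the analyst does not need to detect which faithfulness condition is satisfied; they only need the guarantee that whichever one holds supplies a superset of $\Theta^*$ inside the union. I would spell this out explicitly: let $E_{\text{SD}}$ and $E_{\text{LDECC}}$ denote the events that SD's and LDECC's faithfulness conditions respectively hold; by hypothesis $E_{\text{SD}} \cup E_{\text{LDECC}}$ occurs; on $E_{\text{SD}}$ we have $\Theta^* = \Theta_{\text{SD}} \subseteq \Theta_{\text{union}}$ and on $E_{\text{LDECC}}$ we have $\Theta^* = \Theta_{\text{LDECC}} \subseteq \Theta_{\text{union}}$; hence $\Theta^* \subseteq \Theta_{\text{union}}$ in all cases.

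A secondary detail to confirm is that the two algorithms are estimating ATEs of the \emph{same} treatment-outcome pair $(X, Y)$ via the \emph{same} backdoor-adjustment functional $\theta_{X \rightarrow Y | \S}$, so that the elements of $\Theta_{\text{SD}}$ and $\Theta_{\text{LDECC}}$ are directly comparable and the union is well-defined as a set of real numbers (not, say, a union of labelled parent sets). This is immediate from the algorithm definitions in Figures~\ref{fig:algo-sd} and~\ref{fig:algo-cdud}, where both $\Theta_{\text{SD}}$ and $\Theta_{\text{LDECC}}$ are populated by values of the form $\theta_{X \rightarrow Y | \S}$. With these points noted, no calculation is required; the proof is a two-line case split on the disjunctive hypothesis, each branch discharged by the already-established correctness proposition for the relevant algorithm. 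I would also remark, for intuition, that the union can be strictly larger than $\Theta^*$ precisely when one algorithm's assumptions fail — as illustrated by Examples~1 and~2, where the failing algorithm may contribute spurious ATE values — which is why the bound is called \emph{conservative}.
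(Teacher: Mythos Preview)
Your proposal is correct and is essentially the argument the paper has in mind: the proposition is stated in the main text without a dedicated proof (neither inline nor in Appendix~\ref{sec:apdx-faithfulness}), presumably because it follows immediately from Propositions~\ref{prop:sd-ff} and~\ref{prop:ldecc-ff} via exactly the disjunctive case split you give. Your additional remarks about the union being well-defined and the bound being conservative are accurate elaborations but not needed for the core argument.
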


Next, we show that some violations that affect
SD and LDECC are detectable.
We use this insight to construct a procedure
(Prop.~\ref{prop:hybrid-faithfulness})
that identifies $\Theta^*$ under 
strictly weaker faithfulness assumptions
by switching to LDECC if a violation of the assumptions 
of SD is detected (or vice versa).

\begin{figure}[t]
\centering
\begin{minipage}[b]{0.53\textwidth}
    \setlength{\interspacetitleruled}{0pt}%
    \setlength{\algotitleheightrule}{0pt}%
    \begin{algorithm}[H]
    \SetAlgoLined
    \KwInput{Node $A \notin \DNep(X)$, $\DNe(X)$}
    $\Q \gets \{ \S \subseteq \DNe(X) : A \indep X | \S \}$\; \label{alg:test-mff-additional-tests}
    \lIf{$|\Q| = 0$}{
        \textbf{return} noValidMNS
    }
    
    $\Q_{\min} \gets \{ \S \in \Q : \forall \S' \subset \S,  \S' \notin \Q \}$\;
    \lIf{$|\Q_{\min}| > 1$}{
        \textbf{return} Fail \label{alg:test-mff-failure-1}
    }
    $\S \gets $ GetElement($\Q_{\min}$) \tcp*{$\text{Here} \, |\Q_{\min}| = 1$.} \label{alg:test-mff-q-min-is-size-one}
    \For{$\S' \in \Q$ such that $\S \subset \S'$}{
        \lIf{$\exists \S'' \notin \Q$ s.t. $\S \subset \S'' \subset \S'$}{
            \textbf{return} Fail
        }\label{alg:test-mff-failure-2}
    }
    \textbf{return} Unknown\; \label{alg:test-mff-end}
    \end{algorithm}
    \captionof{figure}{Testing MFF.}
    \label{fig:algo-test-mff}
\end{minipage}
\hfill
\begin{minipage}[b]{0.42\textwidth}
    \vspace{0pt}
    \setlength{\interspacetitleruled}{0pt}%
    \setlength{\algotitleheightrule}{0pt}%
    \begin{algorithm}[H]
    \SetAlgoLined
    \KwInput{UC $\alpha=(A \rightarrow C \leftarrow B)$}
    $\Q(A) \gets \{ \S \subseteq \DNe(X) : A \indep X | \S \}$\;
    $\Q(B) \gets \{ \S \subseteq \DNe(X) : B \indep X | \S \}$\;
    \tcp{See Defn.~\ref{defn:parent-orienting-collider} for \emph{POC}.}
    $\M \gets \{ P \in \Pa(X) : \alpha \in \text{POC}(P) \}$\; \label{alg:test-sd-ff-M-poc}
    $\widetilde{M}(A) \gets \{\S \in \Q(A) : \M \subseteq \S\}$\;
    $\widetilde{M}(B) \gets \{\S \in \Q(B) : \M \subseteq \S\}$\;
    \lIf{$|\widetilde{M}(A)| = 0$ or $|\widetilde{M}(B)| = 0$}{
        \textbf{return} Fail \label{alg:test-sd-ff-fail}
    }
    \textbf{return} Unknown\;
    \end{algorithm}
    \captionof{figure}{Testing faithfulness for SD.}
    \label{fig:test-sd-ff}
\end{minipage}
\end{figure}

\begin{proposition}[Testing faithfulness for LDECC]\label{prop:test-mff}
Consider running the algorithm in Fig.~\ref{fig:algo-test-mff}
before invoking \emph{GetMNS}($A$) for some node $A$ in LDECC.
If the algorithm returns \emph{Fail}, MFF is violated for node $A$.
If the algorithm returns \emph{Unknown}, 
we could not ascertain if MFF holds for node $A$.
\end{proposition}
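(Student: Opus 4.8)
The plan is to prove the contrapositive of the only substantive assertion: if MFF holds for $A$, then the algorithm of Fig.~\ref{fig:algo-test-mff} never executes either \emph{return Fail} statement (the clause about \emph{Unknown} is vacuous — it asserts nothing, and the \emph{noValidMNS} output lies outside the proposition's scope). The one bridge between the combinatorial object the test manipulates and the graph is this: the set $\Q$ assembled in Line~\ref{alg:test-mff-additional-tests} is exactly $\{\S\subseteq\DNe(X):A\indep X\mid\S\}$ under the observational $\mathbb{P}(\V)$, and since $\mathbb{P}(\V)$ is Markov to $\GM^{*}$, $\Q$ contains every subset of $\DNe(X)$ that d-separates $A$ from $X$ in $\GM^{*}$; in particular $\nss_X(A)\in\Q$ whenever $\nss_X(A)$ exists.

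First I would dispose of the trivial branch. If $\nss_X(A)$ does not exist, MFF part~(a) gives $A\notindep X\mid\S$ for all $\S\subseteq\DNe(X)$, so $\Q=\emptyset$ and the algorithm returns \emph{noValidMNS}, not \emph{Fail}. So assume $\nss_X(A)$ exists. The key lemma is that, under MFF, $\Q_{\min}=\{\nss_X(A)\}$. One inclusion is easy: $\nss_X(A)\in\Q$, and any $\S\subsetneq\nss_X(A)$ has $|\S|<|\nss_X(A)|$, so by MFF part~(b) it is not in $\Q$; hence $\nss_X(A)\in\Q_{\min}$. The reverse inclusion — that there is no other inclusion-minimal element — is the heart of the matter and is where the DAG structure must enter: by minimality of $\nss_X(A)$, each $v\in\nss_X(A)$ is \emph{essential} (there is a path from $A$ to $X$ active given $\nss_X(A)\setminus\{v\}$ but blocked by $\nss_X(A)$), and by Prop.~\ref{prop:valid-nss-non-desc} we may additionally use $\nss_X(A)\subseteq\Pa(X)$ when $A\notin\Desc(X)$. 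I would argue that any $\S\in\Q$ omitting some essential $v$ is not a d-separator of $A,X$, so $A\indep X\mid\S$ can hold only through a cancellation along such an essential path; one then shows that such a cancellation forces a separating subset of $\DNe(X)$ of cardinality at most $|\nss_X(A)|$ and distinct from $\nss_X(A)$, contradicting MFF part~(b). Combined with Prop.~\ref{prop:mns-unique}, this pins $\Q_{\min}$ to the singleton, so $|\Q_{\min}|=1$, Line~\ref{alg:test-mff-failure-1} is not reached, and the set fixed in Line~\ref{alg:test-mff-q-min-is-size-one} equals $\nss_X(A)$.

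It remains to rule out the second \emph{return Fail} (Line~\ref{alg:test-mff-failure-2}): under MFF, for every $\S'\in\Q$ with $\nss_X(A)\subsetneq\S'$ there is no $\S''$ with $\nss_X(A)\subsetneq\S''\subsetneq\S'$ and $\S''\notin\Q$. The argument mirrors the previous one — any $\S'\in\Q$ strictly above $\nss_X(A)$ arises by adding neighbours of $X$ to a d-separator, and I would track which additions open colliders; a ``gap'' $\S''$ would again let me extract a separating set that violates MFF part~(b) (or would reveal a second inclusion-minimal element of $\Q$, contradicting the lemma just established). Assembling the pieces: if MFF holds, the algorithm returns \emph{noValidMNS} or reaches Line~\ref{alg:test-mff-end} and returns \emph{Unknown}; contrapositively, returning \emph{Fail} implies MFF is violated for $A$, and the \emph{Unknown} case carries no claim.

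The step I expect to be the real obstacle is the translation, in the last two paragraphs, from ``set-inclusion anomalies in $\Q$'' (what the test detects) to ``cardinality anomalies in $\Q$'' (what MFF part~(b) forbids): MFF only constrains the \emph{sizes} of observational separators, so I must use the structure of $\GM^{*}$ around $X$ — the containment $\nss_X(A)\subseteq\Pa(X)$ (Prop.~\ref{prop:valid-nss-non-desc}), uniqueness of the MNS (Prop.~\ref{prop:mns-unique}), and the Markov property — to show that every stray observational separator which is incomparable to, or leaves a gap above, $\nss_X(A)$ can be ``shrunk'' to a separator of cardinality $\le|\nss_X(A)|$. Once that reduction is in hand, both \emph{Fail} branches collapse to explicit violations of MFF part~(b) (or part~(a)), and the proposition follows.
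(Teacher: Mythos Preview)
Your contrapositive strategy and case-split on the two \emph{Fail} branches matches the paper's approach. Where you go further is in recognising that MFF (Defn.~\ref{defn:mff}) constrains only the \emph{cardinalities} of observational separators, whereas the test in Fig.~\ref{fig:algo-test-mff} flags \emph{inclusion}-minimality and monotonicity anomalies. The paper's proof does not address this mismatch: for Line~\ref{alg:test-mff-failure-1} it appeals to ``violating the uniqueness of MNS (Prop.~\ref{prop:mns-unique})'', which is a purely graphical fact that holds regardless of faithfulness and therefore says nothing directly about the observational $\Q_{\min}$; for Line~\ref{alg:test-mff-failure-2} it tacitly treats $\S'\in\Q$ as a d-separator when arguing that intermediate sets must also separate.

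However, your proposed bridge does not close the gap either. The claim that a cancellation producing an inclusion-minimal $\S\in\Q$ incomparable to $\nss_X(A)$ ``forces a separating subset of $\DNe(X)$ of cardinality at most $|\nss_X(A)|$ and distinct from $\nss_X(A)$'' is unsupported: a set $\S$ with $|\S|>|\nss_X(A)|$, $\nss_X(A)\not\subseteq\S$, $A\indep X\mid\S$ observationally, and no proper subset of $\S$ in $\Q$, is fully consistent with MFF(b) as written --- nothing in MFF forces any smaller or distinct separator into $\Q$. The same obstruction undermines your treatment of Line~\ref{alg:test-mff-failure-2}: a ``gap'' $\S''$ with $\nss_X(A)\subsetneq\S''\subsetneq\S'$ and $\S''\notin\Q$ does not by itself manufacture a size-$\le|\nss_X(A)|$ separator. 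You have correctly located a step that the paper's own proof also leaves informal, but the specific shrinking argument you sketch does not go through from MFF alone; making it rigorous would require either a stronger faithfulness hypothesis or an additional structural lemma showing that every observational separator in $\DNe(X)$ must contain $\nss_X(A)$.
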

\begin{proofsketch}
If Line~\ref{alg:test-mff-failure-1} returns \emph{Fail},
there are multiple minimal sets thereby 
violating the uniqueness of MNS
(see Example~\ref{example:testing-mff-1}).
If Line~\ref{alg:test-mff-failure-2} returns \emph{Fail},
for the detected MNS $\S$, there is a superset $\S'$ s.t.
$A \indep X | \S'$ but also an intermediate set $\S''$ s.t.
$A \notindep X | \S''$ which cannot happen if MFF holds
because removing nodes from $\S'$ should not violate independence
(see Example~\ref{example:testing-mff-2}).
\end{proofsketch}

\begin{example}[Testing MFF]\label{example:testing-mff-1}
Consider the faithfulness violations $A \indep X | M$ and $B \indep X | M$
for the DAG in Fig.~\ref{fig:ff-violation}.
We might incorrectly detect $\nss_X(A) = \nss_X(B) = \{ M \}$
(similar to Example~\ref{example:ldecc-violation}).
Line~\ref{alg:test-mff-failure-1} in Fig.~\ref{fig:algo-test-mff}
will return \emph{Fail} because for $A, B$, we have $\Q_{\min} = \{ \{W\}, \{M\} \}$.
\end{example}

\begin{example}[Testing MFF]\label{example:testing-mff-2}
Consider the DAG in Fig.~\ref{fig:mff-fail-node-A}.
If a faithfulness violation causes
the paths $X \leftarrow W_2 \leftarrow C$ and $X \leftarrow W_3 \leftarrow C$ to cancel each other out,
we will observe $A \indep X | W_1$ and thus wrongly
detect $\mns_X(A) = \{ W_1 \}$.
But Line~\ref{alg:test-mff-failure-2} in Fig.~\ref{fig:algo-test-mff}
will detect this failure since $A \indep X | \{ W_1, W_2, W_3 \}$ but $A \notindep X | \{ W_1, W_2 \}$ and $A \notindep X | \{ W_1, W_3 \}$.
\end{example}

\begin{proposition}[Testing faithfulness for SD]\label{prop:test-sd-ff}
Consider running the algorithm in Fig.~\ref{fig:test-sd-ff} with
each UC detected by SD.
If the algorithm returns \emph{Fail}, then faithfulness is violated for SD.
If the algorithm returns \emph{Unknown}, we could not
ascertain if the faithfulness assumptions for SD hold.
\end{proposition}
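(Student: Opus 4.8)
The plan is to argue the \emph{Fail} claim by contraposition---the \emph{Unknown} clause asserts nothing and needs no proof. So suppose every condition of Prop.~\ref{prop:sd-ff} holds; the goal is to show that the test of Fig.~\ref{fig:test-sd-ff} returns \emph{Unknown} for every UC $\alpha = (A \rightarrow C \leftarrow B)$ that SD actually uses to orient a parent of $X$ (equivalently, those $\alpha$ for which the set $\M$ of Line~\ref{alg:test-sd-ff-M-poc} is non-empty; a UC that orients no parent of $X$ has $\M = \emptyset$ and the check is vacuous there). First I would invoke the reasoning behind Prop.~\ref{prop:sd-ff}: under its conditions SD recovers the relevant part of the CPDAG of $\GM^*$ correctly, so $\DNe(X)$ is identified correctly, $\alpha$ is a genuine UC of $\GM^*$, and the parents SD orients---in particular those in $\M$---are genuine parents of $X$; hence $\M \subseteq \Pa(X) \subseteq \DNe(X)$.

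Since the test returns \emph{Unknown} precisely when $\widetilde{M}(A) \neq \emptyset$ and $\widetilde{M}(B) \neq \emptyset$, it is enough to produce, for each $V \in \{A,B\}$, a set $\S$ with $\M \subseteq \S \subseteq \DNe(X)$ and $V \indep X | \S$; and because d-separation in $\GM^*$ implies conditional independence in the observational distribution, it suffices that $\S$ merely d-separate $V$ from $X$ in $\GM^*$. I would take $\S = \Pa(X)$: then $\M \subseteq \S \subseteq \DNe(X)$ by the above; if $V \in \Pa(X)$ then $V \in \S$ and $V \indep X | \S$ holds trivially; and if $V \notin \Pa(X)$ then, provided $V \notin \Desc(X)$, the local Markov property at $X$ gives $V \indep_{\GM^*} X | \Pa(X)$. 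In either case $\S \in \widetilde{M}(V)$, so the test returns \emph{Unknown}; contrapositively, a \emph{Fail} output forces one of the conditions of Prop.~\ref{prop:sd-ff} to be violated, i.e.\ faithfulness is violated for SD.

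This reduces the whole argument to one fact, which I expect to be the main obstacle: the endpoints $A$ and $B$ of a UC $\alpha$ that SD uses to orient a parent of $X$ are non-descendants of $X$. If $\alpha$ is itself a UC at $X$ (i.e.\ $C = X$) this is immediate, since then $A,B \in \Pa(X)$. Otherwise $\M \neq \emptyset$ means that running Meek's rules on the skeleton, seeded only by the arrowheads $A \rightarrow C$ and $B \rightarrow C$, eventually orients some edge $P \rightarrow X$ with $P \in \Pa(X)$; the argument I would give is that the \emph{first} arrowhead produced this way that points into $X$ can arise only from Meek's rule~1 (rules~3 and~4 presuppose arrowheads already pointing into $X$, and rule~2 presupposes one through an intermediate node), and that chasing the chain of arrowheads justifying that first one back to the seed exhibits a directed path $A \rightarrow C \rightarrow \cdots \rightarrow X$ or $B \rightarrow C \rightarrow \cdots \rightarrow X$ in $\GM^*$; consequently both $A$ and $B$ are ancestors---hence non-descendants---of $X$. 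The delicate part is this propagation bookkeeping---using the acyclicity invariant preserved by Meek's rules and the structure of $\text{POC}$ from Defn.~\ref{defn:parent-orienting-collider}---after which the remaining d-separation steps above are routine.
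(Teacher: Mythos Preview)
Your approach is essentially the paper's: argue by contraposition that under SD's faithfulness conditions the set $\M$ computed in Line~\ref{alg:test-sd-ff-M-poc} satisfies $\M \subseteq \Pa(X)$, so some $\S \supseteq \M$ with $A \indep X \mid \S$ (and likewise for $B$) exists and the test returns \emph{Unknown}. The paper simply asserts the existence of such an $\S$ without further justification; your explicit choice $\S = \Pa(X)$ and the Meek-propagation argument showing $A,B \notin \Desc(X)$ (every head of an edge oriented from the seed $A \to C \leftarrow B$ is a descendant of $C$, so once $P \to X$ is oriented $X$ itself is a descendant of $C$) correctly fill in the step the paper leaves implicit.
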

\begin{proofsketch}
In Line~\ref{alg:test-sd-ff-M-poc}, $\M$ contains nodes
marked as parents of $X$ due to the UC
$A \rightarrow C \leftarrow B$. 
In Line~\ref{alg:test-sd-ff-fail}, we check that there is some
superset of $\M$ that makes $A$ and $B$ independent
of $X$:
this must be true if the nodes in $\M$ were actually
parents of $X$ (see Example~\ref{example:testing-sd-ff}).
\end{proofsketch}

\begin{example}[Testing faithfulness for SD]\label{example:testing-sd-ff}
Consider an OF violation
$C \indep X | M$
for the DAG in Fig.~\ref{fig:ff-violation-testable-sd}
that causes $X \rightarrow M \leftarrow C$ to be detected
as a non-collider.
SD will mark $M$ as a
parent via the UC $A \rightarrow C \leftarrow B$.
But Line~\ref{alg:test-sd-ff-fail} will return \emph{Fail}
because $\M = \{ M \}$ and $\Q(A) = \Q(B) = \{ \{W\} \}$.
\end{example}

\begin{proposition}\label{prop:hybrid-faithfulness}
Consider the following procedure: 
(i) run SD to get $\Theta_{\text{SD}}$;
(ii) test for faithfulness violations using Prop.~\ref{prop:test-sd-ff};
(iii) if a violation is found, run LDECC and output $\Theta_{\text{LDECC}}$,
else output $\Theta_{\text{SD}}$.
This procedure identifies $\Theta^*$ under strictly weaker assumptions
than Prop.~\ref{prop:sd-ff}.
\end{proposition}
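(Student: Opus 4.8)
The plan is to read ``strictly weaker than Prop.~\ref{prop:sd-ff}'' as two claims about model classes: (a) the set of models on which the procedure provably outputs $\Theta^{*}$ \emph{contains} every model meeting the hypotheses of Prop.~\ref{prop:sd-ff}, and (b) this containment is \emph{proper}, witnessed by a concrete family of unfaithful DAGs. For (a) I would first identify the condition $\mathcal{A}'$ under which the procedure is correct, namely: \emph{either} the four conditions of Prop.~\ref{prop:sd-ff} hold, \emph{or} the test of Fig.~\ref{fig:test-sd-ff} returns \emph{Fail} on some UC detected by SD \emph{and} the four conditions of Prop.~\ref{prop:ldecc-ff} hold. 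Correctness under $\mathcal{A}'$ is a two-case argument. If Prop.~\ref{prop:sd-ff}'s conditions hold, then by the contrapositive of Prop.~\ref{prop:test-sd-ff} the test cannot return \emph{Fail} on any UC SD detects (a \emph{Fail} would certify a violation of exactly those conditions), so it returns \emph{Unknown} throughout, step~(iii) outputs $\Theta_{\text{SD}}$, and Prop.~\ref{prop:sd-ff} gives $\Theta_{\text{SD}} = \Theta^{*}$. Otherwise the procedure outputs $\Theta_{\text{LDECC}}$ only after some test returned \emph{Fail}, and in that branch $\mathcal{A}'$ supplies the hypotheses of Prop.~\ref{prop:ldecc-ff}, giving $\Theta_{\text{LDECC}} = \Theta^{*}$; the two cases are disjoint, again by the contrapositive of Prop.~\ref{prop:test-sd-ff}. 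Since $\mathcal{A}'$ contains the hypotheses of Prop.~\ref{prop:sd-ff} as its first disjunct, (a) is immediate.

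For (b) I would use the construction of Example~\ref{example:testing-sd-ff}: the DAG of Fig.~\ref{fig:ff-violation-testable-sd} with the orientation-faithfulness violation $C \indep X | M$, which makes the true UC $X \rightarrow M \leftarrow C$ look like a non-collider. Orientation faithfulness fails, so this model is outside the hypotheses of Prop.~\ref{prop:sd-ff} (indeed SD then mis-orients $M$ as a parent of $X$); but Example~\ref{example:testing-sd-ff} already shows the test of Fig.~\ref{fig:test-sd-ff}, run on the UC $A \rightarrow C \leftarrow B$, returns \emph{Fail} (there $\M = \{M\}$ while $\Q(A) = \Q(B) = \{\{W\}\}$, so $\widetilde{M}(A)$ and $\widetilde{M}(B)$ are empty). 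It then remains to verify that the hypotheses of Prop.~\ref{prop:ldecc-ff} hold for this model, i.e., that the lone violation corrupts nothing LDECC relies on. Concretely I would check: \emph{LocalPC} still prunes $\MB(X)$ to $\DNe(X)$ and \emph{UCChildren} still flags $M$ as a child of $X$ (because $C$ stays a spouse of $X$ with $C \indep X | \emptyset$ still holding and $C \notindep M | \emptyset$); no spurious eager collider check enters $H \setminus H^{*}$; MFF holds for every node whose MNS LDECC queries; and an ECC still fires for the real UC $A \rightarrow C \leftarrow B$ orienting the true parent(s) of $X$ --- Condition~(iv) of Prop.~\ref{prop:ldecc-ff} is not triggered by $X \rightarrow M \leftarrow C$ since $H^{*}$ only contains triples whose endpoints differ from $X$. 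The unifying observation is that the violation merely equates $C \indep X | M$ with the already-entailed $C \indep X | \emptyset$ and leaves every d-separation fact among non-descendants of $X$ intact, so Conditions~(i)--(iv) of Prop.~\ref{prop:ldecc-ff} reduce to their faithful versions, which hold.

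I expect this last step --- tracing LDECC on the corrupted distribution and checking each of Prop.~\ref{prop:ldecc-ff}'s conditions --- to be the main obstacle; to make the argument robust I would state the witness as the entire class of DAGs of the shape of Fig.~\ref{fig:ff-violation-testable-sd} (clique sizes and path lengths left free) rather than a single graph, so that the conclusion is genuinely about a positive-measure family of models. Assembling the pieces, the procedure is correct on every model in $\mathcal{A}'$, $\mathcal{A}'$ includes all models satisfying Prop.~\ref{prop:sd-ff}, and the witness lies in $\mathcal{A}'$ but not in the hypotheses of Prop.~\ref{prop:sd-ff}; hence $\Theta^{*}$ is identified under strictly weaker assumptions, as claimed.
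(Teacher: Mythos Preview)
Your two-case argument for part~(a) is exactly the paper's proof: under Prop.~\ref{prop:sd-ff} the test cannot return \emph{Fail} (contrapositive of Prop.~\ref{prop:test-sd-ff}), so the procedure outputs $\Theta_{\text{SD}}=\Theta^*$; if instead a violation is detected and Prop.~\ref{prop:ldecc-ff}'s conditions hold, it outputs $\Theta_{\text{LDECC}}=\Theta^*$. The paper stops there and simply asserts strictness without exhibiting a witness, so your part~(b) already goes beyond what the paper provides.

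However, your chosen witness has a gap. The orientation-faithfulness violation $C \indep X \mid M$ in Example~\ref{example:testing-sd-ff} breaks OF for the unshielded collider $X \rightarrow M \leftarrow C$, and since $M \in \DNe(X) \subseteq \MBp(X)$, this means LF fails for the node $M$. But Condition~(i) of Prop.~\ref{prop:ldecc-ff} is \emph{identical} to Condition~(i) of Prop.~\ref{prop:sd-ff}: both require LF for every $V \in \MBp(X)$. So your witness lies outside the hypotheses of Prop.~\ref{prop:ldecc-ff} just as it lies outside those of Prop.~\ref{prop:sd-ff}, and hence is not in $\mathcal{A}'$ as you defined it. Your subsequent checks (that \emph{UCChildren} still flags $M$ as a child, that no spurious ECC enters $H\setminus H^*$, etc.) establish only the weaker statement that LDECC happens to be correct on this particular model; the sentence ``Conditions~(i)--(iv) of Prop.~\ref{prop:ldecc-ff} reduce to their faithful versions, which hold'' is false for Condition~(i).

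The cleanest repair is to weaken the second disjunct of $\mathcal{A}'$ to ``the test of Fig.~\ref{fig:test-sd-ff} returns \emph{Fail} on some detected UC and LDECC outputs $\Theta^{*}$,'' so that correctness in that branch is tautological and your direct trace of LDECC on the Example~\ref{example:testing-sd-ff} model becomes the substance of the strictness argument. Note that the paper's own proof also phrases the fallback as ``if the faithfulness conditions for LDECC hold,'' so the same tension would surface there if one tried to instantiate Example~\ref{example:testing-sd-ff} as the witness---the paper simply never does so explicitly.
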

\begin{proof}
If the conditions of Prop.~\ref{prop:sd-ff} hold,
this procedure will output $\Theta_{\text{SD}}$ 
because we will not detect a faithfulness violation.
However, if a violation is detected, this procedure will still
identify $\Theta^*$ if the faithfulness conditions for LDECC hold.
This is strictly weaker than the assumptions of Prop.~\ref{prop:sd-ff}.
\end{proof}

We can construct a similar procedure for LDECC.
This again shows the complementary nature of the two algorithms:
if there are detectable faithfulness violations for one algorithm, we can output
an alternative ATE set thereby
allowing us to identify $\Theta^*$ under weaker faithfulness assumptions
than would be possible with only one of the algorithms.
Currently, our testing procedures can only
detect some (but not all) violations.
Moreover, faithfulness violations in DAGs do not occur
independently of each other: 
an unfaithful mechanism 
can lead to other (entangled)
faithfulness violations.
As a result, the 
violations affecting SD and LDECC will be entangled
and the given faithfulness assumptions 
may not be the minimal ones
required for local causal discovery.

\section{Experiments}
\label{sec:experiments}
\begin{figure}
\centering
\subfigure[Tests with a CI oracle]{\includegraphics[scale=0.35]{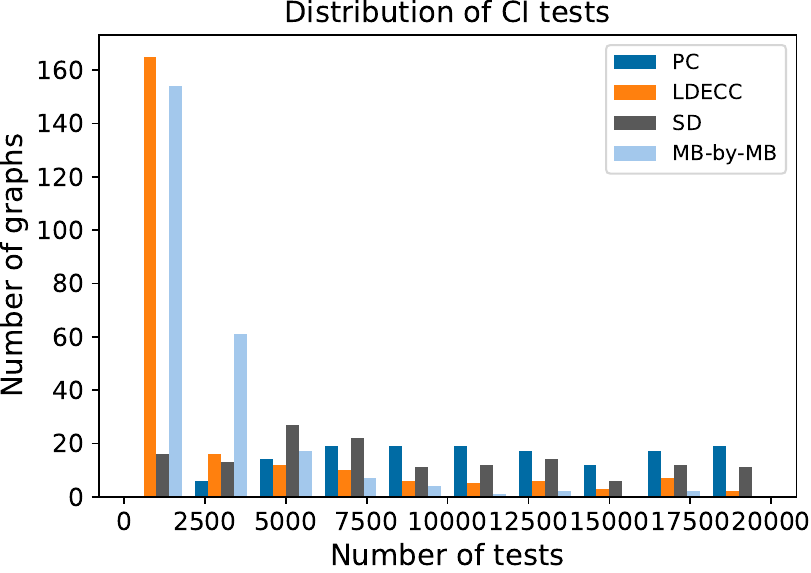}\label{fig:synthetic-with-ci-oracle}}
\hfill
\subfigure[Accuracy]{\includegraphics[scale=0.35]{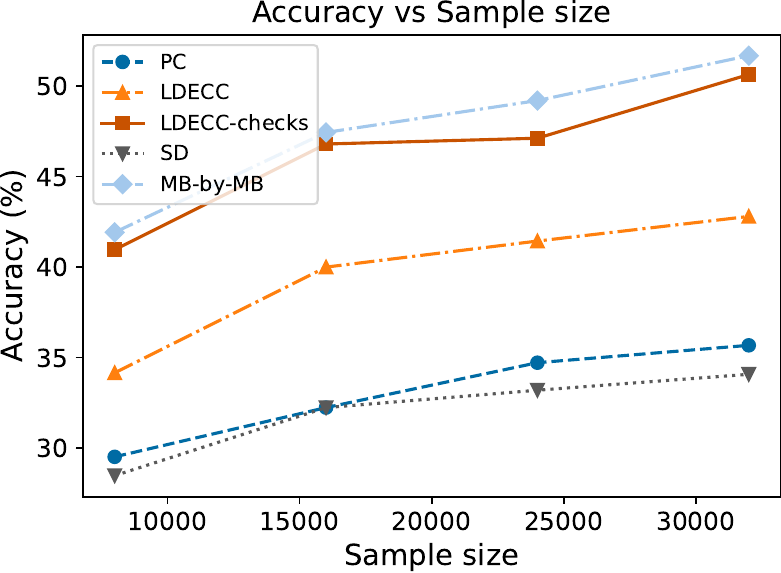}\label{fig:synthetic-finite-sample-accuracy}}
\hfill
\subfigure[Recall]{\includegraphics[scale=0.35]{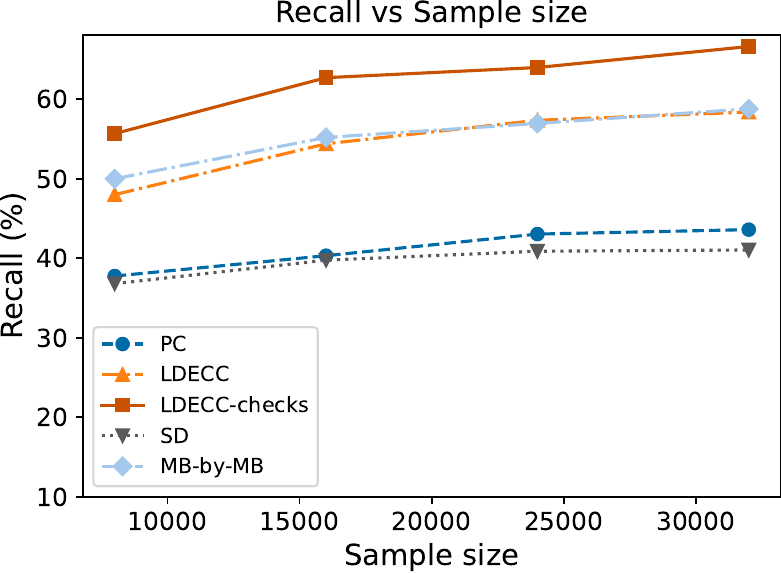}\label{fig:synthetic-finite-sample-recall}}
\\
\subfigure[MSE vs sample size]{\includegraphics[scale=0.33]{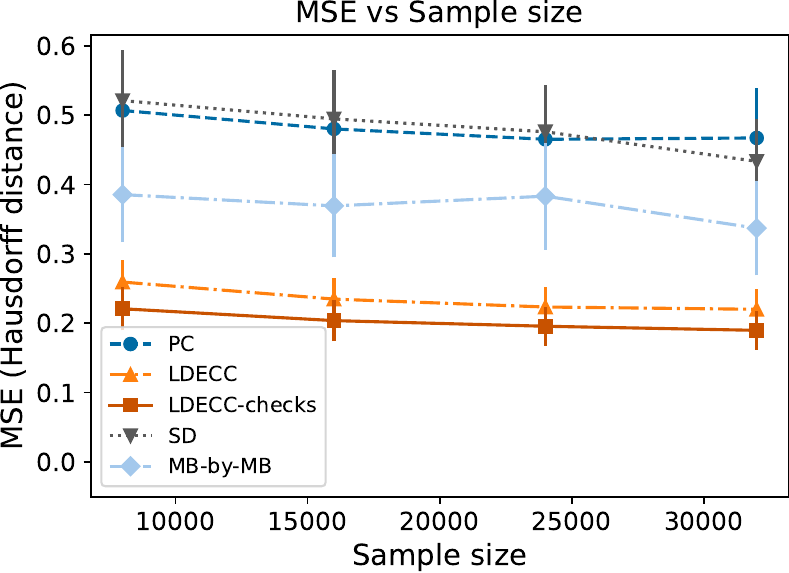}\label{fig:synthetic-finite-sample-mse}}
\hspace{1em}
\subfigure[Number of CI tests]{\includegraphics[scale=0.35]{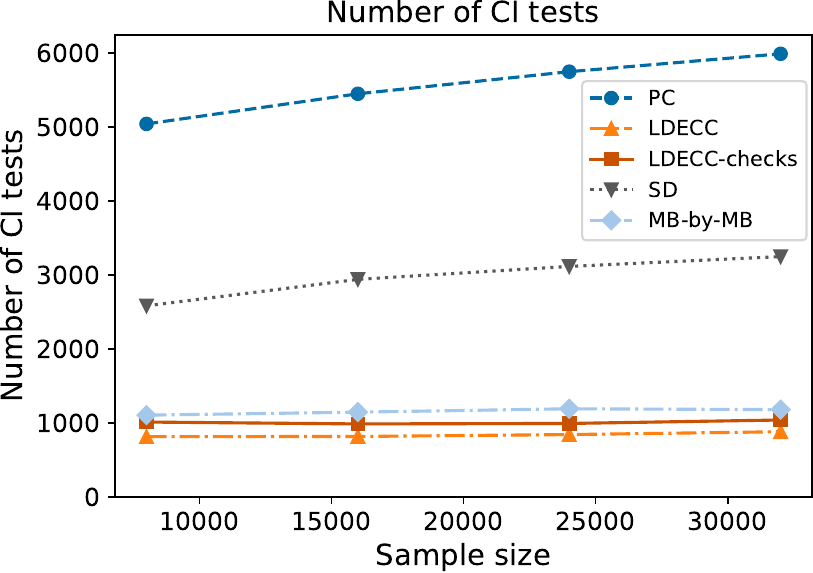}\label{fig:synthetic-finite-sample-num-tests}}
\caption{Results on synthetic linear Gaussian graphs.}
\end{figure}

We present results on both synthetic linear graphs
and the semi-synthetic \emph{MAGIC-NIAB} graph 
from \emph{bnlearn} \citep{scutari2009learning}.
We also ran the MB-by-MB
approach \citep{wang2014discovering},
an instantiation of SD
that uses IAMB (Fig.~\ref{fig:apdx-algo-iamb}) 
and \emph{LocalPC} for local structure learning.
LDECC usually performs comparably to MB-by-MB and 
typically outperforms SD while 
running a similar number of CI tests.
Results on synthetic binomial graphs and 
additional graphs from \emph{bnlearn} are in Appendix~\ref{sec:apdx-experiments}.

\paragraph{Results on synthetic data.}
We generated synthetic linear graphs
with Gaussian errors,
$20$ covariates---non-descendants of $X$ and $Y$---and 
$3$ mediators---nodes
on some causal path from $X$ to $Y$.
We sampled edges between the
nodes with varying probabilities 
and sampled the edge weights uniformly from 
$[-1, -0.25] \cup [0.25, 1]$
(see Appendix~\ref{sec:apdx-experiments-synthetic-linear-dgp} for the precise procedure).
We compared the algorithms based on the
number of CI tests performed with a CI oracle
on $250$ synthetic graphs 
(of which $\approx 77\%$ were \emph{locally orientable}; see Defn.~\ref{defn:locally-orientable}).
The distribution of CI tests (Fig.~\ref{fig:synthetic-with-ci-oracle}) shows
that LDECC performs comparably to MB-by-MB
and substantially outperforms SD and PC on 
$\approx 150$ graphs.
Next, we evaluated our methods on $250$ synthetic graphs
at four sample sizes.
For each graph, we generated data $5$ times.
Thus, we have $N = 1250$ runs.
We used the Fisher-z CI test and ordinary least squares 
for ATE estimation.
We also ran LDECC with \emph{check=True} as
described in Remark~\ref{remark:weaker-ff-ldecc-check-true} (denoted by \emph{LDECC-checks}).
We define \emph{accuracy} as the fraction of times
the estimated ATE set is the same as 
the ATE set we would get if a CI oracle were used
(instead of the Fisher-z test);
and \emph{recall} as the fraction of times
the estimated ATE set contains
the ATE obtained by adjusting for $\Pa(X)$ in the true graph, i.e.,
the fraction of $i \in [N]$ s.t.
$\widehat{\theta}(X \rightarrow Y | \Pa(X; \GM_i^*)) \in \widehat{\Theta}_i$.
We used Hausdorff distance to compute mean squared error (MSE) between the 
estimated $\widehat{\Theta}$ and the ground-truth set $\Theta^{*}$:
\begin{align*}
    \text{MSE}_{\text{Hausdorff}}(\{\widehat{\Theta}\}_{i=1}^{N}, \{\Theta^{*}\}_{i=1}^{N})
        = \frac{1}{N} \sum_{i=1}^{N} \max \left\{ \sup_{u \in \widehat{\Theta}_i} \inf_{v \in \Theta^{*}_i} \left( u - v \right)^2,
            \sup_{v \in \Theta^{*}_i} \inf_{u \in \widehat{\Theta}_i} \left( u - v \right)^2
        \right\}.
\end{align*}

LDECC-checks and MB-by-MB have comparable accuracy
and are better than both SD and PC (Fig.~\ref{fig:synthetic-finite-sample-accuracy}).
In terms of recall, 
LDECC-checks is better than
MB-by-MB, and both 
outperform PC and SD (Fig.~\ref{fig:synthetic-finite-sample-recall}).
LDECC-checks has better
accuracy and recall than LDECC
suggesting that the check helps in practice.
Both variants of LDECC have substantially lower MSE than
MB-by-MB, PC, and SD (Fig.~\ref{fig:synthetic-finite-sample-mse})
while still doing a similar number of CI tests as
MB-by-MB (Fig.~\ref{fig:synthetic-finite-sample-num-tests}).

\begin{figure}
\centering
\subfigure[Tests with CI oracle]{\includegraphics[scale=0.35]{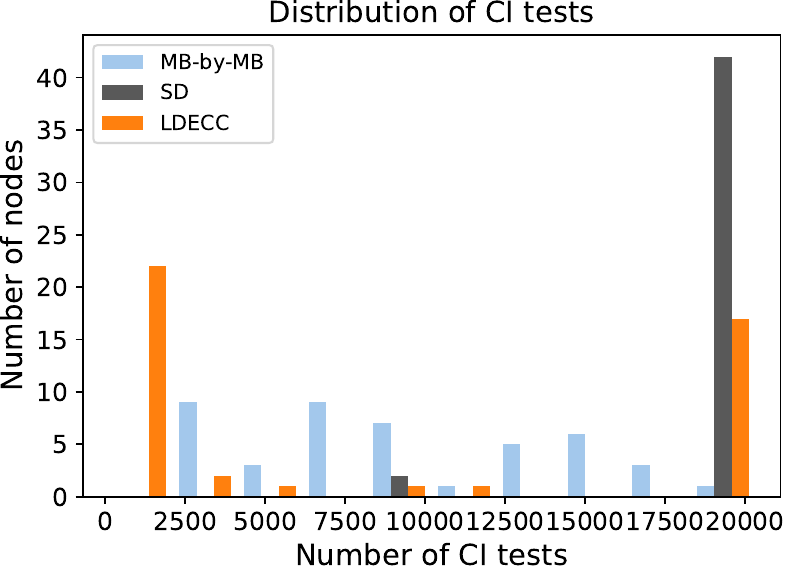}\label{fig:semi-synthetic-with-ci-oracle}}
\hfill
\subfigure[Accuracy]{\includegraphics[scale=0.35]{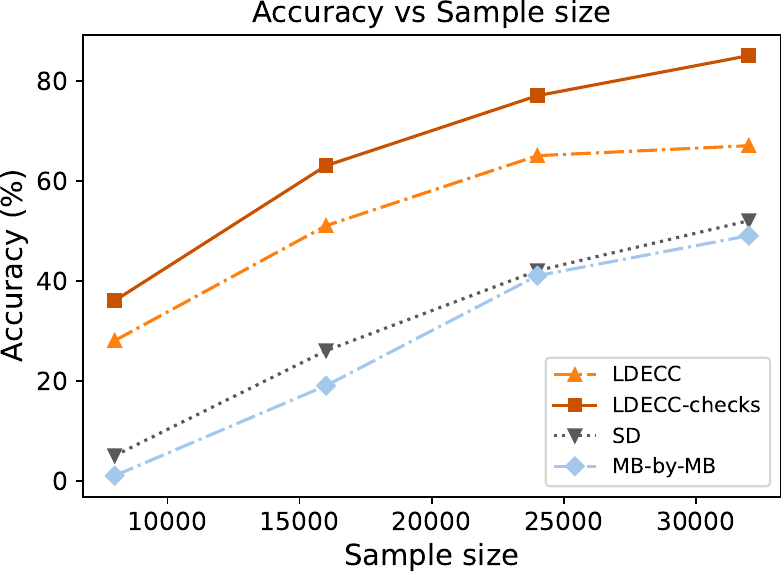}\label{fig:semi-synthetic-finite-sample-accuracy}}
\hfill
\subfigure[Recall]{\includegraphics[scale=0.35]{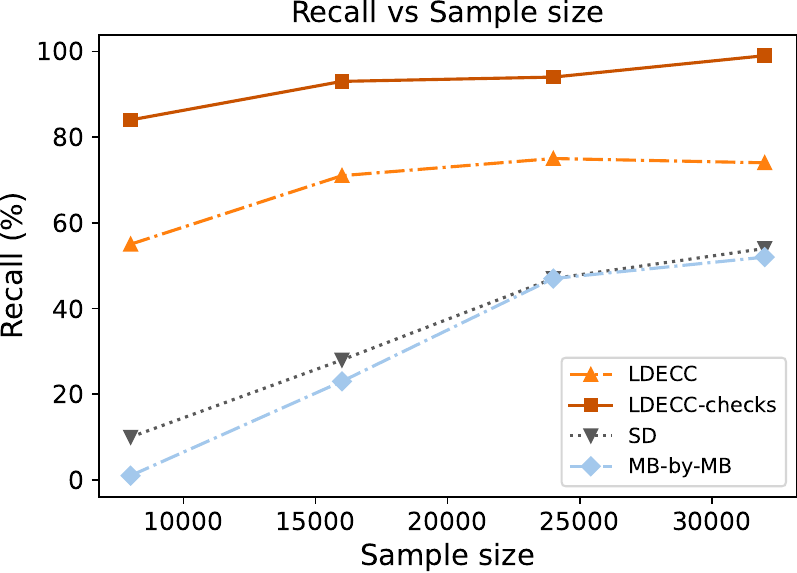}\label{fig:semi-synthetic-finite-sample-recall}}
\\
\subfigure[Median SE vs sample size]{\includegraphics[scale=0.33]{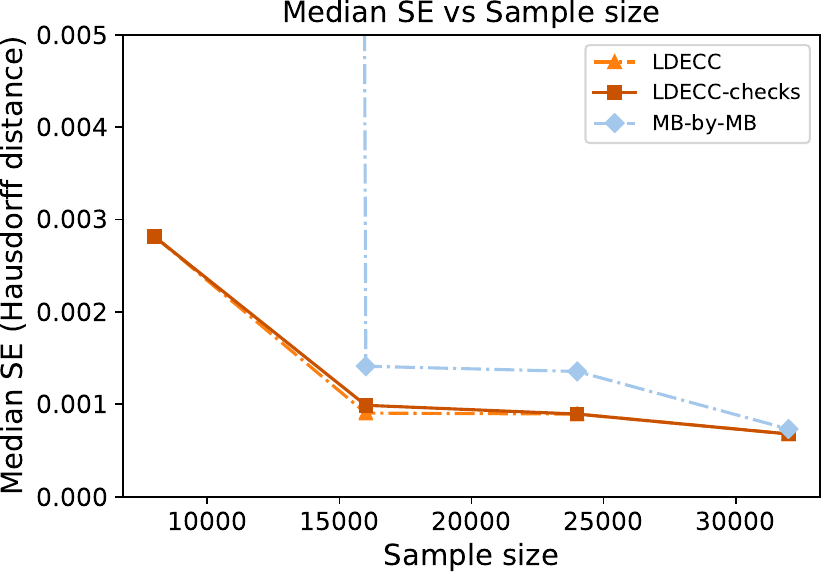}\label{fig:semi-synthetic-finite-sample-mse}}
\hspace{1em}
\subfigure[Average number of CI tests]{\includegraphics[scale=0.35]{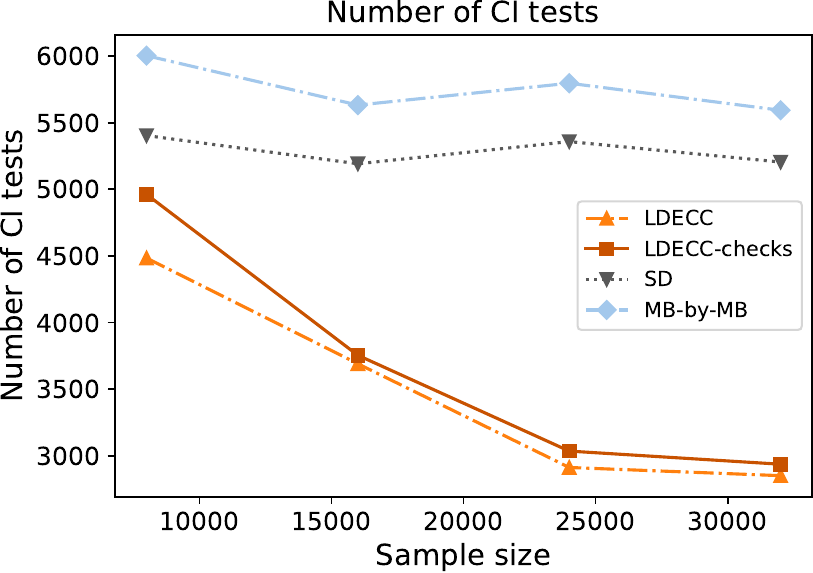}\label{fig:semi-synthetic-finite-sample-num-tests}}
\caption{Results on the semi-synthetic \emph{MAGIC-NIAB} linear Gaussian graph.}
\end{figure}

\paragraph{Results on \emph{MAGIC-NIAB} from bnlearn.}
\emph{MAGIC-NIAB} is a linear Gaussian DAG with $44$ nodes.
With a CI oracle, PC performed $\approx 1.472 \times 10^6$ tests. 
We plot the distribution of CI tests
by repeatedly setting each node as the treatment 
(capping the maximum number of tests per node to $20000$).
We see that LDECC and MB-by-MB perform better than SD 
with MB-by-MB performing well across all nodes
(Fig.~\ref{fig:semi-synthetic-with-ci-oracle}).
Next, we designated the nodes \emph{G266}
and \emph{HT}
as the treatment and outcome, respectively.
We compared the methods
by sampling the data $100$ times at four sample sizes
(capping the maximum number of tests run by each algorithm to $7000$).
LDECC has higher accuracy (Fig.~\ref{fig:semi-synthetic-finite-sample-accuracy}) and recall (Fig.~\ref{fig:semi-synthetic-finite-sample-recall}) than MB-by-MB
as well as SD.
We plot \emph{median} squared error (Fig.~\ref{fig:semi-synthetic-finite-sample-mse}) for MB-by-MB and LDECC 
(MB-by-MB does poorly at small sample sizes due to high bias in the estimated ATE).
Both are similar at larger sample sizes.
Additionally, LDECC performs fewer CI tests than MB-by-MB and SD (Fig.~\ref{fig:semi-synthetic-finite-sample-num-tests}).

\section{Conclusion}
\label{sec:conclusion}
Broadening the landscape of local causal discovery,
we propose a new algorithm
that uses ECCs to orient parents.
We show that LDECC has different computational and faithfulness
requirements compared to the existing class of sequential algorithms.
This allows us to profitably combine the two methods to
get polynomial runtimes on a larger class of graphs
as well as identify the set of possible ATE values
under weaker assumptions.
In future work, we hope to further weaken 
the faithfulness requirements
and extend our methods to handle
causal graphs with latent variables.

\acks{
We gratefully acknowledge the NSF (FAI 2040929 and IIS2211955), UPMC, Highmark Health, Abridge, Ford Research, Mozilla, the PwC Center, Amazon AI, JP Morgan Chase, the Block Center, the Center for Machine Learning and Health, and the CMU Software Engineering Institute (SEI) via Department of Defense contract FA8702-15-D-0002, for their generous support of ACMI Lab’s research. In particular, we are grateful to the PwC center for supporting Shantanu Gupta as a presidential scholar during the completion of this research, and to Amazon AI who has named Shantanu Gupta a recipient of the Amazon Ph.D. fellowship for the coming year.
}

\bibliography{refs}

\clearpage

\appendix

\section{Additional details on the PC algorithm} \label{apdx:prelim}

\begin{figure}[t]
\centering
\begin{minipage}[b]{0.52\textwidth}
    \vspace{0pt}
    \setlength{\interspacetitleruled}{0pt}%
    \setlength{\algotitleheightrule}{0pt}%
    \begin{algorithm}[H]
        \SetAlgoLined
        \SetKwFunction{FOrientInGraph}{GetCPDAG}
        \SetKwProg{FOIG}{def}{:}{}
        \FOIG{\FOrientInGraph{Undirected graph $\UM$, DSep}}{
            \For{every unshielded $A \text{---} C \text{---B} \in \UM$}{
                \uIf{$C \notin \text{DSep}(A, B)$}{
                    Orient $A \rightarrow C \leftarrow B$\;
                }
            }
            CPDAG $\mathcal{G} \gets$ ApplyMeekRules($\UM$)\;
            \KwRet $\mathcal{G}$\;
        }
    \end{algorithm}
    \begin{algorithm}[H]
        \SetAlgoLined
        \SetKwFunction{FPCTest}{PCTest}
        \SetKwProg{FPCT}{def}{:}{}
        \FPCT{\FPCTest{Undirected graph $\UM(\V, \E)$}}{
            $s \gets 0$\;
            \While{$\exists (A \text{---} B) \in \E$ s.t. $|\DNe(A) \setminus \{ B \}| \geq s$}{
                \For{$\S \subseteq \DNe(A) \setminus \{ B \}$ s.t. $|\S| = s$}{
                    \uIf{$A \indep B | \S$}{
                        $\UM$.removeEdge($A \text{---} B$)\;
                        \textbf{yield} $(A \indep B | \S)$\;
                        \textbf{break}\;
                    }
                }
                $s \gets s + 1$\;
            }
        }
    \end{algorithm}
    \begin{algorithm}[H]
    \SetAlgoLined
    Completely connected undirected graph $\UM(\V, \E)$\;
    $\forall A, B \in \V, \,\, \text{DSep}(A, B) \gets \text{null}$\;
    \For{$(A \indep B | \S) \in \text{PCTest}(\UM)$}{
        $\text{DSep}(A, B) \gets \S$\;
    }
    CPDAG $\mathcal{G} \gets$ GetCPDAG($\UM$, DSep)\;
    \KwOutput{$\mathcal{G}, \text{DSep}$}
    \end{algorithm}
    \captionof{figure}{The PC algorithm \citep[Sec.~5.4.2]{spirtes2000causation}.}
    \label{fig:apdx-pc-algorithm}
\end{minipage}
\hfill
\begin{minipage}[b]{0.46\textwidth}
    \vspace{0pt}
    \centering
    \includegraphics[scale=0.55]{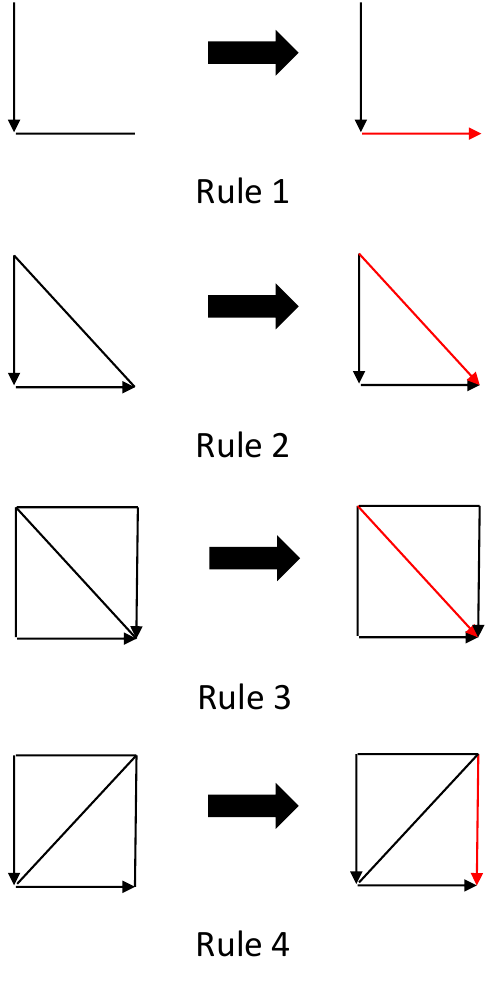}
    \captionof{figure}{Meek's orientation rules.}
    \label{fig:meek-rules}
\end{minipage}
\end{figure}

\begin{definition}[CPDAG {\citep[Pg.~5]{maathuis2015generalized}}]\label{defn:apdx-cpdag}
A set of DAGs that entail the same set of CIs form an MEC.
This MEC can be uniquely represented using a CPDAG.
A CPDAG is a graph with the same skeleton as each DAG in the MEC
and contains both directed ($\rightarrow$) and undirected ($\text{---}$) edges.
A directed edge $A \rightarrow B$ means that the $A \rightarrow B$ is present
in every DAG in the MEC.
An undirected edge $A \text{---} B$ means that there is at least one DAG in the
MEC with an $A \rightarrow B$ edge and 
at least one DAG with the $B \rightarrow A$ edge.
\end{definition}

The PC algorithm (Fig.~\ref{fig:apdx-pc-algorithm}) starts with a 
fully connected skeleton and runs CI tests to remove
edges. For each pair of nodes $(A, B)$ that are adjacent in the skeleton,
we run CIs of size $s$---starting with $s=0$ and then increasing it by one
in each subsequent iteration---until the edge is removed or the number
of nodes adjacent to both $A$ and $B$ is less than $s$.
Once the skeleton is found, UCs are detected and then additional edges are
oriented by repeatedly applying Meek's rules (Fig.~\ref{fig:meek-rules})
until no additional edges can be oriented. Under faithfulness,
with access to a CI oracle, the output of PC is a CPDAG 
encoding the MEC of the true DAG $\GM^{*}$.

\section{Additional details for Section~\ref{sec:ldecc}} \label{apdx:ldecc}

\begin{figure}
\centering
\begin{minipage}[b]{0.50\textwidth}
    \vspace{0pt}
    \setlength{\interspacetitleruled}{0pt}%
    \setlength{\algotitleheightrule}{0pt}%
    \begin{algorithm}[H]
    \SetAlgoLined
    \KwInput{Node $X$.}
    $\MB(X) \gets \emptyset$\;
    \tcp{Forward Pass.}
    \While{$\MB(X)$ has changed}{
        \For{$V \in \V \setminus (\MB(X) \cup \{X\})$}{
            \lIf{$V \notindep X | \MB(X)$}{
                $\MB(X)$.add($V$)
            }
        }
    }
    \tcp{Backward Pass.}
    \For{$V \in \MB(X)$}{
        \lIf{$V \indep X | (\MB(X) \setminus \{X\})$}{
            $\MB(X)$.remove($V$)
        }
    }
    \KwOutput{$\MB(X)$}
    \end{algorithm}
    \captionof{figure}{The IAMB algorithm.}
    \label{fig:apdx-algo-iamb}
\end{minipage}
\hfill
\begin{minipage}[b]{0.45\textwidth}
    \vspace{0pt}
    \setlength{\interspacetitleruled}{0pt}%
    \setlength{\algotitleheightrule}{0pt}%
    \begin{algorithm}[H]
        \SetAlgoLined
        \SetKwFunction{FNBrs}{Nbrs}
        \SetKwProg{FNB}{def}{:}{}
        \FNB{\FNBrs{CPDAG $\GM$, Target $X$}}{
            parents $\gets \emptyset$\;
            children $\gets \emptyset$\;
            unoriented $\gets \emptyset$\;
            \For{$V \in \DNe_{\GM}(X)$}{
                \uIf{$X \leftarrow V \in \GM$}{
                    parents.add($V$)\;
                }
                \uElseIf{$X \rightarrow V \in \GM$}{
                    children.add($V$)\;
                }
                \uElse{
                    unoriented.add($V$)\;
                }
            }
            \KwRet parents, children, unoriented\;
        }
    \end{algorithm}
    \captionof{figure}{The \emph{Nbrs} subroutine used by SD.}
    \label{fig:apdx-nbrs-subroutine}
\end{minipage}
\end{figure}

In the example below, we demonstrate that
there exist causal graphs with nodes for
which $\nss$ does not exist. 

\begin{example}[MNS does not exist]\label{example:apdx-mns-does-not-exist}
In the following graph, for node $Y \in \Desc(X)$, 
$\nss_X$ does not exist because
there is no subset of $\DNe(X)$ that d-separates $Y$ from $X$.

\includegraphics[scale=0.42]{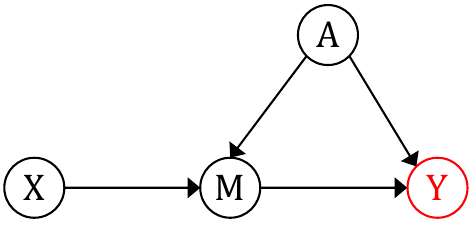}
\end{example}

\subsection{Omitted Proofs for Section~\ref{sec:ldecc}} \label{sec:apdx-ldecc-ommitted-proofs}

\newtheorem*{prop:valid-nss-non-desc}{Proposition~\ref{prop:valid-nss-non-desc}}
\begin{prop:valid-nss-non-desc}
For any node $V \notin (\Desc(X) \cup \DNep(X))$, $\nss_X(V)$ exists and $\nss_X(V) \subseteq \Pa(X)$.
\end{prop:valid-nss-non-desc}
\begin{proof}
Let $\Q = \Desc(X) \cup \DNep(X)$.
Since $\Pa(X)$ blocks all backdoor paths from $X$, 
for every $V \notin \Q$, 
we have $V \indep X | \Pa(X)$.
Therefore, for every $V \notin \Q$, there exists some subset
$\S \subseteq \Pa(X)$ such that $V \indep X | \S$.
\end{proof}

\newtheorem*{prop:mns-unique}{Proposition~\ref{prop:mns-unique}}
\begin{prop:mns-unique}[Uniqueness of MNS]
For nodes $V$ s.t. $\mns_X(V)$ exists, it is unique.
\end{prop:mns-unique}
\begin{proof}
We will prove this by contradiction.
Consider a node $V$ with two MNSs: $\S_1 \subseteq \DNe(X)$ and $\S_2 \subseteq \DNe(X)$ 
with $\S_1 \neq \S_2$.
If $\S_1 \subset \S_2$ or $\S_2 \subset \S_1$, then minimality is violated.
Hence, going forward we will only consider the case where $\S_1 \setminus \S_2 \neq \emptyset$
and $\S_2 \setminus \S_1 \neq \emptyset$.
Consider any node $A \in \S_1 \setminus \S_2$. 
For $\S_2$ to be a valid MNS, some nodes in $\S_2 \setminus \S_1$ 
must block all paths from $V$ to $X$ that contain $A$
(this is because if this path were to be 
only be blocked by some nodes in $\S_1$,
then minimality of $\S_1$ will be violated as $\S_1 \setminus \{ A \}$
would also have been a valid MNS).
This means that there is a path from $V$ to $X$
through some nodes in $\S_2 \setminus \S_1$ that
cannot be blocked by $\S_1$ 
(else these nodes in $\S_1$ would have blocked the paths 
from $V$ to $A$ violating minimality of $\S_1$).
This contradicts the fact that $\S_1$ is a valid MNS.
Therefore, we must have $\S_1 = \S_2$.
\end{proof}

\newtheorem*{prop:ecc}{Proposition~\ref{prop:ecc}}
\begin{prop:ecc}[Eager Collider Check]
For nodes $A, B \in \V \setminus \DNe^{+}(X)$,
any $\S \subset \V \setminus \{ A, B, X \}$, if
(i) $A \indep B | \S$;
and (ii) $A \notindep B | \S \cup \{X\}$; 
then $A, B \notin \Desc(X)$
and $\nss_X(A), \nss_X(B) \subseteq \Pa(X)$.
\end{prop:ecc}
\begin{proof}
We prove this by contradiction. 
Let's say there is a child $M$ of $X$ 
such that $M \in \mns_X(B)$ or $M \in \mns_X(A)$.
First, note that if Conditions~(i, ii) hold, then 
there is a path of the form 
$A \bullet \rightarrow C$ and $B \bullet \rightarrow C$ and 
$C \rightarrow \hdots X$, where $\bullet$ means that there can be either an arrowhead or tail
(i.e., there can be either a directed path $A \rightarrow \hdots \rightarrow C$ 
or a backdoor path $A \leftarrow \hdots \rightarrow C$ and likewise for $B$)
with $C \notin \S$.
W.l.o.g., let's say that $M \in \mns_X(B)$
(the argument for node $A$ follows similarly).
Then there is a directed path from $X$ to $B$ through $M$
(i.e., $X \rightarrow M \rightarrow \hdots \rightarrow B$). 
There cannot be a path $B \rightarrow \hdots \rightarrow M$
because then $M$ will be a collider and therefore we will have
$M \notin \mns_X(B)$.
These components are illustrated in the figure below:

\includegraphics[scale=0.45]{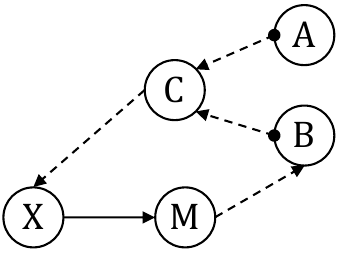}

We now show that $B \notin \Desc(X)$
(the argument for node $A$ is the same).
There cannot be a directed path $B \rightarrow \hdots \rightarrow C$ because otherwise a cycle 
$B \rightarrow \hdots \rightarrow C \rightarrow \hdots \rightarrow X \rightarrow M \rightarrow \hdots \rightarrow B$ 
gets created.
Thus the path from $B$ to $C$ must be of the form
$B \leftarrow \hdots \rightarrow C$.
Note that there is an active path between $A$ and $B$ 
through $X$ 
($A \bullet \rightarrow \hdots \rightarrow C \rightarrow \hdots \rightarrow X \rightarrow M
\rightarrow \hdots \rightarrow B$).
Since $A \indep B | \S$, there are two possibilities:
(i) $\S$ contains $X$ to block this path which contradicts the definition of $\S$ 
(where $X \notin \S$); or
(ii) $\S$ blocks all paths between $A$ and $X$ or between $B$ and $X$ in
which case $A$ and $B$ cannot become dependent when additionally
conditioned on $X$ thereby violating Condition~(ii).
Therefore, we have that $A, B \notin \Desc(X)$ and
by Prop.~\ref{prop:valid-nss-non-desc}, $\mns_X(A)$ and $\mns_X(B)$
will be valid and only contain parents of $X$.
\end{proof}

\citet{claassen2012logical} use minimal (in)dependencies to construct
three logical rules which are sound and complete for performing
causal discovery. While their algorithm cannot directly be
used for local causal discovery, we show below that Lemma~3 in their
paper can be used to simplify the proof of Eager Collider Check:
\begin{proof}[Alternative proof of ECC]
Since we have $A \indep B | \S$ and $A \notindep B | \S \cup \{X\}$,
by \citet[Lemma~3]{claassen2012logical}, 
we have $A, B \notin \Desc(X)$.
Therefore, by Prop.~\ref{prop:valid-nss-non-desc}, $\mns_X(A)$ and $\mns_X(B)$
will be valid and only contain parents of $X$.
\end{proof}

\subsubsection{Proof of correctness of LDECC under the CFA}

\begin{lemma}\label{lemma:apdx-child-cannot-sep-two-non-desc}
Consider a DAG $\GM(\V, \E)$ and a node $X \in \V$.
Let $A, B \in \Pa(X)$ be two parents of $X$ such that $A \text{---} B \notin \E$.
Then, for any $\S \subseteq \V \setminus \{A, B, X\}$ such that $A \indep B | \S$,
we have $\Ch(X) \cap \S = \emptyset$.
\end{lemma}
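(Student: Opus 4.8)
The plan is to argue by contradiction using the definition of d-separation applied to the single two-edge path $A \rightarrow X \leftarrow B$. Suppose, contrary to the claim, that there is some node $C \in \Ch(X) \cap \S$. Since $A, B \in \Pa(X)$, the path $p : A \rightarrow X \leftarrow B$ exists in $\GM$, and its only non-endpoint node is $X$, which is a collider on $p$. Because $\S \subseteq \V \setminus \{A, B, X\}$, we have $X \notin \S$; however, $C$ is a child and hence a descendant of $X$, and $C \in \S$. By the definition of d-separation, a collider node blocks a path only when neither the node itself nor any of its descendants lies in the conditioning set, so the collider $X$ does \emph{not} block $p$ given $\S$. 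Since $X$ is the only intermediate node on $p$, the path $p$ is therefore d-connecting given $\S$, which yields $A \notindep B \mid \S$, contradicting the hypothesis $A \indep B \mid \S$. Hence no such $C$ exists, i.e., $\Ch(X) \cap \S = \emptyset$.

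I do not anticipate a real obstacle here: the statement is an immediate consequence of the collider rule in d-separation, and the only points requiring care are bookkeeping ones — noting that $X \notin \S$ follows from the stated domain of $\S$, that a child of $X$ is in particular a descendant of $X$, and that the unshieldedness hypothesis $A \text{---} B \notin \E$ is not even needed for this particular lemma (it guarantees $A \rightarrow X \leftarrow B$ is an unshielded collider but the argument only uses that the length-two path exists). If desired, the write-up can state the d-separation criterion being invoked explicitly (e.g., citing \citep[Sec.~1.2.3]{pearl2009causality}) so the collider-activation step is fully justified. The whole argument is a few lines; no case analysis or computation is involved.
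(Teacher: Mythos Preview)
Your proposal is correct and follows essentially the same approach as the paper: a contradiction argument showing that a child of $X$ in $\S$ would activate the collider path $A \rightarrow X \leftarrow B$, contradicting $A \indep B \mid \S$. Your write-up is in fact more careful than the paper's (explicitly noting $X \notin \S$ and that the unshieldedness hypothesis is unnecessary), but the core idea is identical.
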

\begin{proof}
We prove this by contradiction. 
Let's say that there is child $M \in \S$ 
(the relevant component of the graph is shown in the figure below).

\includegraphics[scale=0.45]{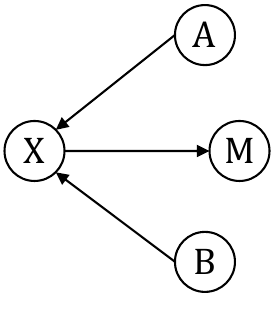}

Since $M$ is a child of $X$, conditioning on $M$
opens up the path $A \rightarrow X \leftarrow B$
rendering $A$ and $B$ dependent conditioned on $\S$.
Therefore, we must have $M \notin \S$.
\end{proof}

\newtheorem*{thm:ldecc-correctness}{Theorem~\ref{thm:ldecc-correctness}}
\begin{thm:ldecc-correctness}[Correctness]
Under the CFA and with access to a CI oracle, we have 
$\Theta_{\text{LDECC}} \overset{\text{set}}{=} \Theta^{*}$.
\end{thm:ldecc-correctness}
\begin{proof}
We will prove the correctness of LDECC by showing that 
(i) every orientable neighbor of the treatment $X$ will get oriented correctly by LDECC; and 
(ii) every unorientable neighbor of the treatment will remain unoriented.

We assume that the function \emph{FindMarkovBlanket} finds the
Markov blanket correctly under the CFA. The IAMB algorithm,
which we use in our experiments, has this property.
Additionally, the function \emph{RunLocalPC} will also return
the correct $\DNe(X)$ under the CFA and with a CI oracle.

\paragraph{Parents are oriented correctly.}
In PC, edges get oriented using UCs and then additional orientations are
propagated via the application of Meek's rules (Figure~\ref{fig:meek-rules}). 

The simplest case is where two parents form a UC at $X$. 
Consider parents $W_1$ and $W_2$ that get oriented because 
they form a UC $W_1 \rightarrow X \leftarrow W_2$. 
Lines~\ref{alg-ldecc:if-cond-nbr-uc},\ref{alg-ldecc:if-cond-nbr-uc-mark} 
will mark $W_1$ and $W_2$ as parents.

We will now consider parents that get oriented due
to each of the four Meek rules and show that LDECC
orients parents for each of the four cases.

\emph{Meek Rule $1$:}

Consider a parent $W$ that gets oriented due to the application of Meek's rule $1$. 
This can only happen due to some UC $A \rightarrow C \leftarrow B$ from which these orientations have been propagated (relevant components of the graph are illustrated in the figure below).

\includegraphics[scale=0.45]{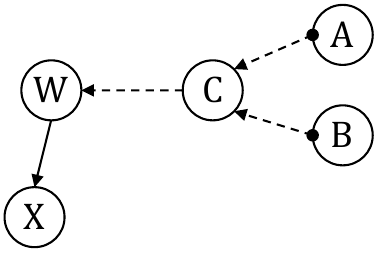}

Thus there is a directed path $C \rightarrow \hdots \rightarrow W \rightarrow X$.
This would mean that $W \in \mns_X(A)$ and $W \in \mns_X(B)$. 
Thus Line~\ref{alg-ldecc:if-cond-ecc-mark} will mark $W$ as a parent.

\emph{Meek Rule $2$:}

Consider a parent $W_2$ that gets oriented due to the application of Meek's rule $2$.
In this case, we have an oriented path $W_2 \rightarrow W_1 \rightarrow X$
but the edge $W_2 \text{---} X$ is unoriented (and Meek Rule 2 must be applied to orient it).

The first possibility is that 
the $X \leftarrow W_1$ was oriented due to 
some UC $A \rightarrow C \leftarrow B$ with a path $C \rightarrow \hdots \rightarrow W_1$
(relevant components of the graph are illustrated in the figure below):

\includegraphics[scale=0.45]{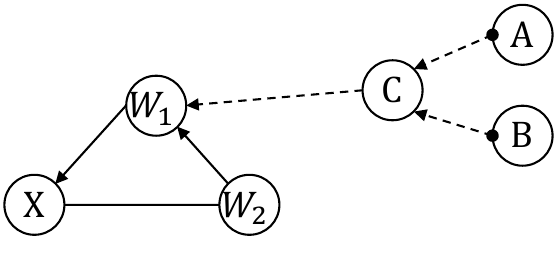}

In this case, there would be a collider at $W_1$: 
$C \rightarrow \hdots \rightarrow W_1 \leftarrow W_2$. 
Thus if $W_1 \in \mns_X(A)$, then $W_2 \in \mns_X(A)$ 
and thus LDECC will mark $W_2$ as a parent in Line~\ref{alg-ldecc:if-cond-ecc-mark}.

The other possibility is that $X \leftarrow W_1$ was oriented due to a UC like $W_3 \rightarrow X \leftarrow W_1$ but there is an edge $W_3 \text{---} W_2$
which causes the collider $W_2 \rightarrow X \leftarrow W_3$ to
be shielded and 
due to this, the $W_2 \text{---} X$ remained unoriented.
However, by definition of the Meek rule, 
the edge $W_2 \rightarrow W_1$ is oriented. 
Thus, 
(i) either there is a UC of the form $W_2 \rightarrow W_1 \leftarrow C$; or
(ii) there is a UC from which 
the $W_2 \rightarrow W_1$ orientation was propagated.
The relevant components of the graph for these two cases are illustrated in the figures below.

\includegraphics[scale=0.45]{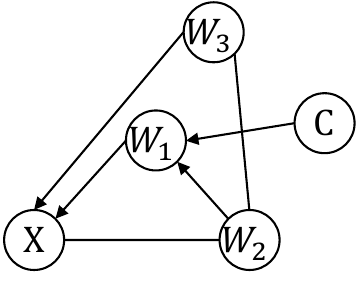}
\hspace{2em}
\includegraphics[scale=0.45]{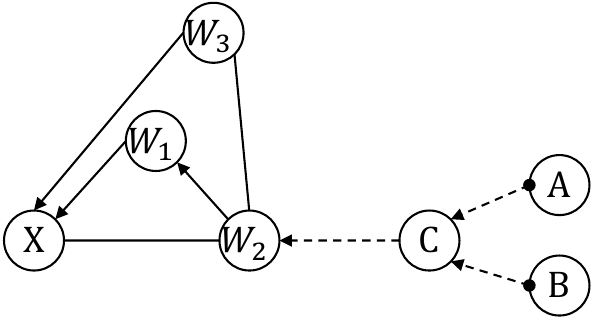}

For Case~(i), $W_2 \in \mns_X(C)$ and for Case~(ii),
$W_2 \in \mns_X(A)$ and $W_2 \in \mns_X(B)$.
In both cases, LDECC will mark $W_2$ as a parent in Lines~\ref{alg-ldecc:if-cond-ecc-mark}.

\emph{Meek Rule $3$:}

Consider a parent $W$ that gets oriented due to the application of Meek's rule $3$ 
(relevant component of the graph is shown in the figure below).

\includegraphics[scale=0.45]{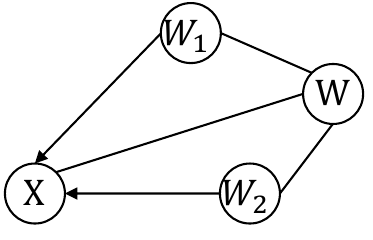}

By definition of the Meek rule, $W_1 \text{---} W \text{---} W_2$ is a non-collider (because if it were a collider, the edges would have been oriented since this triple is unshielded)
and therefore for any $\S \subseteq \V \setminus \{W_1, W_2\}$
such that $W_1 \indep W_2 | \S$, we have $W \in \S$.
Thus Line~\ref{alg-ldecc:meek-rule-3} will mark $W$ as a parent.

\emph{Meek Rule $4$:}

Consider a parent $W$ that gets oriented due to 
the application of Meek's rule $4$. The relevant component of the graph is shown in the figure below.

\includegraphics[scale=0.45]{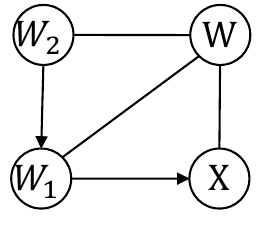}

The first possibility is that the orientations
$W_2 \rightarrow W_1 \rightarrow X$ were propagated 
from a UC $A \rightarrow C \leftarrow B$ with a path $C \rightarrow \hdots \rightarrow W_2$
(the relevant components of the graph are shown in the figure below).

\includegraphics[scale=0.45]{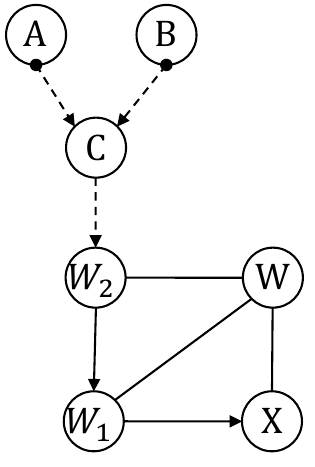}

In this case, due to the non-collider $W_2 \text{---} W \text{---} X$
(because if it were a collider, the edges would have been oriented since this triple is unshielded), 
we have $W \in \mns_X(A)$ and thus $W$ will be marked as a parent in Line~\ref{alg-ldecc:if-cond-ecc-mark}. 

The second possibility (similar to the Meek rule $2$ case) is that 
$W_2 \rightarrow W_1$ 
was oriented due to a UC like $Z \rightarrow W_1 \leftarrow W_2$ 
but there is an edge $Z \text{---} W$ which shields the
$Z \rightarrow W_1 \text{---} W$ causing the 
$W_1 \text{---} W$ edge to remain unoriented 
(the relevant components of the graph are shown in the figure below).

\includegraphics[scale=0.45]{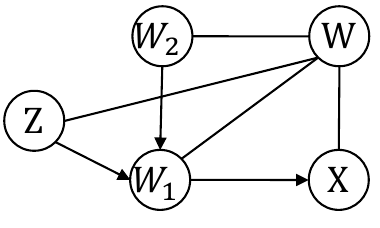}

In this case, we would have $W \in \mns_X(Z)$ and thus
$W$ gets marked as a parent in Line~\ref{alg-ldecc:if-cond-ecc-mark}.

\paragraph{Children are oriented correctly.}
We now similarly show that children of $X$ get oriented
correctly.

The simplest case is when there is a UC of the form
$X \rightarrow M \leftarrow V$.
Since $\MB(X)$ and $\DNe(X)$ are correct,
the function \emph{GetUCChildren} (Fig.~\ref{fig:algo-ldecc-functions}) will mark $M$ as a child.

Now, we consider each Meek rule one at a time and show that
LDECC will orient children for each rule.

\emph{Meek Rule $1$:}

Consider a child $M$ that gets oriented due to the application of Meek's rule $1$.
This can only happen if there is some parent $W$ that gets oriented
and $W \text{---} X \text{---} M$ forms an unshielded non-collider.
In this case, Line~\ref{alg-ldecc:mark-child-using-non-coll} will mark $M$ as a child.

\emph{Meek Rule $2$:}

Consider a child $M_2$ that gets oriented due to the application of Meek's rule $2$:
there is an oriented path $X \rightarrow M_1 \rightarrow M_2$ but the 
$X \text{---} M_2$ edge is still unoriented 
(the relevant component of the graph is shown in the figure below).

\includegraphics[scale=0.45]{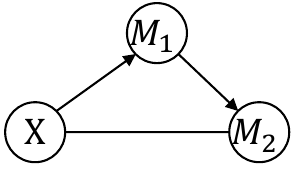}

One possibility is that there is UC of the form $V \rightarrow M_1 \rightarrow X$
which orients the $X \rightarrow M_1$ edge
and $V \text{---} M_1 \text{---} M_2$ is a non-collider which orients
the $M_1 \rightarrow M_2$ edge
(the relevant components of the graph are shown in the figure below).

\includegraphics[scale=0.45]{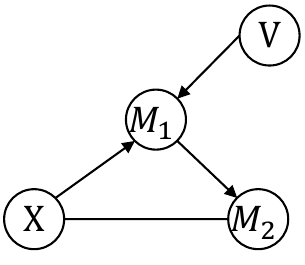}

In this case, since $V$ is a spouse (i.e., parent of child) of $X$,
the function \emph{GetUCChildren} (Fig.~\ref{fig:algo-ldecc-functions}) 
will mark $M_2$ as a child.

Note that if the $X \rightarrow M_1$ was oriented due to
a UC upstream of $X$ via the
application of Meek rule $1$, this would also
cause the $X \rightarrow M_2$ edge to be oriented 
(and thus Meek rule 2 would not apply).

The other possibility is that there might be a UC of the form
$M_1 \rightarrow M_2 \leftarrow Z$ that can orient $M_1 \rightarrow M_2$.
However, for the $X \rightarrow M_1$ edge to remain unoriented,
there must be an edge $Z \text{---} X$ to shield the $X \text{---} M_2 \text{---} Z$
collider. If this happens, Meek rule $3$ would apply (which we handle separately as shown next).

\emph{Meek Rule $3$:}

Consider a child $M$ that gets oriented due to the application of Meek's rule $3$
(the relevant component of the graph is shown in the figure below).

\includegraphics[scale=0.45]{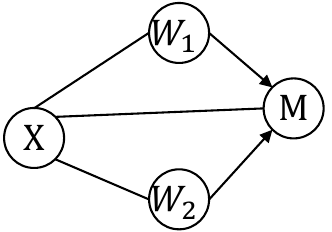}

By definition of the Meek rule, $W_1 \text{---} X \text{---} W_2$
is a non-collider 
(because if it were a collider, the edges would have been oriented since this triple is unshielded)
and since $W_1 \rightarrow M \leftarrow W_2$
forms a collider, we have $W_1 \notindep W_2 | \S \cup \{ M \}$
for any $\S$ s.t. $W_1 \indep W_2 | \S$.
Thus Line~\ref{alg-ldecc:meek-rule-3-and-4-child} will mark $M$ as a child.

\emph{Meek Rule $4$:}

Consider a child $M$ that gets oriented due to the application of Meek's rule $4$
(the relevant component of the graph is shown in the figure below).

\includegraphics[scale=0.45]{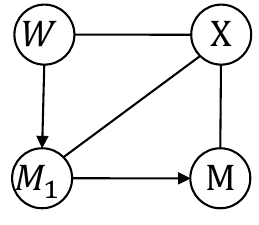}

One possibility such that the $W \rightarrow M_1$ gets oriented leaving the edges
$W \text{---} X$, $X \text{---} M$, and $X \text{---} M_1$ unoriented
is if there is a UC of the form $V \rightarrow M_1 \leftarrow W$
where there is an edge $V \text{---} X$ to shield the $X \text{---} M_1$ edge
(the relevant component of the graph is shown in the figure below).

\includegraphics[scale=0.45]{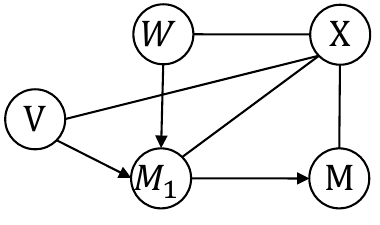}

Here the triple $W \text{---} X \text{---} V$ must be a non-collider to
keep the $X \text{---} M$ edge unoriented 
(otherwise applying Meek rule $1$ from the UC $W \rightarrow X \leftarrow V$ would orient $X \rightarrow M$).
So for any $\S$ such that $V \indep W | \S$, we must have $X \in \S$
and $V \notindep W | \S \cup \{M\}$.
Therefore, Line~\ref{alg-ldecc:meek-rule-3-and-4-child} will mark $M$ as a child.

The other possibility is that the $W \rightarrow M_1$ gets oriented due to
Meek rule $3$
(the relevant component of the graph is shown in the figure below).

\includegraphics[scale=0.45]{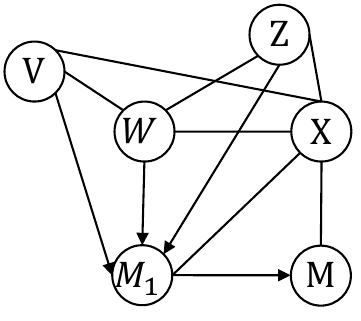}

Here the $Z \text{---} X$ and $V \text{---} X$ must be present to keep
the $X \text{---} M_1$ edge unoriented
(because otherwise an unshielded collider would be created).
Furthermore, the triple $Z \text{---} X \text{---} V$ must be a non-collider
in order to keep the $X \text{---} M$ edge unoriented
(otherwise applying Meek rule $1$ from the UC $Z \rightarrow X \leftarrow V$ would orient $X \rightarrow M$).
So for any $\S$ such that $V \indep Z | \S$, we must have $X \in \S$
and $V \notindep Z | \S \cup \{M\}$.
Therefore, Line~\ref{alg-ldecc:meek-rule-3-and-4-child} will mark $M$ as a child.

\paragraph{No spurious orientations.}
Now we prove that nodes are never oriented the wrong way by LDECC.

We show that \emph{GetUCChildren} (Fig.~\ref{fig:algo-ldecc-functions}) 
will never orient a parent as a child. 
We prove this by contradiction. Let's say there is a parent $W$ that is oriented as a child by \emph{GetUCChildren}. 
This will happen if there is a node $D \in \MB(X) \setminus \DNe(X)$ s.t.
$D \notindep W | \text{DSep}(D, X)$ with $W \notin \text{DSep}(D, X)$. 
Thus, there would a path $D \text{---} W$.
Since $W$ is a parent, there would be a path $D \text{---} W \rightarrow X$
leading to $W \in \text{DSep}(D, X)$ which is a contradiction.

Line~\ref{alg-ldecc:if-cond-nbr-uc-mark} can never mark a child as a parent 
since otherwise the CFA would be violated.

Line~\ref{alg-ldecc:meek-rule-3-and-4-child} will not mark a parent as a child. 
Consider a parent $M$ that incorrectly gets marked as a child by Line~\ref{alg-ldecc:meek-rule-3-and-4-child}
(the relevant component of the graph is shown in the figure below).

\includegraphics[scale=0.45]{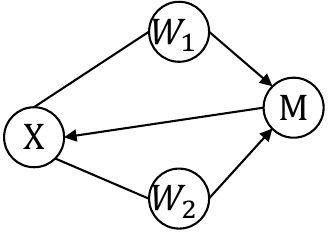}

For Line~\ref{alg-ldecc:meek-rule-3-and-4-child} to be reached, 
the if-condition in Line~\ref{alg-ldecc:if-cond-non-coll} must be \emph{True}.
This will happen if
$W1 \text{---} X \text{---} W2$ is a non-collider.
Thus at least one of $W_1$ or $W_2$ is a child. 
W.l.o.g., let's assume that $W_1$ is a child.
If that happens, a cycle gets created: $X \rightarrow W_1 \rightarrow M \rightarrow X$.
Therefore $M$ can never be oriented by Line~\ref{alg-ldecc:meek-rule-3-and-4-child}.

Line~\ref{alg-ldecc:if-cond-ecc-mark} cannot mark a child as a parent because of
the correctness of the ECC (Prop.~\ref{prop:ecc}).

Line~\ref{alg-ldecc:meek-rule-3} will not mark a child as a parent. 
Both $A$ and $B$
from Line~\ref{alg-ldecc:if-cond-nbr-uc} are parents of $X$.
By Lemma~\ref{lemma:apdx-child-cannot-sep-two-non-desc},
the set $\S$ in Line~\ref{alg-ldecc:meek-rule-3} cannot contain a child.

Line~\ref{alg-ldecc:mark-child-using-non-coll} will not add a child as a parent
because otherwise the CFA would be violated.
\end{proof}

\subsection{Omitted proofs for computational requirements (Sec.~\ref{sec:comparison-of-tests})} 
\label{sec:apdx-computational}
\newtheorem*{prop:num-tests-pc-vs-ldecc}{Proposition~\ref{prop:num-tests-pc-vs-ldecc}}
\begin{prop:num-tests-pc-vs-ldecc}[PC vs LDECC]
We have $T_{\text{LDECC}} \leq T_{\text{PC}} + \OM(|\V|^2) + \OM\left(|\V| \cdot 2^{|\DNe(X)|}\right).$
\end{prop:num-tests-pc-vs-ldecc}
\begin{proof}
LDECC performs $\OM\left(|\V| \cdot 2^{|\DNe(X)|}\right)$ CI tests to find $\DNe(X)$.
After that LDECC runs CI tests like PC.
The \emph{GetMNS} function requires $\OM(2^{|\DNe(X)|})$ CI tests.
The $\OM(|\V|^2)$ term accounts for the
extra CI tests of the form $A \notindep B | \S \cup \{X\}$
we might run for ECCs.
\end{proof}

\newtheorem*{prop:LDECC-upper-bound}{Proposition~\ref{prop:LDECC-upper-bound}}
\begin{prop:LDECC-upper-bound}[LDECC upper bound]
For a locally orientable DAG $\GM^{*}$, 
let $D = \max_{V \in \MBp(X)} \\ |\DNe(V)|$ and $S = \max_{ P \in \Pa(X)} \min_{\alpha \in \text{POC}(P)} \text{sep}(\alpha)$.
Then $T_{\text{LDECC}} \leq \OM(|\V|^{\max\{ S, D \}})$.
\end{prop:LDECC-upper-bound}
\begin{proof}
Since the graph is locally orientable,
all neighbors of $X$ will get oriented.
The complexity of discovering the neighbors of $X$ is
upper bounded by $\OM(|\V|^{|\MBp(X)|})$;
that of discovering any non-colliders of the
form $A \text{---} X \text{---} B$ is $\OM(|\V|^{D})$;
and in order to unshield the colliders that orient
the parents, LDECC runs $\OM(|\V|^S)$ tests.
Thus the total number of CI tests is $\OM(|\V|^{\max\{ S, D \}})$.
\end{proof}

\newtheorem*{prop:sd-upper-bound}{Proposition~\ref{prop:sd-upper-bound}}
\begin{prop:sd-upper-bound}[SD upper bound]
For a locally orientable DAG $\GM^{*}$,
let $\pi : \V \rightarrow \mathbb{N}$ be
the order in which nodes are processed by SD
(Line~\ref{alg:sd-queue-pop} of SD).
For $P \in \Pa(X)$, let
$\CUC(P) = \argmin_{\alpha \in \text{POC}(P)}\pi(\alpha)$
denote the closest UC
to $P$.
Let $C = \max_{ P \in \Pa(X)} \text{sep}(\CUC(P))$,
$D = \max_{V \in \MBp(X)} |\DNe(V)|$, and
$E = \max_{ \{ V : \pi(V) < \pi(\CUC(P)) \} } |\DNe(V)|$.
Then $T_{\text{SD}} \leq \OM(|\V|^{\max\{C, D, E \}})$.
\end{prop:sd-upper-bound}
\begin{proof}
Like LDECC, the complexity of discovering the
neighbors of $X$ and
non-colliders of the form $A \text{---} X \text{---} B$
is at most $\OM(|\V|^D)$ tests.
Then, in order to sequentially reach the closest UCs,
SD runs $\OM(|\V|^{E})$ tests. 
Once the collider is reached, SD runs
$\OM(|\V|^C)$ CI tests to unshield them.
\end{proof}

\subsection{Omitted proofs for faithfulness requirements (Sec.~\ref{sec:faithfulness})} \label{sec:apdx-faithfulness}
\newtheorem*{prop:pc-mec}{Proposition~\ref{prop:pc-mec}}
\begin{prop:pc-mec}
PC will identify the MEC of $\GM^*$ if LF holds for all nodes.
\end{prop:pc-mec}
\begin{proof}
It is known that the PC algorithm correctly identifies the
MEC of $\GM^*$ if AF and OF hold for all nodes (see e.g., \citet{zhang2008detection}).
AF for all nodes ensures that the skeleton is recovered correctly.
OF for all unshielded triples ensures that UCs are detected
correctly and that the orientations propagated via Meek's rules
are correct. 
\end{proof}

\newtheorem*{prop:sd-ff}{Proposition~\ref{prop:sd-ff}}
\begin{prop:sd-ff}[Faithfulness for PC and SD]
PC and SD will identify $\Theta^{*}$ if
(i) LF holds $\forall V \in \MBp(X)$;
(ii) $\forall (A \rightarrow C \leftarrow B) \in J^*$,
(a) LF holds for $A$, $B$, and $C$, and
(b) LF holds for each node on all paths $C \rightarrow \hdots \rightarrow V \in \GM^*$ 
s.t. $V \in \DNe(X)$;
(iii) For every edge $A\text{---}B \notin \GM^*$, $\exists \S \subseteq (\DNe_{\UM}(A) \cup \DNe_{\UM}(B))$ s.t. $A \indep B | \S$; and
(iv) OF holds for all unshielded triples in $\GM^*$.
\end{prop:sd-ff}
\begin{proof}
Similar to the proof of Thm.~\ref{thm:ldecc-correctness}, we will 
prove this by showing that all neighbors of $X$ get oriented
correctly and the unorientable neighbors remain unoriented.

The key ideas of the proof are as follows:
(1) Condition~(i) guarantees that the structure inside $\MBp(X)$
is discovered correctly which further ensures that Meek rules $2$-$4$
work correctly (since, as shown in the proof of Thm.~\ref{thm:ldecc-correctness},
they are only applied inside $\MBp(X)$);
(2) Condition~(ii)(a) guarantees that each UC in $\GM^*$ is detected
and unshielded;
(3) Condition~(ii)(b) guarantees that orientations from each UC in $\GM^*$
are propagated correctly to $X$;
(4) Condition~(iii) guarantees that the undirected skeleton discovered by
PC and SD is a subgraph of the skeleton of $\GM^*$: This is because
PC and SD remove an edge $A \text{---} B$ 
by running CI tests by conditioning on neighbors of $A$ and $B$
in the current undirected skeleton; and
(5) Condition~(iv) guarantees that in the skeleton recovered by PC and SD,
there are no incorrectly detected UCs.

\paragraph{Parents are oriented correctly.}
In PC and SD, edges get oriented using UCs and then additional orientations are
propagated via the application of Meek's rules (Figure~\ref{fig:meek-rules}). 

The simplest case is where two parents form a UC at $X$. 
Consider parents $W_1$ and $W_2$ that get oriented because 
they form a UC $W_1 \rightarrow X \leftarrow W_2$. 
Condition~(i) ensures they are marked as parents.

We will now consider parents that get oriented due
to each of the four Meek rules and show that SD and PC
orient parents for each of the four cases.

\emph{Meek Rule $1$:}

Consider a parent $W$ that gets oriented due to the application of Meek's rule $1$. 
This can only happen due to some UC $A \rightarrow C \leftarrow B$ from which these orientations have been propagated (relevant components of the graph are illustrated in the figure below).

\includegraphics[scale=0.45]{figures/ldecc-correctness-proof/parent-meek-rule-1.pdf}

Thus there is a directed path $C \rightarrow \hdots \rightarrow W \rightarrow X$.
By Condition~(ii)(a), the UC $A \rightarrow C \leftarrow B$ will get
detected correctly, and by Condition~(ii)(b), the orientations will 
be propagated correctly along this path.

\emph{Meek Rule $2$:}

Consider a parent $W_2$ that gets oriented due to the application of Meek's rule $2$.
In this case, we have an oriented path $W_2 \rightarrow W_1 \rightarrow X$
but the edge $W_2 \text{---} X$ is unoriented (and Meek Rule 2 must be applied to orient it).
Condition~(i) ensures that this relevant component of the graph
is discovered correctly.

\emph{Meek Rule $3$:}

Consider a parent $W$ that gets oriented due to the application of Meek's rule $3$ 
(relevant component of the graph is shown in the figure below).

\includegraphics[scale=0.45]{figures/ldecc-correctness-proof/parent-meek-rule-3.pdf}

Condition~(i) ensures that this relevant component of the graph
is discovered correctly.

\emph{Meek Rule $4$:}

Consider a parent $W$ that gets oriented due to 
the application of Meek's rule $4$. The relevant component of the graph is shown in the figure below.

\includegraphics[scale=0.45]{figures/ldecc-correctness-proof/parent-meek-rule-4-main-component.pdf}

Condition~(i) ensures that this relevant component of the graph
is discovered correctly.

\paragraph{Children are oriented correctly.}
We now similarly show that children of $X$ get oriented
correctly.

The simplest case is when there is a UC of the form
$X \rightarrow M \leftarrow V$.
Condition~(i) ensures that this UC is discovered correctly.

Now, we consider each Meek rule one at a time and show that
SD and PC will orient children for each rule.

\emph{Meek Rule $1$:}

Consider a child $M$ that gets oriented due to the application of Meek's rule $1$.
This can only happen if there is some parent $W$ that gets oriented
and $W \text{---} X \text{---} M$ forms an unshielded non-collider.
Condition~(i) ensures that this unshielded non-collider is discovered correctly.

\emph{Meek Rule $2$:}

Consider a child $M_2$ that gets oriented due to the application of Meek's rule $2$:
there is an oriented path $X \rightarrow M_1 \rightarrow M_2$ but the 
$X \text{---} M_2$ edge is still unoriented 
(the relevant component of the graph is shown in the figure below).

\includegraphics[scale=0.45]{figures/ldecc-correctness-proof/child-meek-rule-2-main-component.pdf}

Condition~(i) ensures that this relevant component of the graph
is discovered correctly.

\emph{Meek Rule $3$:}

Consider a child $M$ that gets oriented due to the application of Meek's rule $3$
(the relevant component of the graph is shown in the figure below).

\includegraphics[scale=0.45]{figures/ldecc-correctness-proof/child-meek-rule-3.pdf}

Condition~(i) ensures that this relevant component of the graph
is discovered correctly.

\emph{Meek Rule $4$:}

Consider a child $M$ that gets oriented due to the application of Meek's rule $4$
(the relevant component of the graph is shown in the figure below).

\includegraphics[scale=0.45]{figures/ldecc-correctness-proof/child-meek-rule-4-main-component.pdf}

Condition~(i) ensures that this relevant component of the graph
is discovered correctly.

\paragraph{No spurious orientations.}

As argued in the preamble of the proof, Conditions~(iii, iv)
ensure that there are no incorrectly detected UCs.
Condition~(ii) ensures that no incorrect orientations are propagated
from the detected UCs.
\end{proof}

\newtheorem*{prop:ldecc-ff}{Proposition~\ref{prop:ldecc-ff}}
\begin{prop:ldecc-ff}[Faithfulness for LDECC]
LDECC will identify $\Theta^{*}$  
if (i) LF holds $\forall V \in \MBp(X)$;
(ii) $H \subseteq H^*$;
(iii) $\forall (A, B, \S) \in H$, MFF holds for $\{A, B\} \setminus \DNe(X)$; and
(iv) $\forall (A, B, \S) \in H^*$ s.t. there is a UC $(A \rightarrow C \leftarrow B) \in \GM^*$,
we have (a) AF holds for $A$ and $B$; and (b) $(A, B, \S) \in H$.
\end{prop:ldecc-ff}
\begin{proof}
The proof is very similar to that of Theorem~\ref{thm:ldecc-correctness}.
The high-level idea is as follows.
For any nodes in $\DNe(X)$ that are oriented \emph{without}
using ECCs, Condition~(i) will ensure they get oriented correctly
as they only depend on the structure inside $\MB(X)$.
For nodes that get oriented via ECCs, Condition~(iv)(a) ensures that each UC eventually gets
unshielded.
For any UC $A \rightarrow C \leftarrow B$ in Condition~(iv),
since AF holds for both $A$ and $B$,
$\DNe(A)$ and $\DNe(B)$ are detected correctly.
Since $\exists \S \subseteq (\DNe(A) \cup \DNe(B))$ s.t. 
$A \indep B | \S$,
we will eventually remove the $A \text{---} B$
thereby unshielding this collider.
Condition~(iv)(b) ensures that we run an ECC for this UC,
i.e., the if-block in Line~\ref{alg-ldecc:if-cond-ecc} is entered.
Next, by Condition~(iii), since MFF holds for $A$ and $B$,
the \emph{GetMNS} function will correctly return the parents of $X$ that
this UC orients.

We assume that the function \emph{FindMarkovBlanket} identifies the
Markov blanket correctly under Condition~(i). The IAMB algorithm,
which we use in our experiments, has this property.
Additionally, the function \emph{RunLocalPC} will also identify
the correct $\DNe(X)$ under Condition~(i).

\paragraph{Parents are oriented correctly.}

The simplest case is where two parents form a UC at $X$. 
Consider parents $W_1$ and $W_2$ that get oriented because 
they form a UC $W_1 \rightarrow X \leftarrow W_2$. 
By Condition~(i), Lines~\ref{alg-ldecc:if-cond-nbr-uc},\ref{alg-ldecc:if-cond-nbr-uc-mark} 
will mark $W_1$ and $W_2$ as parents.

We will now consider parents that get oriented due
to each of the four Meek rules and show that LDECC
orients parents for each of the four cases.

\emph{Meek Rule $1$:}

Consider a parent $W$ that gets oriented due to the application of Meek's rule $1$. 
This can only happen due to some UC $A \rightarrow C \leftarrow B$ from which these orientations have been propagated (relevant components of the graph are illustrated in the figure below).

\includegraphics[scale=0.45]{figures/ldecc-correctness-proof/parent-meek-rule-1.pdf}

As explained in the preamble, Conditions~(iii, iv) ensure that
ECCs mark parent correctly and thus Line~\ref{alg-ldecc:if-cond-ecc-mark} will mark $W$ as a parent.

\emph{Meek Rule $2$:}

Consider a parent $W_2$ that gets oriented due to the application of Meek's rule $2$.
In this case, we have an oriented path $W_2 \rightarrow W_1 \rightarrow X$
but the edge $W_2 \text{---} X$ is unoriented (and Meek Rule 2 must be applied to orient it).

The first possibility is that 
the $X \leftarrow W_1$ was oriented due to 
some UC $A \rightarrow C \leftarrow B$ with a path $C \rightarrow \hdots \rightarrow W_1$
(relevant components of the graph are illustrated in the figure below):

\includegraphics[scale=0.45]{figures/ldecc-correctness-proof/parent-meek-rule-2a.pdf}

By Conditions~(iii, iv), LDECC will mark $W_2$ as a parent in Line~\ref{alg-ldecc:if-cond-ecc-mark}.

The other possibility is that $X \leftarrow W_1$ was oriented due to a UC like $W_3 \rightarrow X \leftarrow W_1$ but there is an edge $W_3 \text{---} W_2$
which causes the collider $W_2 \rightarrow X \leftarrow W_3$ to
be shielded and 
due to this, the $W_2 \text{---} X$ remained unoriented.
However, by definition of the Meek rule, 
the edge $W_2 \rightarrow W_1$ is oriented. 
Thus, 
(i) either there is a UC of the form $W_2 \rightarrow W_1 \leftarrow C$; or
(ii) there is a UC from which 
the $W_2 \rightarrow W_1$ orientation was propagated.
The relevant components of the graph for these two cases are illustrated in the figures below.

\includegraphics[scale=0.45]{figures/ldecc-correctness-proof/parent-meek-rule-2b.pdf}
\hspace{2em}
\includegraphics[scale=0.45]{figures/ldecc-correctness-proof/parent-meek-rule-2c.pdf}

In both cases, by Conditions~(iii, iv), LDECC will mark $W_2$ as a parent in Lines~\ref{alg-ldecc:if-cond-ecc-mark}.

\emph{Meek Rule $3$:}

Consider a parent $W$ that gets oriented due to the application of Meek's rule $3$ 
(relevant component of the graph is shown in the figure below).

\includegraphics[scale=0.45]{figures/ldecc-correctness-proof/parent-meek-rule-3.pdf}

By definition of the Meek rule, $W_1 \text{---} W \text{---} W_2$ is a non-collider (because if it were a collider, the edges would have been oriented since this triple is unshielded)
and therefore for any $\S \subseteq \V \setminus \{W_1, W_2\}$
such that $W_1 \indep W_2 | \S$,
by Condition~(i), we have $W \in \S$.
Thus Line~\ref{alg-ldecc:meek-rule-3} will mark $W$ as a parent.

\emph{Meek Rule $4$:}

Consider a parent $W$ that gets oriented due to 
the application of Meek's rule $4$. The relevant component of the graph is shown in the figure below.

\includegraphics[scale=0.45]{figures/ldecc-correctness-proof/parent-meek-rule-4-main-component.pdf}

The first possibility is that the orientations
$W_2 \rightarrow W_1 \rightarrow X$ were propagated 
from a UC $A \rightarrow C \leftarrow B$ with a path $C \rightarrow \hdots \rightarrow W_2$
(the relevant components of the graph are shown in the figure below).

\includegraphics[scale=0.45]{figures/ldecc-correctness-proof/parent-meek-rule-4a.pdf}

In this case, due to the non-collider $W_2 \text{---} W \text{---} X$
(because if it were a collider, the edges would have been oriented since this triple is unshielded), 
we have $W \in \mns_X(A)$ and thus by Conditions~(iii, iv), $W$ will be marked as a parent in Line~\ref{alg-ldecc:if-cond-ecc-mark}. 

The second possibility (similar to the Meek rule $2$ case) is that 
$W_2 \rightarrow W_1$ 
was oriented due to a UC like $Z \rightarrow W_1 \leftarrow W_2$ 
but there is an edge $Z \text{---} W$ which shields the
$Z \rightarrow W_1 \text{---} W$ causing the 
$W_1 \text{---} W$ edge to remain unoriented 
(the relevant components of the graph are shown in the figure below).

\includegraphics[scale=0.45]{figures/ldecc-correctness-proof/parent-meek-rule-4b.pdf}

In this case, we would have $W \in \mns_X(Z)$ and thus
by Conditions~(iii, iv), $W$ gets marked as a parent in Line~\ref{alg-ldecc:if-cond-ecc-mark}.

\paragraph{Children are oriented correctly.}
We now similarly show that children of $X$ get oriented
correctly.

The simplest case is when there is a UC of the form
$X \rightarrow M \leftarrow V$.
By Condition~(i), since $\MB(X)$ and $\DNe(X)$ are correct,
the function \emph{GetUCChildren} (Fig.~\ref{fig:algo-ldecc-functions}) 
will mark $M$ as a child.

Now, we consider each Meek rule one at a time and show that
LDECC will orient children for each rule.

\emph{Meek Rule $1$:}

Consider a child $M$ that gets oriented due to the application of Meek's rule $1$.
This can only happen if there is some parent $W$ that gets oriented
and $W \text{---} X \text{---} M$ forms an unshielded non-collider.
By Condition~(i), Line~\ref{alg-ldecc:mark-child-using-non-coll} will mark $M$ as a child.

\emph{Meek Rule $2$:}

Consider a child $M_2$ that gets oriented due to the application of Meek's rule $2$:
there is an oriented path $X \rightarrow M_1 \rightarrow M_2$ but the 
$X \text{---} M_2$ edge is still unoriented 
(the relevant component of the graph is shown in the figure below).

\includegraphics[scale=0.45]{figures/ldecc-correctness-proof/child-meek-rule-2-main-component.pdf}

One possibility is that there is UC of the form $V \rightarrow M_1 \rightarrow X$
which orients the $X \rightarrow M_1$ edge
and $V \text{---} M_1 \text{---} M_2$ is a non-collider which orients
the $M_1 \rightarrow M_2$ edge
(the relevant components of the graph are shown in the figure below).

\includegraphics[scale=0.45]{figures/ldecc-correctness-proof/child-meek-rule-2a.pdf}

In this case, since $V$ is a spouse (i.e., parent of child) of $X$,
by Condition~(i),
the function \emph{GetUCChildren} (Fig.~\ref{fig:algo-ldecc-functions}) 
will mark $M_2$ as a child.

Note that if the $X \rightarrow M_1$ was oriented due to
a UC upstream of $X$ via the
application of Meek rule $1$, this would also
cause the $X \rightarrow M_2$ edge to be oriented 
(and thus Meek rule 2 would not apply).

The other possibility is that there might be a UC of the form
$M_1 \rightarrow M_2 \leftarrow Z$ that can orient $M_1 \rightarrow M_2$.
However, for the $X \rightarrow M_1$ edge to remain unoriented,
there must be an edge $Z \text{---} X$ to shield the $X \text{---} M_2 \text{---} Z$
collider. If this happens, Meek rule $3$ would apply (which we handle separately as shown next).

\emph{Meek Rule $3$:}

Consider a child $M$ that gets oriented due to the application of Meek's rule $3$
(the relevant component of the graph is shown in the figure below).

\includegraphics[scale=0.45]{figures/ldecc-correctness-proof/child-meek-rule-3.pdf}

By definition of the Meek rule, $W_1 \text{---} X \text{---} W_2$
is a non-collider 
(because if it were a collider, the edges would have been oriented since this triple is unshielded)
and since $W_1 \rightarrow M \leftarrow W_2$
forms a collider, we have $W_1 \notindep W_2 | \S \cup \{ M \}$
for any $\S$ s.t. $W_1 \indep W_2 | \S$.
Thus, by Condition~(i), Line~\ref{alg-ldecc:meek-rule-3-and-4-child} will mark $M$ as a child.

\emph{Meek Rule $4$:}

Consider a child $M$ that gets oriented due to the application of Meek's rule $4$
(the relevant component of the graph is shown in the figure below).

\includegraphics[scale=0.45]{figures/ldecc-correctness-proof/child-meek-rule-4-main-component.pdf}

One possibility such that the $W \rightarrow M_1$ gets oriented leaving the edges
$W \text{---} X$, $X \text{---} M$, and $X \text{---} M_1$ unoriented
is if there is a UC of the form $V \rightarrow M_1 \leftarrow W$
where there is an edge $V \text{---} X$ to shield the $X \text{---} M_1$ edge
(the relevant component of the graph is shown in the figure below).

\includegraphics[scale=0.45]{figures/ldecc-correctness-proof/child-meek-rule-4.pdf}

Here the triple $W \text{---} X \text{---} V$ must be a non-collider to
keep the $X \text{---} M$ edge unoriented 
(otherwise applying Meek rule $1$ from the UC $W \rightarrow X \leftarrow V$ would orient $X \rightarrow M$).
So for any $\S$ such that $V \indep W | \S$,
by Condition~(i), we must have $X \in \S$
and $V \notindep W | \S \cup \{M\}$.
Therefore, Line~\ref{alg-ldecc:meek-rule-3-and-4-child} will mark $M$ as a child.

The other possibility is that the $W \rightarrow M_1$ gets oriented due to
Meek rule $3$
(the relevant component of the graph is shown in the figure below).

\includegraphics[scale=0.45]{figures/ldecc-correctness-proof/child-meek-rule-4b.pdf}

Here the $Z \text{---} X$ and $V \text{---} X$ must be present to keep
the $X \text{---} M_1$ edge unoriented
(because otherwise an unshielded collider would be created).
Furthermore, the triple $Z \text{---} X \text{---} V$ must be a non-collider
in order to keep the $X \text{---} M$ edge unoriented
(otherwise applying Meek rule $1$ from the UC $Z \rightarrow X \leftarrow V$ would orient $X \rightarrow M$).
So for any $\S$ such that $V \indep Z | \S$,
by Condition~(i), we must have $X \in \S$
and $V \notindep Z | \S \cup \{M\}$.
Therefore, Line~\ref{alg-ldecc:meek-rule-3-and-4-child} will mark $M$ as a child.

\paragraph{No spurious orientations.}
Now we prove that nodes are never oriented the wrong way by LDECC.

By Condition~(i), \emph{GetUCChildren} (Fig.~\ref{fig:algo-ldecc-functions}) 
will never orient a parent as a child.

Line~\ref{alg-ldecc:if-cond-nbr-uc-mark} can never mark a child as a parent 
since otherwise Condition~(i) would be violated.

Line~\ref{alg-ldecc:meek-rule-3-and-4-child} will not mark a parent as a child. 
Consider a parent $M$ that incorrectly gets marked as a child by Line~\ref{alg-ldecc:meek-rule-3-and-4-child}
(the relevant component of the graph is shown in the figure below).

\includegraphics[scale=0.45]{figures/ldecc-correctness-proof/parent-spurious-1.pdf}

For Line~\ref{alg-ldecc:meek-rule-3-and-4-child} to be reached, 
the if-condition in Line~\ref{alg-ldecc:if-cond-non-coll} must be \emph{True}.
This will happen if
$W1 \text{---} X \text{---} W2$ is a non-collider.
Thus at least one of $W_1$ or $W_2$ is a child. 
W.l.o.g., let's assume that $W_1$ is a child.
If that happens, a cycle gets created: $X \rightarrow W_1 \rightarrow M \rightarrow X$.
Therefore, Condition~(i) ensures that $M$ can never be oriented by Line~\ref{alg-ldecc:meek-rule-3-and-4-child}.

By Condition~(ii), every ECC that is run is valid, and thus
Line~\ref{alg-ldecc:if-cond-ecc-mark} cannot mark a child as a parent because of
the correctness of the ECC (Prop.~\ref{prop:ecc}).

Similarly, by Condition~(i), Line~\ref{alg-ldecc:meek-rule-3} 
will not mark a child as a parent. 
Both $A$ and $B$
from Line~\ref{alg-ldecc:if-cond-nbr-uc} are parents of $X$.
By Lemma~\ref{lemma:apdx-child-cannot-sep-two-non-desc},
a child cannot d-separate two non-descendants nodes
and thus the set $\S$ in Line~\ref{alg-ldecc:meek-rule-3} cannot contain a child.

Line~\ref{alg-ldecc:mark-child-using-non-coll} will not add a child as a parent
because otherwise Condition~(i) would be violated.
\end{proof}

We now provide sufficient faithfulness conditions for LDECC when \emph{ECCParents}($A, B, \S$)
is run with \emph{check=True}.

For a node $V \notin \DNep(X)$, let $\Q(V) = \{ \S \subseteq \DNe(X) : V \indep X | \S \}$
and $\Q_{\min}(V) = \{ \S \in \Q(V) : |\S| = \min_{\S' \in \Q(V)} |\S'| \}$.
With $H^*$ and $H$ as defined in Sec.~\ref{sec:faithfulness},
let $H^{\text{(check)}} = \{ (A, B, \S) \in H : \{A, B\} \cap \DNe(X) = \emptyset; \, \text{and} \, \Q_{\min}(A) \cap \Q_{\min}(B) \neq \emptyset \}$;
let $H^{\text{(single)}} = \{ (A, B, \S) \in H : |\{A, B\} \cap \DNe(X)| = 1 \}$; and
let $H^{\text{(total)}} = H^{\text{(check)}} \cup H^{\text{(single)}}$.

\begin{lemma}\label{lemma:apdx-mns-uc-same}
For a true graph $\GM^*$, let $(A \rightarrow C \leftarrow B)$
be a UC such that $(A, B, \S) \in H^*$ for some subset $\S$.
Then, we have that $\nss_X(A) = \nss_X(B)$.
\end{lemma}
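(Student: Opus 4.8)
The plan is to deduce $\nss_X(A)=\nss_X(B)$ from two symmetric inclusions, each obtained by showing that the MNS of one of the two nodes d-separates the \emph{other} node from $X$, and then invoking uniqueness of the MNS (Prop.~\ref{prop:mns-unique}). First I would apply Prop.~\ref{prop:ecc} to $(A,B,\S)\in H^{*}$: this yields $A,B\notin\Desc(X)$ (so neither node is adjacent to $X$ and both $\nss_X(A),\nss_X(B)$ are well defined) and $\nss_X(A),\nss_X(B)\subseteq\Pa(X)$.

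The heart of the argument is to show that the collider $C$ of the unshielded collider $A\rightarrow C\leftarrow B$ is $X$ or an ancestor of $X$; since $C=X$ would force both $A$ and $B$ into $\DNe(X)$, contradicting $(A,B,\S)\in\L$, this gives a directed path $A\rightarrow C\rightarrow\cdots\rightarrow X$ and, symmetrically, $B\rightarrow C\rightarrow\cdots\rightarrow X$. The justification: $A\indep B|\S$ forces the path $A\rightarrow C\leftarrow B$ to be blocked, so neither $C$ nor any descendant of $C$ lies in $\S$; since $A\notindep B|\S\cup\{X\}$ and the only change is the addition of $X$, that same path must become active, which is possible only if $X\in\{C\}\cup\Desc(C)$. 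Granting this, since $A\indep X|\nss_X(A)$ the directed path $A\rightarrow C\rightarrow\cdots\rightarrow X$ is blocked by $\nss_X(A)$, so $\nss_X(A)$ contains some node $D$ on the segment from $C$ to $X$; any such $D$ equals $C$ or is a descendant of $C$. Now suppose $B\notindep X|\nss_X(A)$, witnessed by an active path $\rho$ from $B$ to $X$. Splicing $\rho$ onto the two edges $A\rightarrow C\leftarrow B$ gives a walk from $A$ to $X$ on which $C$ is a collider opened by $D$, $B$ is a non-collider (a chain or a fork according to the first edge of $\rho$) that is open because $B\notin\Pa(X)$ while $\nss_X(A)\subseteq\Pa(X)$, and the remaining portion is $\rho$ and hence active; so the walk is active given $\nss_X(A)$, and the standard reduction from an active walk to an active path contradicts $A\indep X|\nss_X(A)$. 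Therefore $B\indep X|\nss_X(A)$; deleting redundant elements from $\nss_X(A)\subseteq\DNe(X)$ produces a minimal separator of $B$ from $X$ inside $\nss_X(A)$, which by Prop.~\ref{prop:mns-unique} is $\nss_X(B)$, so $\nss_X(B)\subseteq\nss_X(A)$. The symmetric argument gives the reverse inclusion, hence equality.

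I expect the main obstacle to be that middle step---arguing that the given unshielded collider is the one through which conditioning on $X$ induces the dependence (equivalently, $C\in\An(X)\cup\{X\}$)---since a priori $A$ and $B$ could be co-parents of a node $C$ having nothing to do with $X$; this forces one to use the hypotheses $(A,B,\S)\in H^{*}$ and $(A,B,\S)\in\L$ jointly, and in the setting where the lemma is applied it is also where one leans on the fact that the UC can orient a parent of $X$. A secondary technicality is that the spliced object is a walk, not a path (repetitions can occur if $C$, $A$, or internal nodes of $\rho$ coincide), which is handled by the usual active-walk-to-active-path reduction.
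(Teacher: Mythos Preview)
Your overall strategy---show $\nss_X(A)$ also separates $B$ from $X$, deduce $\nss_X(B)\subseteq\nss_X(A)$ via Prop.~\ref{prop:mns-unique}, then symmetrize---is exactly the paper's. Your walk-splicing argument for the inclusion is in fact more explicit than the paper's: after observing that some node of $\nss_X(A)$ is a descendant of $C$ (hence opens the collider $A\to C\leftarrow B$), the paper simply writes ``Therefore, we have $\nss_X(A)\subseteq\nss_X(B)$,'' whereas you spell out why an active $B$--$X$ path $\rho$ given $\nss_X(A)$ would yield an active $A$--$X$ walk.

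The gap you flagged is genuine, and it is the same gap the paper's proof has. The paper asserts without argument that ``the nodes in $\nss_X(A)$ are on the path from $X\leftarrow\cdots\leftarrow C\leftarrow A$,'' i.e.\ that $C\in\An(X)$; your tentative justification (``that same path must become active'') is not valid either, since the dependence $A\notindep B\mid \S\cup\{X\}$ can be carried by a different path. In fact the lemma is false as stated: take $A\to C\leftarrow B$ together with $A\to W_1\to X$ and $B\to W_2\to X$ (no other edges, so $W_1,W_2$ are the only parents of $X$ and $C$ is unrelated to $X$). Then $A\indep B$ and $A\notindep B\mid X$ (via the collider at $X$), so $(A,B,\emptyset)\in H^{*}$, the UC $A\to C\leftarrow B$ exists, yet $\nss_X(A)=\{W_1\}\neq\{W_2\}=\nss_X(B)$.

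You were right that the intended context supplies the missing hypothesis. In the only use of the lemma (the proof of Prop.~\ref{prop:apdx-ldecc-weaker-ff}), one is checking that the \emph{check=True} filter does not discard ECCs coming from UCs that actually orient parents of $X$; for those UCs there is, by construction, a directed path $C\to\cdots\to W\to X$ with $W\in\Pa(X)$, so $C\in\An(X)$ holds. With that extra hypothesis both the paper's sketch and your argument go through; without it, neither can.
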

\begin{proof}
Since $(A, B, \S) \in H^*$, by Prop.~\ref{prop:ecc},
this UC can be used to orient parents via an ECC.
We begin by considering node $A$. By definition of MNS, 
we have $A \indep X | \nss_X(A)$.
Since the nodes in $\nss_X(A)$ are on the path from 
$X \leftarrow \hdots \leftarrow C \leftarrow A$, conditioning on $\nss_X(A)$
opens up the $A \rightarrow C \leftarrow B$ path (because the nodes
in $\nss_X(A)$ are descendants of $C$).
Therefore, we have $\nss_X(A) \subseteq \nss_X(B)$.
We can make a similar argument for node $B$ to show that
$\nss_X(B) \subseteq \nss_X(A)$. Combining the two statements,
we get $\nss_X(A) = \nss_X(B)$.
\end{proof}

\begin{proposition}[Weaker faithfulness for LDECC]\label{prop:apdx-ldecc-weaker-ff}
If \emph{ECCParents}($A, B, \S$)
is run with \emph{check=True}, then LDECC will identify $\Theta^{*}$  
if: (i) LF holds for every $V \in \DNe^{+}(X)$;
(ii) $H^{\text{(total)}} \subseteq H^*$;
(iii) $\forall (A, B, \S) \in H^{\text{(total)}}$, MFF holds for $\{A, B\} \setminus \DNe(X)$; and
(iv) $\forall (A, B, \S) \in H^*$ s.t. there is a UC $(A \rightarrow C \leftarrow B) \in \GM^*$,
we have (a) AF holds for $A$ and $B$; and (b) $(A, B, \S) \in H^{\text{(total)}}$.
Futhermore, Conditions~(i)-(iv) of this proposition are implied
by the conditions of Prop.~\ref{prop:ldecc-ff} 
(i.e., this is a weaker sufficient faithfulness condition for LDECC).
\end{proposition}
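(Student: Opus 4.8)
The plan is to re-run the same Meek-rule-by-Meek-rule case analysis as in the proofs of Thm.~\ref{thm:ldecc-correctness} and Prop.~\ref{prop:ldecc-ff}, and to track the single place where \emph{check=True} changes \emph{ECCParents}: when $\{A,B\}\cap\DNe(X)=\emptyset$ it returns a non-empty set iff $\text{GetMNS}(A)=\text{GetMNS}(B)$ (Lines~\ref{alg-ecc-par:check-start}--\ref{alg-ecc-par:check-end}), and otherwise it behaves exactly as in the \emph{check=False} case. First I would dispose of all the steps that never touch an ECC --- correct discovery of $\MB(X)$, $\DNe(X)$, the UCs and non-colliders at $X$ (Lines~\ref{alg-ldecc:if-cond-nbr-uc}--\ref{alg-ldecc:meek-rule-3}), and every child-orientation step --- which are unchanged from Prop.~\ref{prop:ldecc-ff} and go through using Condition~(i); since $\DNep(X)\subseteq\MBp(X)$, weakening Condition~(i) to $\DNep(X)$ does not affect these arguments.

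The substantive step is showing that every parent $W$ that PC orients via Meek's rules still gets marked. As in Thm.~\ref{thm:ldecc-correctness}, each such $W$ is traced to a UC $(A\rightarrow C\leftarrow B)\in\GM^*$ and a triple $(A,B,\S)\in H^*$ with $W\in\nss_X(A)$ (or $\nss_X(B)$, or $\nss_X(Z)$ for an auxiliary node, depending on the rule). Condition~(iv)(a) (AF for $A,B$) makes the UC get unshielded, and Condition~(iv)(b) puts $(A,B,\S)\in H^{\text{(total)}}$, so the if-block at Line~\ref{alg-ldecc:if-cond-ecc} is entered. If $|\{A,B\}\cap\DNe(X)|=1$ then $(A,B,\S)\in H^{\text{(single)}}$, no check is performed, and the argument of Prop.~\ref{prop:ldecc-ff} applies verbatim. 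If $\{A,B\}\cap\DNe(X)=\emptyset$ then $(A,B,\S)\in H^{\text{(check)}}$, so $\Q_{\min}(A)\cap\Q_{\min}(B)\neq\emptyset$; by MFF (Condition~(iii)) we have $\Q_{\min}(A)=\{\nss_X(A)\}$ and $\Q_{\min}(B)=\{\nss_X(B)\}$, hence $\nss_X(A)=\nss_X(B)$ --- which is exactly Lemma~\ref{lemma:apdx-mns-uc-same} --- so $\text{GetMNS}(A)=\text{GetMNS}(B)=\nss_X(A)$, the check passes, and \emph{ECCParents} returns $\nss_X(A)\ni W$.

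For soundness (no spurious orientations), \emph{check=True} only makes \emph{ECCParents} more conservative, so it introduces no orientations absent from the \emph{check=False} version; and whenever it does return a non-empty $m=\text{GetMNS}(A)=\text{GetMNS}(B)$ with $\{A,B\}\cap\DNe(X)=\emptyset$, that triple lies in $H^{\text{(check)}}\subseteq H^{\text{(total)}}\subseteq H^*$ by Condition~(ii), so Prop.~\ref{prop:ecc} applies and $\nss_X(A)\subseteq\Pa(X)$; MFF (Condition~(iii)) then gives $m=\nss_X(A)$, so the marked nodes are genuinely parents. The remaining anti-spurious checks (Lines~\ref{alg-ldecc:if-cond-nbr-uc-mark},~\ref{alg-ldecc:meek-rule-3},~\ref{alg-ldecc:meek-rule-3-and-4-child},~\ref{alg-ldecc:mark-child-using-non-coll} and \emph{GetUCChildren}) are identical to Prop.~\ref{prop:ldecc-ff} and rely only on Condition~(i).

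For the ``furthermore'' claim I would verify the four implications directly: $\DNep(X)\subseteq\MBp(X)$ gives (i); $H^{\text{(check)}},H^{\text{(single)}}\subseteq H$ by definition, so $H^{\text{(total)}}\subseteq H\subseteq H^*$ gives (ii) and restricting Condition~(iii) of Prop.~\ref{prop:ldecc-ff} to $H^{\text{(total)}}$ gives (iii); for (iv)(b), given a UC with AF for $A,B$ and $(A,B,\S)\in H$ (Condition~(iv)(b) of Prop.~\ref{prop:ldecc-ff}), MFF (its Condition~(iii)) together with Lemma~\ref{lemma:apdx-mns-uc-same} yields $\nss_X(A)=\nss_X(B)$ when both lie outside $\DNe(X)$, hence $\Q_{\min}(A)\cap\Q_{\min}(B)\neq\emptyset$ and the triple lies in $H^{\text{(check)}}\subseteq H^{\text{(total)}}$ (and it is trivially in $H^{\text{(single)}}$ otherwise). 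The main obstacle is bookkeeping: one must re-walk the entire Meek-rule case analysis of Thm.~\ref{thm:ldecc-correctness} while, in each case, checking that the triggering triple lands in $H^{\text{(single)}}$ or $H^{\text{(check)}}$ so that the equality check cannot suppress a needed orientation; everything else is a relabelling of the earlier proofs.
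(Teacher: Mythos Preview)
Your proposal is correct and follows essentially the same approach as the paper: both reduce the argument to the proof of Prop.~\ref{prop:ldecc-ff}, use Lemma~\ref{lemma:apdx-mns-uc-same} to ensure the equality check $\text{GetMNS}(A)=\text{GetMNS}(B)$ passes for every UC-triggered ECC, and verify the ``furthermore'' implications componentwise via $H^{\text{(total)}}\subseteq H$. Your treatment is in fact more explicit than the paper's (which defers the Meek-rule walk entirely to Prop.~\ref{prop:ldecc-ff}); one minor caution is that your remark that ``weakening Condition~(i) to $\DNep(X)$ does not affect these arguments'' deserves more care, since \emph{FindMarkovBlanket} and \emph{GetUCChildren} nominally rely on structure in $\MBp(X)\setminus\DNep(X)$---though the paper itself glosses over this same point by asserting the two Conditions~(i) are identical.
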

\begin{proof}
The can be proved in the same way as Prop.~\ref{prop:ldecc-ff}
with the crucial difference that the set of ECCs that
LDECC now runs is restricted to the set $H^{\text{(total)}}$.
Condition~(ii) now ensures that every ECC that is run by LDECC
is a valid ECC.
We just have to show that even by restricting the ECCs to
$H^{\text{(total)}}$, we still run an ECC
for the UCs.
For this, we leverage Lemma~\ref{lemma:apdx-mns-uc-same} which states
that the MNS of nodes $A$ and $B$ for the UC $A \rightarrow C \leftarrow B$ will be the same.
Since, by Condition~(iii), MFF holds for such nodes $A$ and $B$,
the check \emph{GetMNS}($A$) $=$ \emph{GetMNS}($B$) will succeed
and thus, the ECCs for these UCs will still be run.

Now, we show that the conditions of this proposition are implied
by those of Prop.~\ref{prop:ldecc-ff}.
Conditions~(i),(iv)(a) of both propositions are the same.
Observe that $H^{\text{(total)}} \subseteq H$.
Therefore, Conditions~(ii),(iii) of Prop.~\ref{prop:ldecc-ff} imply Conditions~(ii),(iii) of this proposition.
Finally, we show that Condition~(iv)(b) of this proposition is also
implied by the conditions of Prop.~\ref{prop:ldecc-ff}.
Consider any $(A, B, \S) \in H^*$ such that the UC $A \rightarrow C \leftarrow B \in \GM^*$ and $\{ A, B \} \cap \DNe(X) = \emptyset$. 
If $|\{ A, B \} \cap \DNe(X)| = 1$, then $(A, B, \S) \in H^{\text{(single)}} \subseteq H^{\text{(total)}}$.
If $\{ A, B \} \cap \DNe(X) = \emptyset$, then by Condition~(iii) of Prop.~\ref{prop:ldecc-ff}, MFF holds
for $A$ and $B$ are therefore $Q_{\min}(A) \cap Q_{\min}(B) = \nss_X(A) = \nss_X(B)$. Thus, $(A, B, \S) \in H^{\text{(check)}} \subseteq H^{\text{(total)}}$.
\end{proof}

\newtheorem*{prop:test-mff}{Proposition~\ref{prop:test-mff}}
\begin{prop:test-mff}[Testing faithfulness for LDECC]
Consider running the algorithm in Fig.~\ref{fig:algo-test-mff}
before invoking \emph{GetMNS}($A$) for some node $A$ in LDECC.
If the algorithm returns \emph{Fail}, MFF is violated for node $A$.
If the algorithm returns \emph{Unknown}, 
we could not ascertain if MFF holds for node $A$.
\end{prop:test-mff}
\begin{proof}
The set $\Q$ contains all subsets $\S \subseteq \DNe(X)$
s.t. $A \indep X | \S$ (Lines~$1$--$4$).
The set $\Q_{\min}$ contains those sets from $\Q$ that are minimal,
i.e., for every $\S' \in \Q_{\min}$, 
there is no subset of $\S'$ in $\Q$.
If $|\Q_{\min}| > 1$, then there are multiple possible $\mns_X(A)$
violating the uniqueness of MNS (Prop.~\ref{prop:mns-unique}).
Thus we return \emph{Fail} (Example~\ref{example:testing-mff-1}
demonstrates this failure case).

Next, if Line~\ref{alg:test-mff-q-min-is-size-one} is reached, 
we know that $|\Q_{\min}| = 1$
and $\S$ is the single element from $\Q_{\min}$.
Line~\ref{alg:test-mff-failure-2} returns \emph{Fail} 
if there is a set $\S'$ such that
$\S \subset \S'$ and $\S' \in \Q$,
and an intermediate set $\S''$ such that $\S \subset \S'' \subset \S'$
and $\S'' \notin \Q$ 
(Example~\ref{example:testing-mff-2}
demonstrates this failure case).
Here MFF fails because if $\S$ was the correct $\mns_X(A)$,
then we must have $\S'' \in \Q$. 
This is because, since $\S' \in \Q$, there cannot be an
active path from $A$ to $X$ through nodes in $\S' \setminus \S$
(otherwise, we would not have $A \indep X | \S$).
Therefore, adding nodes in $\S'' \setminus \S$ to the conditioning
set should not violate the independence. But since we have
$\S'' \notin \Q$, there must be a faithfulness violation and
$\S$ is not guaranteed to be equal to $\mns_X(A)$.

If Line~\ref{alg:test-mff-end} is reached, 
we have not been able to detect an MFF
violation. However, the algorithm is not \emph{complete}, i.e.,
a failure to detect a violation does not mean a violation does
not exist. So we return \emph{Unknown} which signifies
that we were unable to ascertain if MFF was violated for $A$.

The additional tests are performed in Line~\ref{alg:test-mff-additional-tests}. 
Since these tests
are performed for every subset $\S \subseteq \DNe(X)$,
the number of extra CI tests is $\OM(2^{|\DNe(X)|})$.

\end{proof}

\newtheorem*{prop:test-sd-ff}{Proposition~\ref{prop:test-sd-ff}}
\begin{prop:test-sd-ff}[Testing faithfulness for SD]
Consider running the algorithm in Fig.~\ref{fig:test-sd-ff} with
each UC detected by SD.
If the algorithm returns \emph{Fail}, then faithfulness is violated for SD.
If the algorithm returns \emph{Unknown}, we could not
ascertain if the faithfulness assumptions for SD hold.
\end{prop:test-sd-ff}
\begin{proof}
In Line~\ref{alg:test-sd-ff-M-poc}, 
$\M$ contains every neighbor of $X$ 
that gets oriented as a parent due to the input UC
$A \rightarrow C \leftarrow B$ by SD.
If the faithfulness assumptions for SD did hold, then all
nodes in $\M$ would actually be parents of $X$.
Thus there should be at least one subset $\S \subseteq \DNe(X)$
such that $A \indep X | \S$ and $\M \subseteq \S$, and likewise for $B$.
If such a subset is not found, this means that one of the nodes
that was marked as a parent was actually a child.
In this case, Line~\ref{alg:test-sd-ff-fail} would return \emph{Fail}.
Similarly to the LDECC case, if we are unable to detect a faithfulness violation, 
we return \emph{Unknown} to indicate that we could not determine if the
faithfulness assumptions for SD hold.
Since CI tests
are performed for every subset $\S \subseteq \DNe(X)$,
the number of extra CI tests is $\OM(2^{|\DNe(X)|})$.
\end{proof}

\section{Experiments} \label{sec:apdx-experiments}
\begin{figure}
\centering
\subfigure[Accuracy]{\includegraphics[scale=0.35]{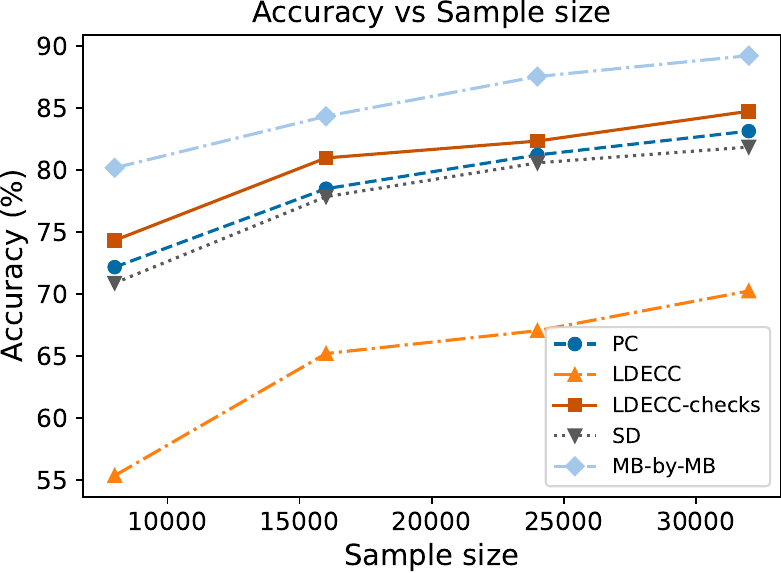}\label{fig:apdx-er-accuracy}}
\hspace{3em}
\subfigure[Recall]{\includegraphics[scale=0.35]{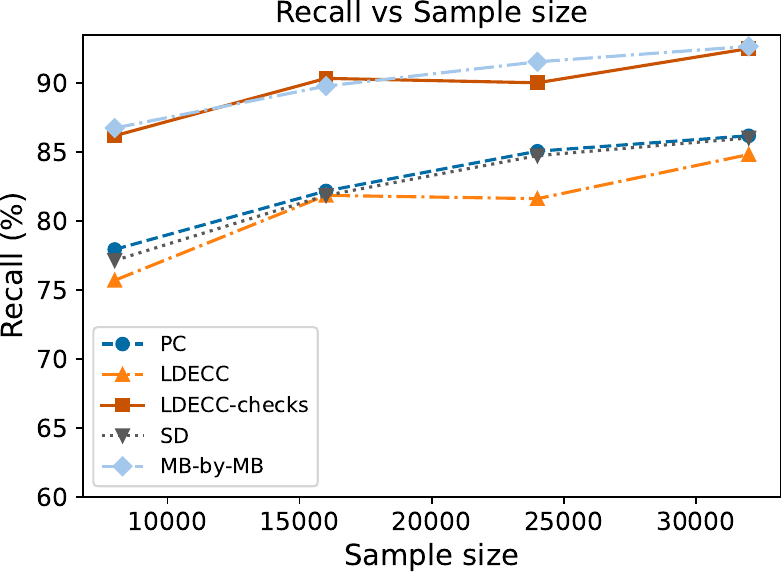}\label{fig:apdx-er-recall}}
\vspace{1em}
\\
\subfigure[MSE vs sample size]{\includegraphics[scale=0.33]{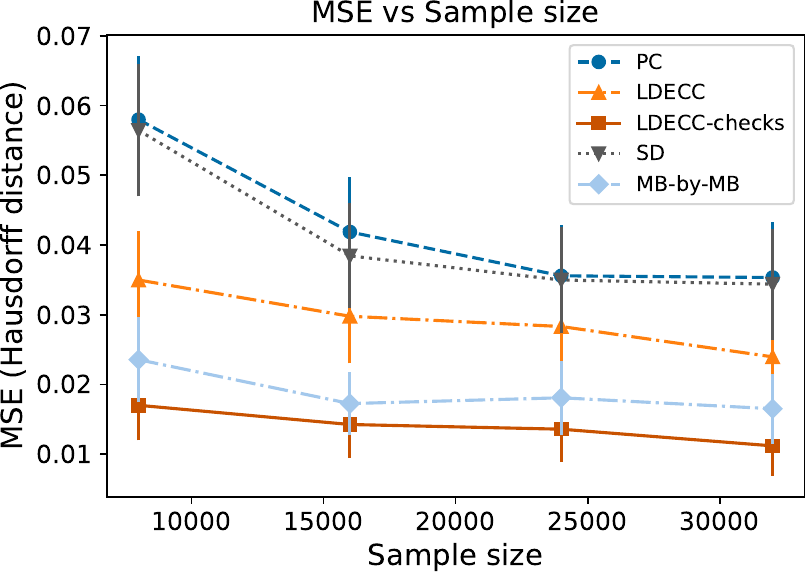}\label{fig:apdx-er-mse}}
\hspace{3em}
\subfigure[Average number of CI tests]{\includegraphics[scale=0.35]{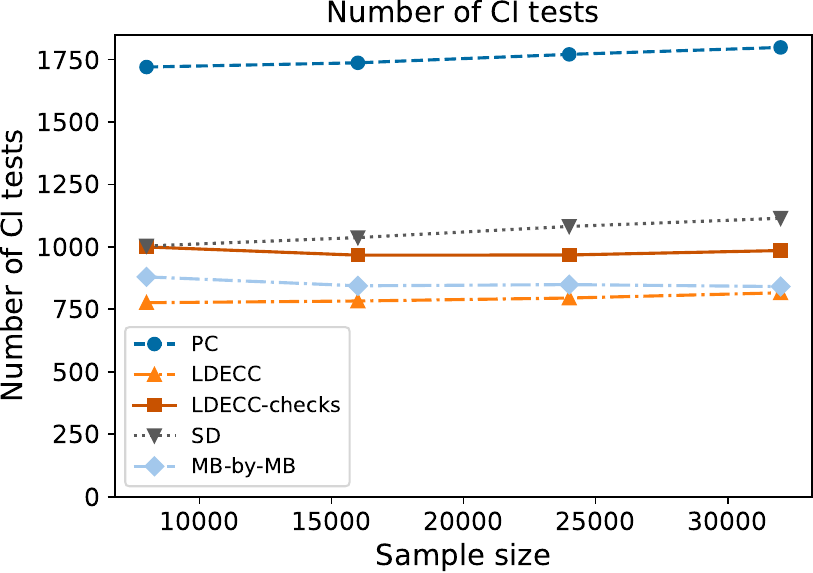}\label{fig:apdx-er-ci-tests}}
\caption{Results on synthetic linear Erdos-Renyi graphs.}
\end{figure}

\subsection{More details for the synthetic linear graph experiments.}\label{sec:apdx-experiments-synthetic-linear-dgp}

We generate synthetic linear graphs
with Gaussian errors,
$N_c=20$ covariates---non-descendants of $X$ and $Y$ 
with paths to both $X$ and $Y$---and $N_m=3$ mediators---nodes
on some causal paths from $X$ to $Y$. 
We generate edges between the different types of
nodes with varying probabilities:
(i) We connect the covariates to the treatment with probability $p_{cx}$;
(ii) We connect one covariate to another with probability $p_{cc}$;
(iii) We connect the covariates to the outcome with probability $p_{cy}$;
(iv) We connect the treatment to the mediators with probability $p_{mx}$;
(v) We connect one mediator to another with probability $p_{mm}$;
(vi) We connect the mediators to the outcome with probability $p_{my}$;
(vi) We connect a mediators to a covariate with probability $p_{cm}$.
For our experiments, we have used $p_{cx} = p_{cc} = p_{cy} = p_{mx} = p_{mm} =
p_{my} = 0.1$ and $p_{cm} = 0.05$.

For each node $V$, we generate data
using the following structural equation:
\begin{align*}
    v := b^\top_V \text{pa}(v) + \epsilon_V, \,\, \epsilon_V \sim \mathcal{N}(0, \sigma^2_V),
\end{align*}
where $v$ and $\text{pa}(v)$ are the realized values of node $V$ and its parents,
respectively; $b^\top_V \in \mathbb{R}^{|\Pa(V)|}$ is the vector denoting the
edge coefficients; and $\epsilon_V$ is an independently sampled noise term.
Each element of $b_V$ is sampled uniformly from the 
interval $[-1, -0.25] \cup [0.25, 1]$ and
$\sigma^2_V$ is sampled independently from a
uniform distribution $U(0.1, 0.2)$.

\subsection{Results on synthetic linear binomial (Erdos-Renyi) graphs.}\label{sec:apdx-experiments-synthetic-erdos-renyi}

We also test our methods on binomial graphs of size $|\V| = 30$.
In binomial graphs, an edge between two nodes is generated with
some probability $p_e$. For our experiments, we use $p_e = 0.8$.
We generate the data using linear Gaussian models where the
model parameters are sampled in the same way as Sec.~\ref{sec:apdx-experiments-synthetic-linear-dgp}.
We compare PC, SD, MB-by-MB, and LDECC based on accuracy (Fig.~\ref{fig:apdx-er-accuracy}), recall (Fig.~\ref{fig:apdx-er-recall}), 
MSE (Fig.~\ref{fig:apdx-er-mse}), and
number of CI tests (Fig.~\ref{fig:apdx-er-ci-tests})
across four different sample sizes.

In terms of accuracy, MB-by-MB performs the best
with LDECC-checks performing slightly better than SD and PC.
In terms of recall, LDECC-checks and MB-by-MB perform comparably.
We also observe that LDECC-checks has higher accuracy
and recall than LDECC.
LDECC-checks and MB-by-MB have the lower MSE than PC and SD.
Both variants of LDECC have lower MSE than PC and SD 
and all four local causal discovery algorithms perform
a comparable number of CI tests (and fewer tests than PC).

\subsection{Additional results on semi-synthetic graphs.}\label{sec:apdx-experiments-semi-synthetic}

\begin{figure}
\centering
\subfigure[Tests with CI oracle]{\includegraphics[scale=0.35]{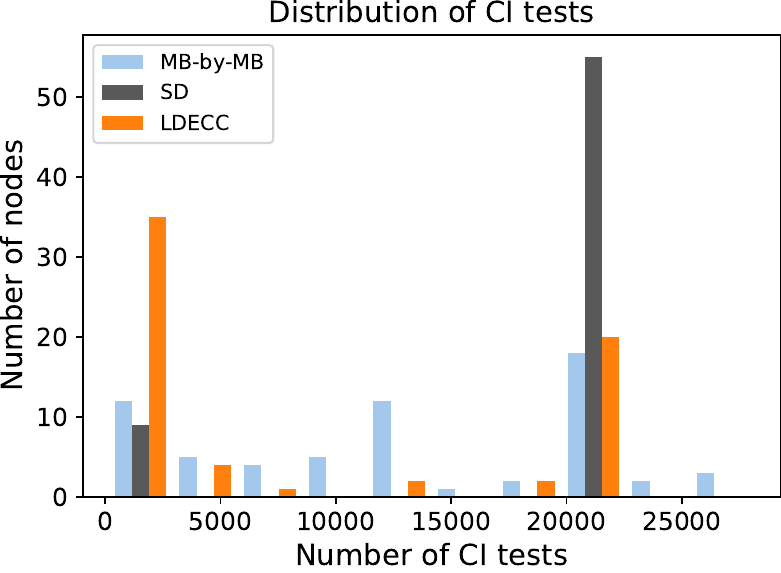}\label{fig:apdx-magic-irri-with-ci-oracle}}
\hfill
\subfigure[Accuracy]{\includegraphics[scale=0.35]{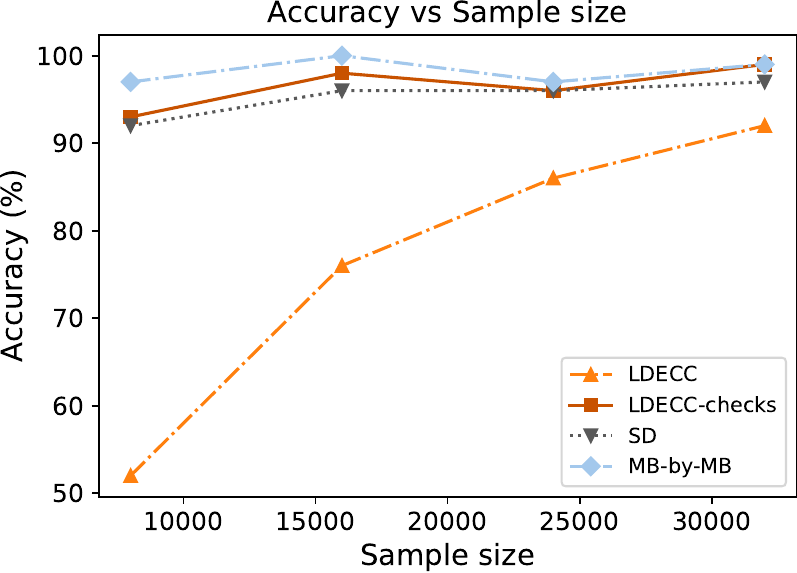}\label{fig:apdx-magic-irri-accuracy}}
\hfill
\subfigure[Recall]{\includegraphics[scale=0.35]{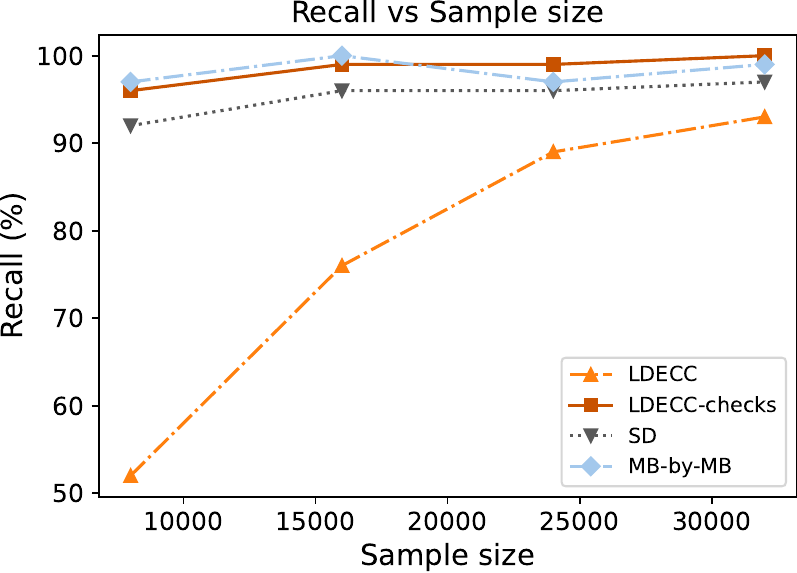}\label{fig:apdx-magic-irri-recall}}
\\
\subfigure[Median SE vs sample size]{\includegraphics[scale=0.33]{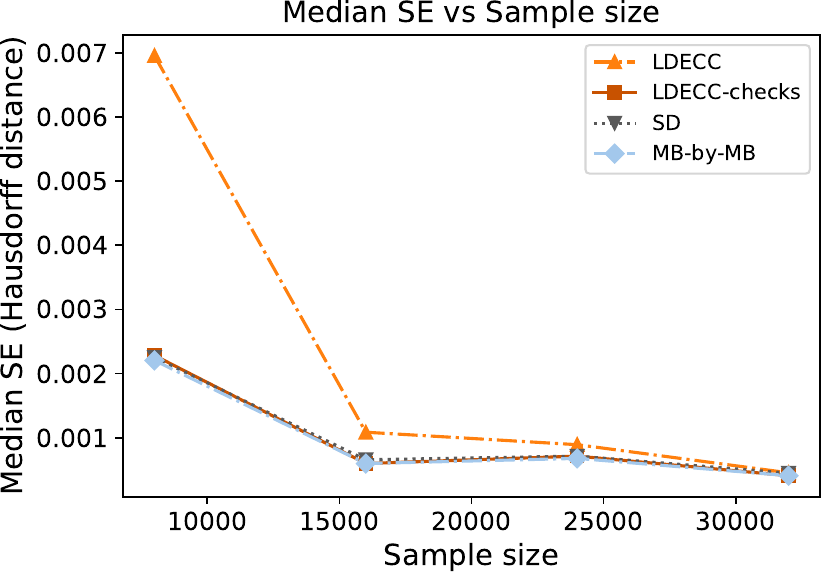}\label{fig:apdx-magic-irri-mse}}
\hspace{3em}
\subfigure[Average number of CI tests]{\includegraphics[scale=0.35]{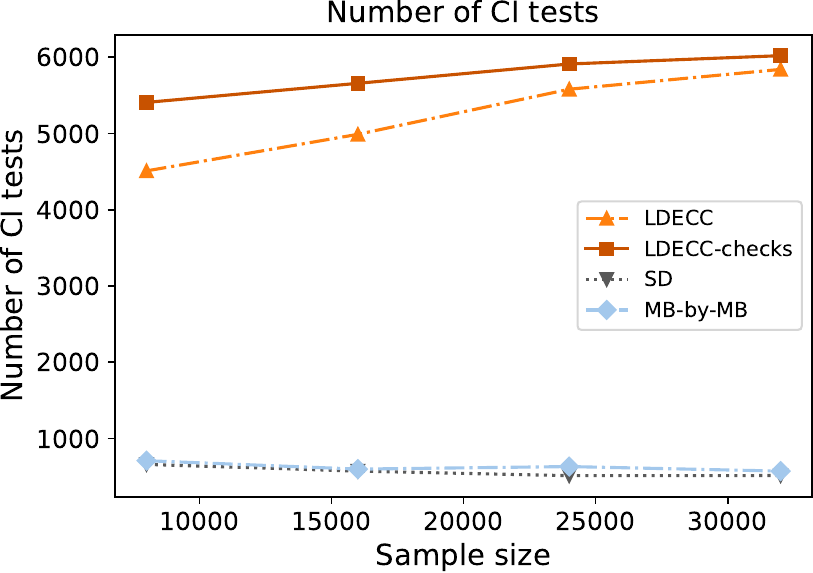}\label{fig:apdx-magic-irri-num-tests}}
\caption{Results on the semi-synthetic \emph{MAGIC-IRRI} graph.}
\label{fig:apdx-magic-irri-graphs}
\end{figure}

We also present results on the linear
Gaussian \emph{MAGIC-IRRI} graph from
\emph{bnlearn} (Fig.~\ref{fig:apdx-magic-irri-graphs}).
We plot the distribution of CI tests with a CI oracle
by repeatedly setting each node as the treatment 
(capping the maximum number of tests per node to $20000$).
We see that LDECC and MB-by-MB perform differently
across different nodes and outperform SD on most nodes
(Fig.~\ref{fig:apdx-magic-irri-with-ci-oracle}).
Next, we designated the nodes \emph{G6003} and \emph{BROWN}
as the treatment and outcome, respectively.
At four sample sizes, we sample data from
the graph $100$ times
(capping the maximum number of tests run by each algorithm to $7000$).
In terms of both accuracy (Fig.~\ref{fig:apdx-magic-irri-accuracy}) 
and recall (Fig.~\ref{fig:apdx-magic-irri-recall}),
LDECC-checks, SD, and MB-by-MB perform comparably
while LDECC does poorly at small sample sizes.
In terms of Median SE, LDECC-checks, SD, and MB-by-MB
perform comparably (Fig.~\ref{fig:apdx-magic-irri-mse}) but
LDECC performs substantially more CI tests (on average)
than SD and MB-by-MB (Fig.~\ref{fig:apdx-magic-irri-num-tests}).

Finally, we compare PC, SD, MB-by-MB, and LDECC based on the number of CI
tests (with access to a CI oracle) on three discrete graphs
from \emph{bnlearn}: \emph{Alarm} (Fig.~\ref{fig:apdx-bnlearn-alarm-oracle}), \emph{Insurance} (Fig.~\ref{fig:apdx-bnlearn-insurance-oracle}), and \emph{Mildew} (Fig.~\ref{fig:apdx-bnlearn-mildew-oracle}).
We plot the distribution of tests for SD, MB-by-MB, and LDECC
by setting each node in the graph as the treatment.
We see that for most nodes on all three graphs, MB-by-MB and
LDECC outperform SD.
MB-by-MB has the best performance for all three graphs
and performs well across all nodes.

\begin{figure}
\centering
\subfigure[\emph{Alarm} graph.]{\includegraphics[scale=0.35]{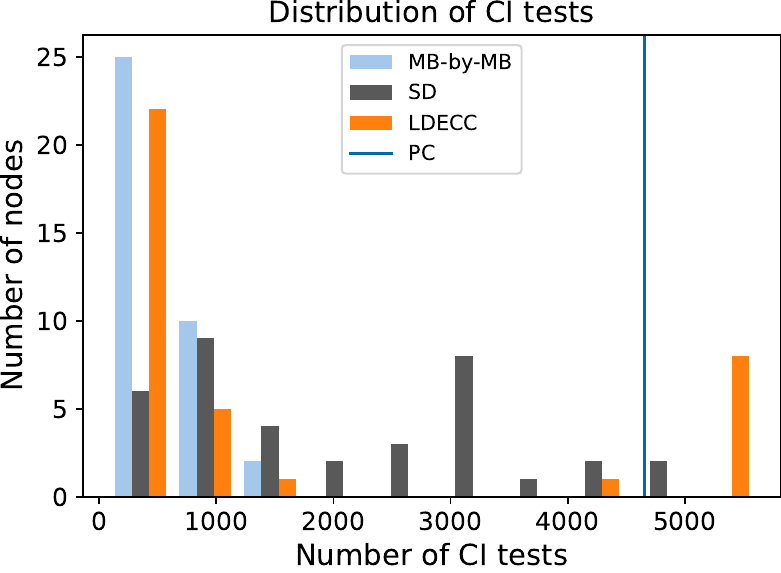}\label{fig:apdx-bnlearn-alarm-oracle}}
\hfill
\subfigure[\emph{Insurance} graph.]{\includegraphics[scale=0.35]{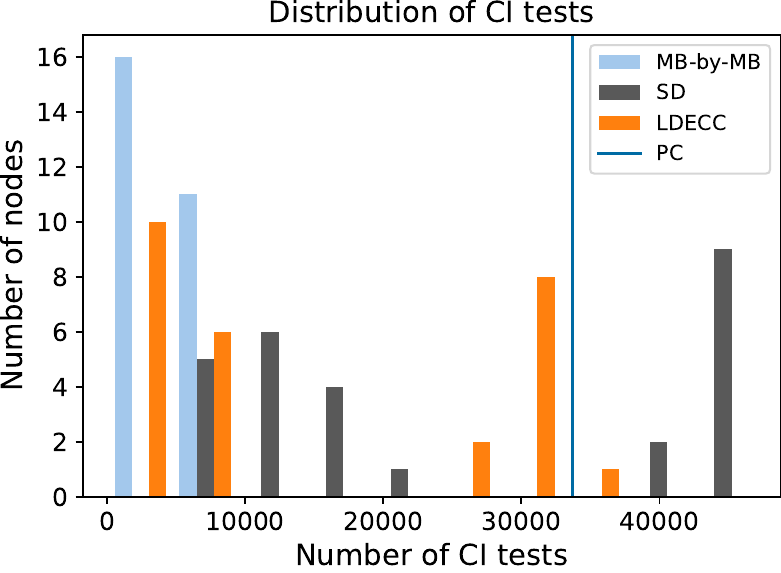}\label{fig:apdx-bnlearn-insurance-oracle}}
\hfill
\subfigure[\emph{Mildew} graph.]{\includegraphics[scale=0.35]{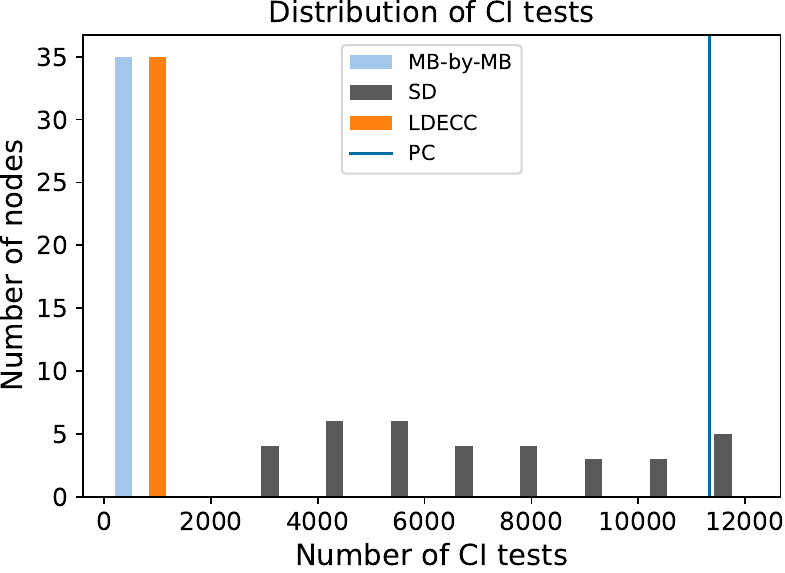}\label{fig:apdx-bnlearn-mildew-oracle}}
\caption{Comparison of PC, SD, MB-by-MB, and LDECC
based on the number of CI tests (with a CI oracle)
on three discrete graphs from \emph{bnlearn}.}
\label{fig:apdx-bnlearn-discrete-graphs}
\end{figure}

\section{Covariate adjustment using local information} \label{sec:apdx-covariate-adjustment}
\subsection{Checking the backdoor criterion.}\label{sec:apdx-adjustment-backdoor}

The \emph{backdoor criterion} \citep[Defn.~3.3.1]{pearl2009causality}
is a sufficient (but not necessary) condition
for a given subset of nodes
$\Z$ to be a valid adjustment set
with respect to a treatment $X$ and an outcome $Y$.
\citet{maathuis2015generalized} extended this criterion to
be applicable to
various MECs including CPDAGs.
We begin by briefly introducing the existing results 
in \citet{maathuis2015generalized} and then we prove that
it is possible to check the backdoor criterion using only
local information around $X$ and $\OM(|\V| \cdot 2^{|\DNe(X)|} )$ additional CI tests.

For introducing existing results, 
let the true DAG be $\GM^{*}$ and
let the corresponding CPDAG that represents the MEC of $\GM^{*}$
be $\mathcal{C}^{*}$ (see Defn.~\ref{defn:apdx-cpdag}).

\begin{definition}[Possibly causal path]
A path $A \, \text{---} \hdots \text{---} \, B$ is said to be 
possibly causal if there is at least one DAG in $\mathcal{C}^{*}$
with a directed path from $A$ to $B$: $A \rightarrow \hdots \rightarrow B$.
\end{definition}

\begin{definition}[Visible and invisible edges {\citep[Defn.~3.1]{maathuis2015generalized}}]
All directed edges in a CDPAG are said to be visible.
All undirected edges in a CPDAG are said to be invisible.
\end{definition}

\begin{definition}[Backdoor path {\citep[Defn.~3.2]{maathuis2015generalized}}]
We say that path between $X$ and $Y$ is a backdoor path if this path does not
have a visible edge out of $X$.
\end{definition}

\begin{definition}[Definite non-collider {\citep[Defn.~3.3]{maathuis2015generalized}}]
A nonendpoint vertex $V_j$ on a path $<\hdots, V_i, V_j, V_k, \hdots>$
in a CPDAG is a definite non-collider if the triple
$<V_i, V_j, V_k>$ is unshielded and the edges $V_i \text{---} V_j$ and
$V_j \text{---} V_k$ are undirected.
\end{definition}

\begin{definition}[Definite status path {\citep[Defn.~3.4]{maathuis2015generalized}}]
A nonendpoint vertex $B$ on a path $p$ in a CPDAG is said to be of a definite status 
if it is either a collider or a definite non-collider on $p$. 
The path $p$ is said to be of a definite status if all nonendpoint vertices on the path are of a definite status.
\end{definition}

\begin{definition}[Possible Descendant]
A node $A$ is a possible descendant of a node $V$ iff
there is a possibly causal path from $V$ to $A$.
\end{definition}

\begin{figure}[t]
\centering
\begin{minipage}[b]{0.65\textwidth}
    \vspace{0pt}
    \setlength{\interspacetitleruled}{0pt}%
    \setlength{\algotitleheightrule}{0pt}%
    \begin{algorithm}[H]
    \SetAlgoLined
    \KwInput{Treatment $X$, Outcome $Y$, $\Z$.}
    $\M \gets \Ch(X) \cup \Uo(X)$\;
    $\I \gets \{ V : \text{GetMNS}(V) = \text{invalid} \}$\; \label{algo-gbdc-mns-find-1}
    $\PossDesc(X) \gets \M \cup \I \cup \{ V \in \V \setminus \I : \M \cap \text{GetMNS}(V) \neq \emptyset \}$\; \label{algo-gbdc-mns-find-2}
    \lIf{$\Z \cap \PossDesc(X) \neq \emptyset$}{
        \textbf{return} False
    }
    
    \For{$Q \in (\Pa(X) \cup \Uo(X)) \setminus \Z$}{
        \lIf{$Q \notindep Y | \{X\} \cup \Z$}{
            \textbf{return} False
        } \label{algo-gbdc-additional-tests}
    }
    \textbf{return} True\;
    \end{algorithm}
    \captionof{figure}{Checking the Generalized Backdoor Criterion.}
    \label{fig:apdx-algo-check-gbdc}
\end{minipage}
\end{figure}

\begin{definition}[Generalized backdoor criterion (GBC) {\citep[Defn.~3.7]{maathuis2015generalized}}]
A set of variables $\Z$ satisfies the 
backdoor criterion relative to $(X, Y)$
in a CPDAG if
the following two conditions:
(1) $\Z$ does not contain any possible descendants of $X$; and
(2) $\Z$ blocks every definite status backdoor path from
$X$ to $Y$.
\end{definition}
Intuitively, the GBC checks whether $\Z$ satisfies the backdoor criterion for
every DAG in an MEC.
Having introduced the GBC,
we extend this result and prove that it is possible to check 
this criterion using only local information.
Our procedures are related to existing methods
in prior work used for covariate selection \citep{vanderweele2011new, entner2013data}
which also use very similar testing strategies.
However, these works make slightly stronger assumptions
on the known partial orderings (e.g., that all covariates are pre-treatment)
whereas our goal is to demonstrate that a similar testing strategy
along with the output of a local causal discovery algorithm is also
sufficient to determine if a given subset is a valid adjustment set.
These prior works also accommodate latent pre-treatment
variables whereas we make the assumption of causal sufficiency
throughout our work.

\begin{definition}[Unoriented nodes]\label{defn:apdx-unoriented-nodes}
We define unoriented nodes, denoted by $\Uo(X)$,
as the set of nodes $V \in \DNe(X)$ such that the
edge $X \text{---} V$ is invisible in $\mathcal{C}^{*}$.
\end{definition}

Importantly, both local discovery procedures, SD and LDECC,
find $\Uo(X)$.
These nodes are stored in the variable \emph{unoriented} in the
algorithms (See Figs.~\ref{fig:algo-sd},\ref{fig:algo-cdud}).

\begin{lemma}\label{lemma:apdx-poss-desc}
Let $\M = \Ch(X) \cup \Uo(X)$ and $\I = \{ V : \mns_X(V) = \text{invalid} \}$.
The possible descendants of a node $X$ are
$\PossDesc(X) = \M \cup \I \cup \{ V \in \V \setminus \I : 
\M \cap \mns_X(V) \neq \emptyset \}$.
\end{lemma}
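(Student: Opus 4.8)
The plan is to prove the stated set equality by splitting $\V$ into the neighbours $\DNe(X)$ and the non-neighbours of $X$, while relying throughout on two facts. First, $V \in \PossDesc(X)$ if and only if $V$ is a descendant of $X$ in \emph{some} DAG in the MEC of $\GM^{*}$: one direction is immediate (the DAG realising the possibly causal path as a directed path is a witness), and for the converse a directed path $X \to \cdots \to V$ in any MEC member is, read in the shared skeleton, a possibly causal path. Second, since $\nss_X(\cdot)$ is defined purely by conditional (in)dependence statements together with $\DNe(X)$ --- all MEC-invariant, using Prop.~\ref{prop:mns-unique} for well-definedness --- both the set $\nss_X(V)$ (when it exists) and the predicate ``$\nss_X(V)$ exists'' are unchanged if we recompute them in any other DAG of the MEC. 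I also record the partition $\DNe(X) = \M \, \sqcup \, \{\text{definite parents of }X\}$, where the definite parents are the neighbours whose incident edge is oriented into $X$ in the CPDAG, and $\M = \Ch(X)\cup\Uo(X)$ is exactly the complementary set of neighbours.

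I would first handle the neighbours and show $\PossDesc(X)\cap\DNe(X)=\M$. If $W$ is a definite parent, then $W\to X$ in every DAG of the MEC, so a directed path $X\to\cdots\to W$ in any of them would close a cycle; hence $W\notin\PossDesc(X)$. Conversely, any $V\in\Ch(X)$ has $X\to V$ in the CPDAG, hence in $\GM^{*}$, so $V\in\Desc(X)\subseteq\PossDesc(X)$; and any $V\in\Uo(X)$ has, by the meaning of an undirected CPDAG edge, a DAG $\GM'$ in the MEC with $X\to V$, so $V\in\PossDesc(X)$ by the first fact. In particular, for every $C\in\M$ there is a DAG in the MEC in which $C$ is a child of $X$.

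Next, for a non-neighbour $V\notin\DNep(X)$ I split on whether $\nss_X(V)$ exists. If it does not (so $V\in\I$), the contrapositive of Prop.~\ref{prop:valid-nss-non-desc} gives $V\in\Desc(X)\subseteq\PossDesc(X)$. If $\nss_X(V)$ exists, I claim $V\in\PossDesc(X)$ iff $\nss_X(V)\cap\M\neq\emptyset$. For the ``if'' direction, take $C\in\nss_X(V)\cap\M$ and a DAG $\GM'$ in the MEC with $X\to C$ (available by the neighbour analysis); if $V\notin\Desc_{\GM'}(X)$, then applying Prop.~\ref{prop:valid-nss-non-desc} \emph{inside} $\GM'$ (legitimate since $\DNe(X)$ and $\nss_X(V)$ are MEC-invariant) yields $\nss_X(V)\subseteq\Pa_{\GM'}(X)$, contradicting $C\in\nss_X(V)\cap\Ch_{\GM'}(X)$; hence $V\in\Desc_{\GM'}(X)\subseteq\PossDesc(X)$. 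For the ``only if'' direction I argue the contrapositive: if $\nss_X(V)\cap\M=\emptyset$ then, by the partition of $\DNe(X)$, $\nss_X(V)$ consists only of definite parents of $X$; in any DAG $\GM'$ of the MEC a directed path $X\to u_1\to\cdots\to V$ cannot contain a definite parent $u_i$ of $X$ (the edge $u_i\to X$ would close a cycle), so no internal vertex of that path lies in $\nss_X(V)$ and the path is active given $\nss_X(V)$ in $\GM'$; but $V\indep X | \nss_X(V)$ holds in $\GM'$ by MEC-invariance, a contradiction, so $V\notin\Desc_{\GM'}(X)$ for every $\GM'$, i.e.\ $V\notin\PossDesc(X)$.

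Finally I would assemble the pieces: $\PossDesc(X)$ is its restriction to neighbours, namely $\M$, union its restriction to non-neighbours, namely $\I$ together with $\{V\notin\DNep(X):\nss_X(V)\text{ exists and }\M\cap\nss_X(V)\neq\emptyset\}$; since ``$\nss_X(V)$ exists'' fails precisely on $\I$, this last set equals $\{V\in\V\setminus\I:\M\cap\nss_X(V)\neq\emptyset\}$, which gives the claimed formula. I do not expect a deep obstacle: the only steps needing care are the repeated invocations of MEC-invariance of $\nss_X(\cdot)$ (so that Prop.~\ref{prop:valid-nss-non-desc} can be re-applied in a chosen DAG $\GM'$) and the clean partition $\DNe(X)=\M\sqcup\{\text{definite parents}\}$, together with the observation that $\Ch(X)\cup\Uo(X)$ is exactly the set of neighbours that are not definite parents.
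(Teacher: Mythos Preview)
Your proposal is correct and follows the same outline as the paper's proof --- partition into neighbours versus non-neighbours, use Prop.~\ref{prop:valid-nss-non-desc} (in contrapositive) for $\I$, and characterise the remaining non-neighbours via $\M\cap\nss_X(V)$. Your version is in fact more complete than the paper's: the paper only sketches the direction ``$V\in\PossDesc(X)\Rightarrow\M\cap\nss_X(V)\neq\emptyset$'' and omits the converse entirely, whereas you prove both directions and make explicit the MEC-invariance of $\nss_X(\cdot)$ needed to re-apply Prop.~\ref{prop:valid-nss-non-desc} inside a chosen DAG $\GM'$.
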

\begin{proof}
The nodes in $\M$ are possible descendants of $X$.
By Prop.~\ref{prop:valid-nss-non-desc}, nodes in $\I$ are also possible
descendants.
For any possible descendant $V \notin \DNep(X) \cup \I$ of $X$, there must be a
path $X \rightarrow M \rightarrow \hdots \rightarrow V$,
where $M \in \M$,
in at least one DAG.
Therefore, it must be the case that for every such node $V$, we have
$\M \cap \mns_X(V) \neq \emptyset$.
\end{proof}

\begin{proposition}[Checking the GBC]\label{prop:apdx-backdoor-criterion}
Let $\M = \Ch(X) \cup \Uo(X)$, where $\Uo(X)$
is defined in Defn.~\ref{defn:apdx-unoriented-nodes}.
Let $\PossDesc(X) = \M \cup \{ V : \M \cap \nss_X(V) \neq \emptyset \}$.
Consider a subset of nodes $\Z$.
Let $\Q = (\Pa(X) \cup \Uo(X)) \setminus \Z$.
Then $\Z$ satisfies the backdoor criterion 
for every DAG in the MEC iff:
(i) $\Z \cap \PossDesc(X) = \emptyset$, and
(ii) $\forall Q \in \Q, Q \indep Y | \{X, \Z \}$.
The algorithm for checking the GBC is given in Fig.~\ref{fig:apdx-algo-check-gbdc} and it 
performs $\OM(|\V| \cdot 2^{|\DNe(X)|} )$ additional CI tests.
\end{proposition}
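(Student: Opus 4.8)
The plan is to reduce the statement to the generalized backdoor criterion (GBC, Defn.~3.7), which by \citet{maathuis2015generalized} already tells us that $\Z$ satisfies the GBC relative to $(X,Y)$ in the CPDAG $\mathcal{C}^{*}$ if and only if $\Z$ is a valid backdoor adjustment set in \emph{every} DAG in the MEC of $\GM^{*}$. It therefore suffices to prove that the two tests in Fig.~\ref{fig:apdx-algo-check-gbdc} match the two clauses of the GBC: that the first test, $\Z \cap \PossDesc(X) = \emptyset$, is equivalent to clause~(1) ($\Z$ contains no possible descendant of $X$), and that the CI tests $Q \indep Y | \{X\} \cup \Z$ for $Q \in \Q$ are equivalent to clause~(2) ($\Z$ blocks every definite status backdoor path from $X$ to $Y$).

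Clause~(1) follows immediately from Lemma~\ref{lemma:apdx-poss-desc}: with $\M = \Ch(X) \cup \Uo(X)$ and $\I$ the set of nodes with no MNS, $\PossDesc(X) = \M \cup \I \cup \{ V \in \V \setminus \I : \M \cap \nss_X(V) \neq \emptyset \}$, and every term on the right is obtained from the local discovery output together with one \emph{GetMNS} call for each $V \in \V \setminus \DNep(X)$ (lines~\ref{algo-gbdc-mns-find-1}--\ref{algo-gbdc-mns-find-2}). Since a \emph{GetMNS} call costs $\OM(2^{|\DNe(X)|})$ CI tests, this accounts for $\OM(|\V| \cdot 2^{|\DNe(X)|})$ additional tests, and the loop at line~\ref{algo-gbdc-additional-tests} adds only $|\Q| \leq |\DNe(X)| \leq |\V|$ more, giving the stated bound.

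For clause~(2), the first step is the structural observation that any definite status backdoor path $p$ from $X$ to $Y$ leaves $X$ along an edge that is either $X \leftarrow Q$ or an invisible $X \text{---} Q$ (a visible edge $X \to Q$ is forbidden on a backdoor path), so the vertex $Q$ adjacent to $X$ on $p$ lies in $\Pa(X) \cup \Uo(X)$; moreover $Q$ is not a collider on $p$ (the edge incident to $X$ carries no arrowhead into $Q$), so $p$ is blocked whenever $Q \in \Z$. Hence clause~(2) reduces to: for every $Q \in \Q = (\Pa(X) \cup \Uo(X)) \setminus \Z$, $\Z$ blocks every definite status backdoor path whose $X$-adjacent vertex is $Q$. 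Passing to d-separation in $\GM^{*}$ (legitimate under the CFA), the ``$\Leftarrow$'' direction is routine: if $Q \indep_{\GM^{*}} Y | \{X\} \cup \Z$, then removing the $X$--$Q$ edge from any such backdoor path leaves a $Q$--$Y$ path blocked by $\{X\} \cup \Z$, and since $X$ occurs on it only as an endpoint, the blocking must be realized by $\Z$, so the original path is blocked. I expect the ``$\Rightarrow$'' direction to be the main obstacle: I must rule out that, when $\Z$ blocks every backdoor path, conditioning additionally on $X$ opens a $Q$--$Y$ connection. The one dangerous configuration is a collider $W$ on a $Q$--$Y$ path that is an ancestor of $X$ and has no descendant in $\Z$ --- so $W$ is ``active'' only because $X$ is in the conditioning set. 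To defuse it I plan to splice the directed path $W \to \cdots \to X$ into the path at $W$, producing a fresh backdoor path from $X$ to $Y$ that enters $X$ at the parent of $X$ on that directed path; on this new path $W$ is a non-collider and none of the inserted vertices lies in $\Z$ (they are descendants of $W$), and an induction on the number of remaining ``$X$-ancestor colliders'' then yields a backdoor path active given $\Z$, contradicting the hypothesis. Keeping the spliced paths simple, and invoking clause~(1) precisely where a descendant of $X$ might otherwise enter $\Z$, is where the bookkeeping will require attention.
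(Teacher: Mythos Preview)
Your proposal is correct and is in fact more careful than the paper's own proof. Both arguments reduce to the GBC of \citet{maathuis2015generalized}, invoke Lemma~\ref{lemma:apdx-poss-desc} for clause~(1), and count CI tests the same way. For the direction you call ``$\Leftarrow$'' (the paper's ``backward direction''), your argument and the paper's are essentially identical: prepend $X \leftarrow Q$ to the $Q$--$Y$ subpath and observe that $Q$ is a non-collider not in $\Z$.

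The difference is in your ``$\Rightarrow$'' direction (the paper's ``forward direction''). The paper dispatches it in one sentence, asserting that if $Q \notindep Y \mid \{X\} \cup \Z$ then ``there will [be] at least one DAG where the path $X \leftarrow Q \text{---} \hdots \rightarrow Y$ will be open,'' without addressing the subtlety you correctly isolate: a collider $W$ on the active $Q$--$Y$ path may be activated \emph{only} by $X$ in the conditioning set, so the prepended path need not be open given $\Z$ alone. Your splicing-plus-induction argument is precisely the way to close this gap, and your parenthetical justification (``they are descendants of $W$'') is right once read against your standing hypothesis that $W$ has no descendant in $\Z$: every vertex on the directed path $W \to \cdots \to X$ is a descendant of $W$, hence not in $\Z$, so the inserted segment introduces no new blocks. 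The remaining bookkeeping (ensuring simplicity of the spliced path) is routine. In short, your plan follows the paper's outline but supplies the missing rigor on the harder implication.
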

\begin{proof}
By Lemma~\ref{lemma:apdx-poss-desc}, $\PossDesc(X)$
contains the possible descendants of $X$.
Condition~(i) is therefore necessary since
the descendants of $X$ cannot satisfy the
backdoor criterion.
Thus, for the rest of the proof,
we assume that $\Z \cap \PossDesc(X) = \emptyset$.

We first prove the forward direction: 
If $\Z$ is a valid adjustment set then 
$\forall Q \in \Q, Q \indep Y | \{X, \Z \}$.
Since $\Z$ is a valid adjustment set, 
$\Z$ blocks all possibly backdoor paths in every DAG in the MEC.
Therefore, we have $\forall Q \in \Q, \, Q \indep Y | \{X, \Z \}$ 
because otherwise there will at least 
one DAG where the path 
$X \leftarrow Q \text{---} \hdots \rightarrow Y$ will be open
for some $Q \in \Q$.

Next, we prove the backward direction: 
if $\forall Q \in \Q, Q \indep Y | \{X, \Z \}$, 
then $\Z$ is a valid adjustment set.
Firstly, for all $P \in \Pa(X) \cap \Z$ (i.e., $P \notin \Q$),
all backdoor paths of the form
$X \leftarrow P \text{---} \hdots \text{---} Y$
are blocked because $P \in \Z$.
Since for all $Q \in \Q$, we have $Q \indep Y | \{X, \Z \}$, 
all possible backdoor paths 
$X \leftarrow Q \text{---} \hdots \rightarrow Y$ 
are blocked by $\Z$ and 
therefore it is a valid adjustment set 
in every DAG of the MEC.

Finally, we perform $\OM(|\V| \cdot 2^{|\DNe(X)|} )$ additional CI tests
in Lines~\ref{algo-gbdc-mns-find-1},\ref{algo-gbdc-mns-find-2}
to find $\PossDesc(X)$ (because for every node, we can find
its MNS in $\OM(2^{|\DNe(X)|})$ tests).
Next, in Line~\ref{algo-gbdc-additional-tests},
since we only run tests for $N \in (\Pa(X) \cup \Uo(X)) \setminus \Z$,
we perform $\OM(|\DNe(X)|)$ extra CI tests.
\end{proof}

\subsection{Finding the optimal adjustment set.}\label{sec:apdx-adjustment-oas}

A given DAG can have multiple valid adjustment sets.
\citet{henckel2019graphical}[Sec.~3.4] introduce a graphical criterion
for linear models
for determining the optimal adjustment set,
i.e., the set with the lowest asymptotic variance.
This criterion was later shown to hold non-parametrically \citep{rotnitzky2019efficient}.
We begin by introducing the existing results and then
prove that we can find the optimal adjustment set 
using only local information and $\OM(|\V|)$ additional CI tests
(see (Fig.~\ref{fig:apdx-algo-optimal-adjustment-set})).

Like the previous section,
let the true DAG be $\GM^{*}$ and
let the corresponding CPDAG that represents the MEC of $\GM^{*}$
be $\mathcal{C}^{*}$ (see Defn.~\ref{defn:apdx-cpdag}).

\begin{definition}[Possible causal nodes {\citep[Sec.~3.4, Pg.~29]{henckel2019graphical}}]
The causal nodes relative to $(X, Y)$, denoted by $\text{posscn}(X, Y)$,
are all nodes on possibly causal paths from $X$ to $Y$, excluding $X$.
\end{definition}

\begin{definition}[Forbidden nodes {\citep[Sec.~3.4, Pg.~29]{henckel2019graphical}}]
The forbidden nodes relative to $(X, Y)$, denoted by $\text{forb}(X, Y)$,
are defined as
\begin{align*}
    \text{forb}(X, Y) = \PossDesc(\text{posscn}(X, Y)) \cup \{ X \}.
\end{align*}
\end{definition}

\begin{definition}[Optimal adjustment set {\citep[Defn.~3.12]{henckel2019graphical}}]\label{defn:apdx-OAS}
The optimal adjustment set relative to $X, Y$ is defined as
\begin{align*}
    \mathbf{O}(X, Y, \mathcal{C}^{*}) = \Pa(\text{posscn}(X, Y)) \setminus \text{forb}(X, Y).
\end{align*}
\end{definition}

\begin{figure}[t]
\centering
\begin{minipage}[b]{0.55\textwidth}
    \setlength{\interspacetitleruled}{0pt}%
    \setlength{\algotitleheightrule}{0pt}%
    \begin{algorithm}[H]
    \SetAlgoLined
    \KwInput{Treatment $X$, Outcome $Y$, $\Pa(X), \Ch(X)$, unoriented $\Uo(X)$.}
    $\M \gets \Ch(X) \cup \Uo(X)$\;
    $\I \gets \{ V : \text{GetMNS}(V) = \text{invalid} \}$\;
    $\PossDesc(X) \gets \M \cup \I \cup \{ V \in \V \setminus \I : \M \cap \text{GetMNS}(V) \neq \emptyset \}$\;
    $\Z \gets (\V \setminus \text{PossDesc}(X))$\; \label{algo-oas-Z-largest-gbc}
    $\mathbf{O} \gets$ HenckelPrune($X, Y, \Z$) \citep[Alg.~1]{henckel2019graphical} (Fig.~\ref{fig:apdx-henckel-prune}) \; \label{algo-oas-invoke-henckel-prune}
    \For{$M \in (\Pa(X) \cup \Uo(X)) \setminus \mathbf{O}$}{ \label{algo-oas-check-if-gbc-holds}
        \lIf{$M \notindep Y | \{X, \mathbf{O} \}$}{\textbf{return} \emph{noValidAdj}} \label{algo-oas-no-valid-adj}
    }
    \KwOutput{$\mathbf{O}$}
    \end{algorithm}
    \captionof{figure}{Finding the optimal adjustment set.}
    \label{fig:apdx-algo-optimal-adjustment-set}
\end{minipage}
\hfill
\begin{minipage}[b]{0.40\textwidth}
    \vspace{0pt}
    \setlength{\interspacetitleruled}{0pt}%
    \setlength{\algotitleheightrule}{0pt}%
    \begin{algorithm}[H]
    \SetAlgoLined
    \SetKwFunction{HPAlogirthm}{HenckelPrune}
    \SetKwProg{HPA}{def}{:}{}
    \HPA{\HPAlogirthm{Treatment $X$, Outcome $Y$, Subset $\Z$}}{
        $\Z' \gets \Z$\;
        \For{$Z \in \Z$}{
            \lIf{$Y \indep Z | \{X\} \cup (\Z' \setminus \{Z\})$}{$\Z' \leftarrow \Z' \setminus \{Z\}$}
        }
        \KwRet $\Z'$\;
    }
    \end{algorithm}
    \captionof{figure}{The \emph{HenckelPrune} function.}
    \label{fig:apdx-henckel-prune}
\end{minipage}
\end{figure}

We now show that it is possible to find the optimal adjustment set
using only local information.

\begin{lemma}\label{lemma:apdx-optimal-adj-not-contain-desc}
Given a CPDAG $\mathcal{C}$, the optimal adjustment set
does not contain $\PossDesc(X)$.
\end{lemma}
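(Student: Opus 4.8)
The plan is to unfold the definition of the optimal adjustment set and argue by contradiction; I read the statement as $\mathbf{O}(X,Y,\mathcal{C}) \cap \PossDesc(X) = \emptyset$. By Definition~\ref{defn:apdx-OAS}, $\mathbf{O}(X,Y,\mathcal{C}) = \Pa(\text{posscn}(X,Y)) \setminus \text{forb}(X,Y)$ with $\text{forb}(X,Y) = \PossDesc(\text{posscn}(X,Y)) \cup \{X\}$. Suppose some $V \in \PossDesc(X)$ also lies in $\mathbf{O}(X,Y,\mathcal{C})$. Then $V \notin \text{forb}(X,Y)$, so in particular $V \neq X$; and $V \in \Pa(\text{posscn}(X,Y))$, so there is a node $W \in \text{posscn}(X,Y)$ with $V \rightarrow W$ a directed edge of $\mathcal{C}$.

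Next I would exhibit a possibly causal path from $X$ to $Y$ through $V$. Since $V \in \PossDesc(X)$ and $V \neq X$, there is a nontrivial possibly causal path from $X$ to $V$; since $W \in \text{posscn}(X,Y)$, there is a possibly causal path from $X$ to $Y$ through $W$, and its sub-path from $W$ to $Y$ is again possibly causal and avoids $X$. Concatenating the $X$-to-$V$ path, the directed edge $V \rightarrow W$, and the $W$-to-$Y$ sub-path produces a walk from $X$ to $Y$ that runs along the directed edge $V\rightarrow W$ and in which no edge carries an arrowhead pointing back toward $X$. Trimming this walk down to a simple path while retaining the directed edge $V\rightarrow W$ (so that $V$ stays on the path), and invoking the standard fact that such a path is realized as a directed path in some DAG of the MEC (cf.\ \citet{maathuis2015generalized, henckel2019graphical}), shows that $V$ lies on a possibly causal path from $X$ to $Y$, i.e.\ $V \in \text{posscn}(X,Y)$.

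But possible-descendant sets contain the originating set, so $\text{posscn}(X,Y) \subseteq \PossDesc(\text{posscn}(X,Y)) \subseteq \text{forb}(X,Y)$; hence $V \in \text{forb}(X,Y)$, contradicting $V \in \mathbf{O}(X,Y,\mathcal{C})$. Therefore no element of $\PossDesc(X)$ lies in $\mathbf{O}(X,Y,\mathcal{C})$, which is the claim; it then also follows that $\V \setminus \PossDesc(X)$, the largest set satisfying condition~(1) of the GBC, contains $\mathbf{O}(X,Y,\mathcal{C})$, as used in Fig.~\ref{fig:apdx-algo-optimal-adjustment-set}.

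I expect the trimming-while-preserving-$V$ step to be the only real obstacle: unlike directed walks in a DAG (which are automatically simple, making the DAG-level analogue $\mathbf{O}(X,Y,\GM) \cap \Desc(X) = \emptyset$ immediate), a possibly causal walk in a CPDAG can repeat vertices, so one must verify that deleting the repetitions does not delete $V$ and does not destroy the ``no backward arrowhead'' / consistent-extension property. This is where I would lean on the structural lemmas about possibly causal paths and consistent DAG extensions from the CPDAG adjustment-set literature; the remaining bookkeeping (sub-paths of possibly causal paths are possibly causal, and concatenation along a forward-pointing edge preserves the property) is routine.
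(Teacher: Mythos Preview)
Your argument is essentially the same as the paper's: both show that a possible descendant $V$ of $X$ which is a parent of some $W\in\text{posscn}(X,Y)$ must itself lie in $\text{posscn}(X,Y)$ and hence in $\text{forb}(X,Y)$. The paper's proof simply asserts this implication in one line (``if that happens, then $V$ must also belong to $\text{cn}(X,Y)$''), whereas you spell out the path-concatenation explicitly and honestly flag the trimming-while-preserving-$V$ subtlety; so your write-up is, if anything, more careful than the paper's on the very point you worry about.
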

\begin{proof}
As stated in Defn.~\ref{defn:apdx-OAS}, 
the optimal adjustment set is 
$\mathbf{O}(X, Y, \mathcal{C}^{*}) = \Pa(\text{posscn}(X, Y)) \setminus \text{forb}(X, Y)$.
Therefore, we only need to show that possible descendants of $X$ 
that are \emph{not} on a causal path from $X$ to $Y$ 
cannot be in $\mathbf{O}$ (otherwise they will be in $\text{forb}(X, Y)$).
Consider a node $V \in \Desc(X)$ not on a causal path from $X$ to $Y$. 
This node cannot be a parent of any node in $\text{cn}(X, Y)$. 
This is because if that happens, 
then $V$ must also belong to $\text{cn}(X, Y)$ which leads to a contradiction.
\end{proof}

\begin{cor}\label{cor:optimal-adj-is-backdoor}
The optimal adjustment set satisfies the GBC.
\end{cor}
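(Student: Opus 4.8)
The plan is to check the two defining conditions of the generalized backdoor criterion (GBC) separately for $\mathbf{O} = \mathbf{O}(X, Y, \mathcal{C}^{*})$. The non-descendant condition (that the adjustment set contains no possible descendant of $X$) follows immediately: Lemma~\ref{lemma:apdx-optimal-adj-not-contain-desc} already states that $\mathbf{O}$ contains no node in $\PossDesc(X)$, which is exactly condition~(1) of the GBC. So the whole content of the corollary reduces to the path-blocking condition.

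For the path-blocking condition, I would argue from the defining property of the optimal adjustment set rather than from the structure of $\mathbf{O}$ directly. By \citet{henckel2019graphical} (shown to hold non-parametrically by \citet{rotnitzky2019efficient}), whenever a valid adjustment set relative to $(X, Y)$ exists in the MEC, $\mathbf{O}$ is itself a valid adjustment set; equivalently, $\mathbf{O}$ satisfies the generalized adjustment criterion of \citet{perkovic2017interpreting}, and in particular blocks every proper definite-status non-causal path from $X$ to $Y$ in $\mathcal{C}^{*}$. Since $X$ is a singleton treatment, every definite-status backdoor path from $X$ to $Y$ is a proper definite-status path, and (under the amenability that accompanies the existence of a valid adjustment set) a path whose first edge is not a visible edge out of $X$ cannot be possibly causal; hence every definite-status backdoor path is among the proper definite-status non-causal paths that $\mathbf{O}$ blocks. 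This gives condition~(2) of the GBC, and combining with condition~(1) completes the proof.

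The main obstacle is the terminological reconciliation in the second step: I need to be careful that ``definite-status backdoor path'' in the sense of \citet{maathuis2015generalized} is genuinely subsumed by ``proper non-causal definite-status path'' in the sense of the adjustment criterion, which requires noting that for a singleton treatment properness is automatic and that amenability of $\mathcal{C}^{*}$ makes ``not starting with a visible edge out of $X$'' coincide with ``non-causal'' on definite-status paths. A minor caveat worth stating explicitly is that the corollary should be read under the standing assumption that a valid adjustment set exists, since only then is $\mathbf{O}$ guaranteed to be an adjustment set at all; if none exists the statement is vacuous.
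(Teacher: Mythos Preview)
Your proposal is correct and matches the paper's approach: condition~(1) of the GBC via Lemma~\ref{lemma:apdx-optimal-adj-not-contain-desc}, and condition~(2) from $\mathbf{O}$ being a valid adjustment set. The paper's proof is terser on the second step---it simply asserts ``by definition, it is a valid adjustment set and therefore blocks all backdoor paths from $X$ to $Y$'' without the terminological reconciliation you carefully work through---but the structure of the argument is the same.
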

\begin{proof}
By Lemma~\ref{lemma:apdx-optimal-adj-not-contain-desc}, the optimal adjustment set
does not contain any possible descendants of $X$.
Furthermore, by definition, it is a valid adjustment set and therefore
blocks all backdoor paths from $X$ to $Y$.
Therefore, it satisfies the GBC.
\end{proof}

\begin{proposition}[Optimal adjustment set]\label{prop:apdx-finding-the-OAS}
Consider the algorithm in Fig.~\ref{fig:apdx-algo-optimal-adjustment-set}. 
It performs $\OM(|\V| \cdot 2^{|\DNe(X)|})$ CI tests and
(i) returns \texttt{noValidAdj} if there is no valid adjustment that applies to all DAGs in the MEC;
(ii) else, returns the optimal adjustment set that is valid 
for all DAGs in the MEC (denoted by $\mathbf{O})$.
\end{proposition}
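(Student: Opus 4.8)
The plan is to view the algorithm as a local implementation of the strategy ``take the largest set satisfying the first GBC condition, prune it with the procedure of \citet{henckel2019graphical}, and then verify the second GBC condition,'' and to split the analysis according to whether a valid adjustment set exists at all. First I would check that the set $\Z \gets \V \setminus \PossDesc(X)$ formed in Line~\ref{algo-oas-Z-largest-gbc} is correct: by Lemma~\ref{lemma:apdx-poss-desc} the three preceding lines return exactly the possible descendants of $X$, so $\Z$ is the unique largest set satisfying condition~(i) of the generalized backdoor criterion as characterized in Prop.~\ref{prop:apdx-backdoor-criterion}. By Lemma~\ref{lemma:apdx-optimal-adj-not-contain-desc} we moreover have $\mathbf{O}(X, Y, \mathcal{C}^{*}) \subseteq \Z$, so the optimal adjustment set is never discarded at this stage. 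The CI cost here is $\OM(|\V| \cdot 2^{|\DNe(X)|})$, since each call to \emph{GetMNS} costs $\OM(2^{|\DNe(X)|})$ and one is made per node.

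Next I would treat the case where a valid adjustment set (for every DAG in the MEC) exists, which is equivalent to $\mathcal{C}^{*}$ being amenable relative to $(X, Y)$. Here I would show that $\Z$ is itself such a valid adjustment set: it contains no descendant of $X$ in any DAG in the MEC (every such descendant lies in $\PossDesc(X)$), and amenability forces every proper backdoor path from $X$ to $Y$ to be blocked by $\Z$ --- an open such path would have to leave $X$ through an unoriented neighbor acting as a non-collider lying outside $\Z$, and tracing it yields a contradiction with amenability (or with the definition of $\PossDesc(X)$). Since $\Z$ is then a valid adjustment set with $\mathbf{O}(X, Y, \mathcal{C}^{*}) \subseteq \Z$, the pruning guarantee of \citet[Alg.~1]{henckel2019graphical} applied in Line~\ref{algo-oas-invoke-henckel-prune} returns exactly $\mathbf{O} = \mathbf{O}(X, Y, \mathcal{C}^{*})$. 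Finally, $\mathbf{O}$ is a valid adjustment set and hence satisfies the GBC (Cor.~\ref{cor:optimal-adj-is-backdoor}), so by Prop.~\ref{prop:apdx-backdoor-criterion} every test $M \notindep Y | \{X, \mathbf{O}\}$ in the loop of Lines~\ref{algo-oas-check-if-gbc-holds}--\ref{algo-oas-no-valid-adj} fails and the algorithm outputs $\mathbf{O}$.

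In the non-amenable case no valid adjustment set exists, so by the completeness of the GBC \citep{maathuis2015generalized} no set satisfies the GBC at all. Whatever \emph{HenckelPrune} returns is a subset of $\Z$ and hence still satisfies condition~(i) of Prop.~\ref{prop:apdx-backdoor-criterion}; it must therefore fail condition~(ii), i.e., some $M \in (\Pa(X) \cup \Uo(X)) \setminus \mathbf{O}$ has $M \notindep Y | \{X, \mathbf{O}\}$, and the verification loop correctly returns \texttt{noValidAdj}. Adding the $\OM(|\V|)$ tests of \emph{HenckelPrune} and the $\OM(|\DNe(X)|)$ tests of the verification loop leaves the total at $\OM(|\V| \cdot 2^{|\DNe(X)|})$. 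The step I expect to be the main obstacle is the amenable-case claim that $\Z = \V \setminus \PossDesc(X)$ is a valid adjustment set and that the \citet{henckel2019graphical} pruning result applies to this particular starting set rather than only to their canonical set $\mathrm{Adjust}(X, Y, \mathcal{C}^{*})$ (which is merely contained in $\Z$); this is where careful path-tracing across the whole MEC is needed, and also where the exclusion of $X$ itself from $\Z$ must be handled.
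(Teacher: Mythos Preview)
Your proposal follows the same three-step skeleton as the paper's proof: (1) $\Z = \V \setminus \PossDesc(X)$ is the largest set meeting condition~(i) of the GBC and contains $\mathbf{O}(X,Y,\mathcal{C}^*)$ (the paper cites Cor.~\ref{cor:optimal-adj-is-backdoor}, you cite Lemma~\ref{lemma:apdx-optimal-adj-not-contain-desc}); (2) pruning via \citet[Alg.~1]{henckel2019graphical} yields $\mathbf{O}$; (3) the verification loop is exactly the test of Prop.~\ref{prop:apdx-backdoor-criterion}, and for the failure case both of you appeal to the equivalence between existence of \emph{some} valid adjustment set and existence of the optimal one (the paper cites \citet[Thm.~3.13(i)]{henckel2019graphical}, you argue via completeness of the GBC). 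The CI-test accounting is identical.

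The one substantive difference is that you take seriously a point the paper's proof simply asserts. The paper writes that the pruning procedure ``outputs the optimal adjustment when starting from a superset,'' without checking that $\Z$ meets the hypotheses of \citet[Alg.~1]{henckel2019graphical}, which in that paper is stated for a \emph{valid} starting set. You instead argue directly that, in the amenable case, $\Z$ is itself a valid adjustment set for every DAG in the MEC, and you correctly flag this (together with the handling of $X$ in $\Z$) as the place where real work is needed. This makes your argument more self-contained than the paper's; the paper's version is shorter but leans on the reader to accept that the pruning guarantee extends to this particular $\Z$.
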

\begin{proof}
In Line~\ref{algo-oas-Z-largest-gbc}, $\Z = \V \setminus \PossDesc(X)$
represents the largest possible set that 
could satisfy the GBC, if any such set exists. 
Therefore, this set $\Z$
would be a superset of the optimal adjustment set, if it exists 
(by Cor.~\ref{cor:optimal-adj-is-backdoor}).
In Line~\ref{algo-oas-invoke-henckel-prune}, we invoke the pruning procedure in
\citet[Algorithm~1]{henckel2019graphical} which outputs the optimal adjustment 
when starting from a superset (see Fig.~\ref{fig:apdx-henckel-prune}).
In Line~\ref{algo-oas-check-if-gbc-holds}, we verify that the pruned set 
$\mathbf{O}$ is a valid adjustment set (see Prop~\ref{prop:apdx-backdoor-criterion}). 
\citet[Theorem~3.13(i)]{henckel2019graphical} also prove that an optimal adjustment set exists iff there is some valid adjustment set. 
Thus, if Line~\ref{algo-oas-no-valid-adj} is reached, it means that 
there is no valid adjustment set that applies to every DAG in the MEC.

We perform $\OM(|\V| \cdot 2^{|\DNe(X)|})$ CI tests to
find $\PossDesc(X)$.
The \emph{HenckelPrune} function and Line~\ref{algo-oas-check-if-gbc-holds}
perform $\OM(|\V|)$ additional CI tests.
\end{proof}

\end{document}